\documentclass{article} 
\usepackage{iclr2026_conference,times}

\usepackage{amsmath,amsfonts,bm}

\def\eqref#1{equation~\ref{#1}}

\def\1{\bm{1}}

\DeclareMathAlphabet{\mathsfit}{\encodingdefault}{\sfdefault}{m}{sl}
\SetMathAlphabet{\mathsfit}{bold}{\encodingdefault}{\sfdefault}{bx}{n}

\usepackage{hyperref}
\usepackage{url}

\usepackage{booktabs}
\usepackage{amsfonts}       
\usepackage{nicefrac}       
\usepackage{microtype}      
\usepackage{xcolor}         
\usepackage{adjustbox}

\usepackage{enumitem}
\usepackage{amsmath}
\usepackage{amssymb}
\usepackage{algorithm}
\usepackage{algpseudocode}

\usepackage{graphicx}
\usepackage{wrapfig}
\usepackage{subcaption}
\usepackage[normalem]{ulem}
\usepackage{tabularx} 
\usepackage{placeins}
\usepackage{tcolorbox}
\usepackage{mdframed}

\usepackage{amsthm}
\usepackage{multirow}
\usepackage{tcolorbox}
\usepackage{adjustbox}
\usepackage{makecell}
\usepackage[dvipsnames]{xcolor}
\definecolor{myOrange}{RGB}{255,140,0}
\definecolor{MyDeepGreen}{RGB}{0,100,0}  

\newtheorem{proposition}{Proposition}

\newtheorem*{remark}{Remark}

\newtcolorbox{graybox}{
    colback=gray!5!white,     
    colframe=black,           
    boxrule=1pt,              
    rounded corners,          
    arc=4pt,                  
    fontupper=\normalsize,    
    top=4pt,                  
    bottom=4pt,               
    left=4pt,                 
    right=4pt                 
}

\title{Revisiting Sharpness-Aware Minimization:\\A More Faithful and Effective Implementation}

\author{Jianlong Chen, $\ $ Zhiming Zhou\thanks{Corresponding author.}\\
Key Laboratory of Interdisciplinary Research of Computation and Economics\\
Shanghai University of Finance and Economics\\
\texttt{chen.jianlong@stu.sufe.edu.cn, zhouzhiming@mail.sufe.edu.cn}
}

\iclrfinalcopy 

\begin{document}

\maketitle

\begin{abstract}
    Sharpness-Aware Minimization (SAM) enhances generalization by minimizing the maximum training loss within a predefined neighborhood around the parameters. However, its practical implementation approximates this as gradient ascent(s) followed by applying the gradient at the ascent point to update the current parameters. This practice can be justified as approximately optimizing the objective by neglecting the (full) derivative of the ascent point with respect to the current parameters. Nevertheless, a direct and intuitive understanding of why using the gradient at the ascent point to update the current parameters works superiorly, despite being computed at a shifted location, is still lacking. Our work bridges this gap by proposing a novel and intuitive interpretation. We show that the gradient at the single-step ascent point, \uline{when applied to the current parameters}, provides a better approximation of the direction from the current parameters toward the maximum within the local neighborhood than the local gradient. This improved approximation thereby enables a more direct escape from the maximum within the local neighborhood. Nevertheless, our analysis further reveals two issues. First, the approximation by the gradient at the single-step ascent point is often inaccurate. Second, the approximation quality may degrade as the number of ascent steps increases. To address these limitations, we propose in this paper eXplicit Sharpness-Aware Minimization (XSAM). It tackles the first by explicitly estimating the direction of the maximum during training, while addressing the second by crafting a search space that effectively leverages the gradient information at the multi-step ascent point. XSAM features a unified formulation that applies to both single-step and multi-step settings and only incurs negligible computational overhead. Extensive experiments demonstrate the consistent superiority of XSAM against existing counterparts across various models, datasets, and settings. Code is available at \url{https://github.com/Cccjl219/XSAM}.
\end{abstract}

\section{Introduction}

The success of modern machine learning relies heavily on overparameterization. This necessitates strong regularization, either implicit or explicit, from the training procedures \citep{srivastava2014dropout,gidel2019implicit,karakida2023understanding} to ensure generalization beyond the training set \citep{zhang2021understanding}. In recent years, Sharpness-Aware Minimization (SAM) \citep{foret2020sharpness,kwon2021asam,liu2022random,kim2023exploring,mordido2024lookbehindsamkstepsback} has attained significant attention for its potential to enhance the generalization of machine learning models, \textit{in a direct optimization manner}.

SAM seeks to minimize the maximum training loss within a predefined neighborhood around the parameters, thereby promoting flatter minima and better generalization. Its effectiveness is evidenced by empirical successes across various domains \citep{bahri2021sharpness,rangwani2022escaping,rangwani2022closer,fan2025llmunlearningresilientrelearning}. However, its practical implementation approximates this as: carry out one or a few steps of gradient ascent, and then apply the gradient from the ascent point to update the current parameters.

Though being justified as approximately optimizing the objective by neglecting the Jacobian matrix of the ascent point with respect to the current parameters \citep{foret2020sharpness}, the underlying mechanism remains poorly understood. A body of research \citep{wen2023doessharpnessawareminimizationminimize,bartlett2023dynamics,andriushchenko2023sharpness,andriushchenko2022towards,andriushchenko2023modernlookrelationshipsharpness} has sought to demystify SAM after such approximations. However, a direct and intuitive understanding of why applying the \textit{nonlocal} gradient at the ascent point to update the current parameter works superiorly is still lacking. This gap necessitates a deeper investigation into SAM's fundamental mechanisms, which motivates our work.




\textbf{Common misinterpretation}. A prevalent misunderstanding must be clarified before we proceed: applying the gradient at the estimated maximum point DOES NOT necessarily lead to the minimization of the maximum loss within the local neighborhood. \textit{The key here is that there is a shift in location: the gradient is computed at the estimated maximum point, but applied to the current parameters.} The nuisance can be clear on considering the extreme case: \uline{the gradient at a point arbitrarily distant from the current parameters provides vanishingly little information about the local loss geometry.}

To unravel the mystery of the SAM update, we commence by visualizing the local loss surface during SAM training. As shown in Figure~\ref{fig: single-step sam} and further illustrated in Appendix~\ref{sec: surface_whole_process}, our visualization analysis reveals an important underlying mechanism. Specifically, \uline{the gradient at the single-step ascent point, when applied to the current parameters, generally provides a better approximation of the direction from the current parameters toward the maximum within the local neighborhood than the gradient at the current parameters.} Therefore, updating the current parameters along the direction opposite to the gradient at the single-step ascent point enables a more direct escape from the maximum. It thereby more effectively reduces the worst-case loss in the neighborhood, leading to improved generalization.

The above interpretation rationalizes the application of the gradient at the single-step ascent point to the current parameters. Nevertheless, our visualizations simultaneously reveal two limitations. First, the approximation by the gradient at the single-step ascent point is often inaccurate (as exemplified in Figure~\ref{fig: single-step sam}). The approximation quality is also unstable, exhibiting large variations as the local loss landscape evolves (evidenced by further visualizations in Appendix~\ref{sec: surface_whole_process}). Second, as illustrated by Figure~\ref{fig: multi-step sam} (and Figure~\ref{fig:loss_surface_200} in Appendix~\ref{sec: surface_whole_process}), \uline{the approximation quality may get worse as the number of ascent steps increases, explaining the unexpectedly inferior performance of multi-step SAM}.


Motivated by these observations, we propose in this paper eXplicit Sharpness-Aware Minimization (XSAM), which fundamentally tackles the approximation inaccuracy issue of the SAM gradient by explicitly estimating the direction from the current parameters toward the maximum. This is achieved by probing the loss values in different directions at the neighborhood boundary. To ensure its high quality throughout training, XSAM dynamically updates this estimation.

Probing the entire high-dimensional neighborhood for estimating the direction can be computationally intractable. We therefore constrain the probe to a two-dimensional hyperplane spanned by the gradient at the final ascent point (i.e., the point reached after $k\geq1$ ascent steps) and the vector from the current parameters to that point. This definition is crucial. It ensures that the point with the highest known loss, i.e., the one pointed to by the gradient at the final ascent point, lies within the hyperplane. Such a definition also simultaneously addresses the inaccuracy issue of directly applying the gradient at the multi-step ascent point to the current parameters, while fully leveraging its informational value.

We express the estimated direction in terms of the spherical interpolation factor of the two spanning vectors, which, according to our experiments, changes slowly during training. Therefore, it requires only infrequent updates and incurs negligible computational overhead. With this improved estimate of the direction toward the maximum, XSAM escapes the nearby high-loss regions more effectively, thereby achieving better generalization. Extensive experiments demonstrate that XSAM consistently outperforms existing counterparts across various models, datasets, and settings.

The primary contributions of this work are threefold:
\begin{itemize}[leftmargin=10pt, itemsep=0pt, topsep=0pt]
    \item We provide a novel, intuitive interpretation of the fundamental mechanism of SAM: the gradient at the (single-step) ascent point offers a superior approximation of the direction from the current parameter toward the maximum within the local neighborhood than the local gradient; thereby, it enables a more direct escape from the maximum within the local neighborhood.
    \item Our analysis further reveals that the approximation by the gradient at the single-step ascent point is often inaccurate, and its quality varies largely during training. Moreover, the approximation quality may degrade as the number of ascent steps increases, explaining the inferior performance of multi-step SAM. These collectively demonstrate the sub-optimality of the SAM gradient.
    \item We propose XSAM, which addresses these limitations of SAM by explicitly estimating the direction from the current parameter toward the maximum, within a novel, principled search space during training. This leads to a more faithful and effective implementation of sharpness-aware minimization. Extensive experiments demonstrate the consistent superiority of XSAM.
\end{itemize}

\begin{figure}[t]
    \vspace{-5pt}
    \centering
    \begin{subfigure}{0.35\linewidth}
        \centering
        \includegraphics[width=\linewidth]{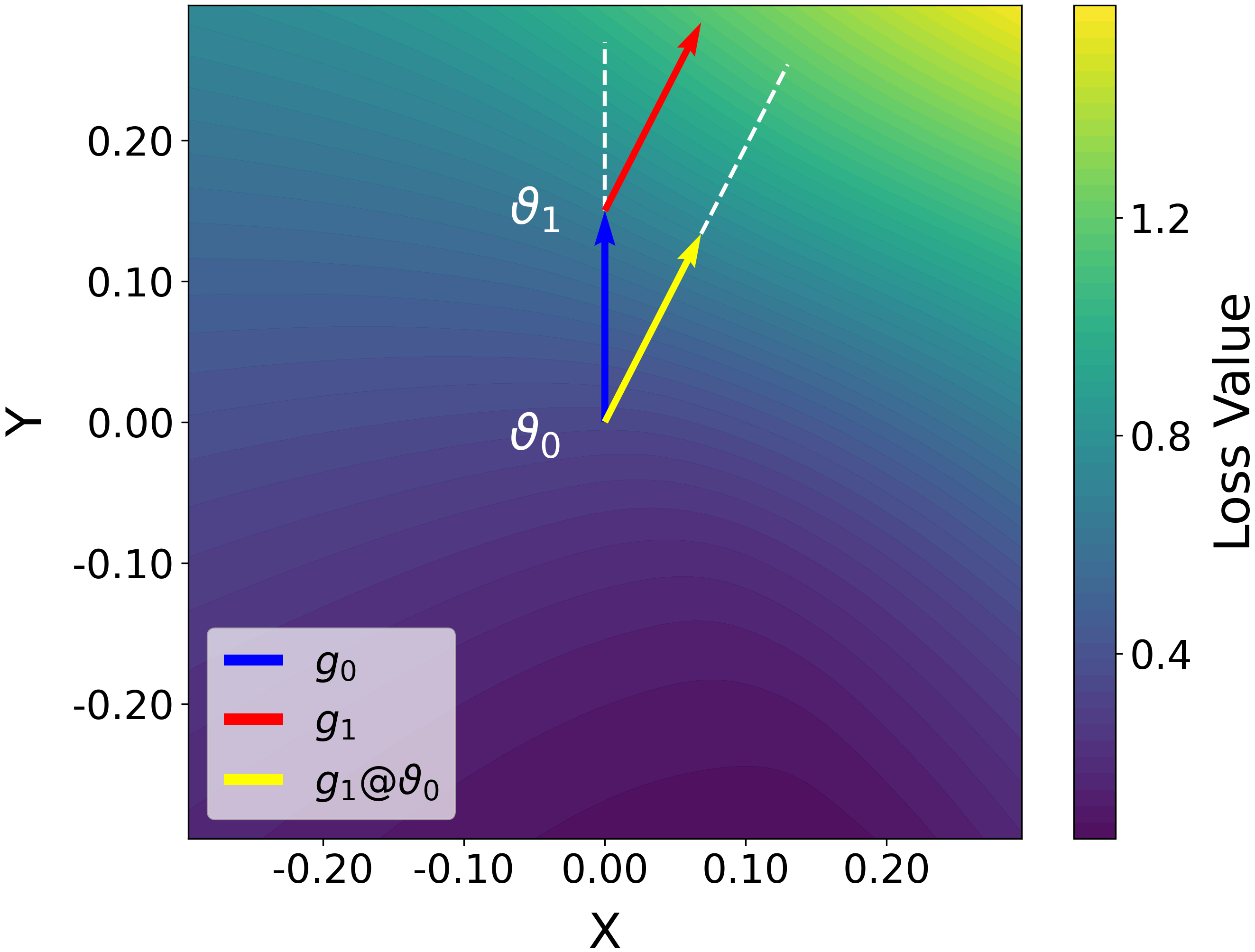}
        \caption{Visualization of single-step SAM}
        \label{fig: single-step sam}
    \end{subfigure}
    \hspace{40pt}
    \begin{subfigure}{0.35\textwidth}
        \includegraphics[width=\textwidth]{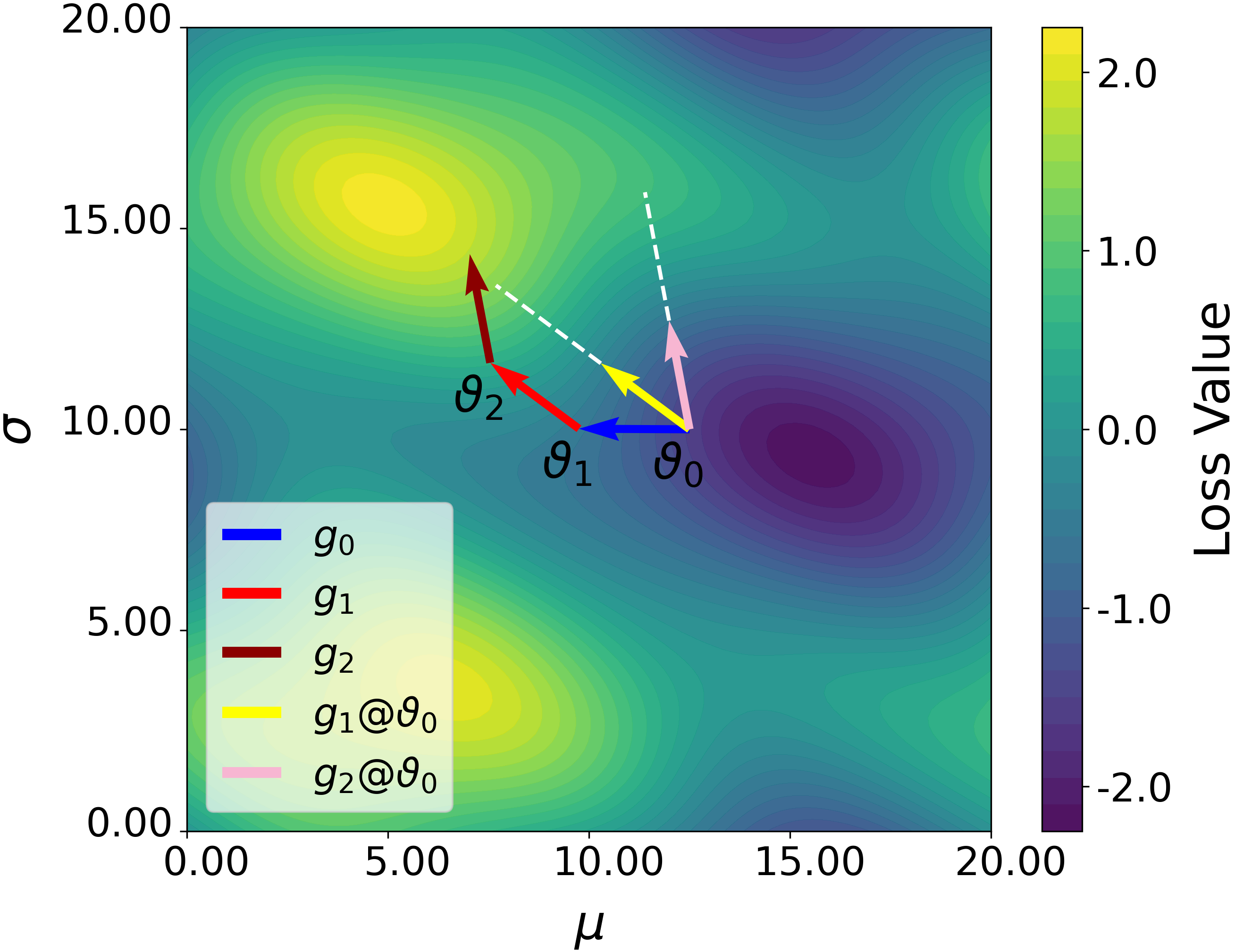}
        \caption{Simulation of multi-step SAM}
        \label{fig: multi-step sam}
    \end{subfigure}
    \caption{(a) Visualization of the local loss surface of single-step SAM\protect\footnotemark on the hyperplane spanned by the gradient $g_0$ at the current parameter $\vartheta_0$ and the gradient $g_1$ at the single-step ascent point $\vartheta_1$. $\vartheta_0$ is set as the origin, the $Y$-axis is defined along the direction of $g_0$, and the $X$-axis is aligned with the component of $g_1$ perpendicular to $g_0$. The visualized arrows of gradients are set to have length $\rho$. \textbf{We see that $\boldsymbol{g_1\!@\vartheta_0}$ (i.e., $\boldsymbol{g_1}$ applied to $\boldsymbol{\vartheta_0}$) points clearly closer to the direction from $\boldsymbol{\vartheta_0}$ toward the maximum within the local neighborhood than $\boldsymbol{g_0}$}. The targeted direction is roughly from the origin to the upper-right corner in the figure. The loss along $g_1\!@\vartheta_0$ (i.e., $L(\vartheta_0 + \rho_m \cdot {g_1}/{\|g_1\|})$) is higher than that along $g_0$ (i.e., $L(\vartheta_0 + \rho_m \cdot {g_0}/{\|g_0\|})$), for sufficiently large $\rho_m$. (b) A simulation of multi-step SAM on a 2D test function. \uline{The approximation quality by the SAM gradient may get worse as the number of ascent steps increases}. $g_2\!@\vartheta_0$ inferiorly identifies the direction from $\vartheta_0$ toward the maximum within the neighborhood (the upper-left high-loss region in yellow) than $g_1\!@\vartheta_0$.}
    \label{fig:loss_surface}
\end{figure}

\footnotetext{Data is collected at the first iteration of the 150th epoch in training ResNet-18 on CIFAR-100.}

\section{Revisiting Sharpness-Aware Minimization}

This section reviews the objective of Sharpness-Aware Minimization (SAM) and its classical approximate optimization method, followed by our novel interpretation of its underlying mechanism.
 
\subsection{The Objective and Classical Approximation of SAM}

SAM \citep{foret2020sharpness} aims to find parameters that minimize the maximum training loss (i.e., worst-case loss) over a predefined $\rho$-neighborhood around the parameters. The formal objective is:
\begin{equation} \label{eqn: sam}
    \min_\theta \max_{\| \delta \| \leq \rho} L(\theta + \delta),
\end{equation}
where $L$ is the training loss, $\theta \in \mathbb{R}^n$ is the model parameters, and $\delta \in \mathbb{R}^n$ is the perturbation vector.\footnote{For simplicity, we default all norms to $\ell_2$.}

Since exactly solving the inner maximization in Equation~(\ref{eqn: sam}) is computationally expensive, SAM approximates it by performing one or a few steps of gradient ascent from the current parameters.

Assuming the procedure involves $k \geq 1$ successive gradient ascent steps, it proceeds as follows: initialize $\vartheta_0 = \theta$, and then for each step $i = 0, 1, \dots, k-1$:
\begin{itemize}[leftmargin=30pt, itemsep=0pt, topsep=0pt, parsep=0pt]
    \item[1)] Compute the gradient at the current point $\vartheta_i$: $\ g_i = \nabla_{\vartheta_i} L(\vartheta_i)$;
    \item[2)] Ascend along the direction of $g_i$ by a distance of $\rho_i$: $\ \vartheta_{i+1} = \vartheta_i + \rho_i \frac{g_i}{\|g_i\|}$.
\end{itemize}
This formulation unifies the single-step ($k=1$) and multi-step ($k>1$) settings, with the constraint $\sum_{i=0}^{k-1} \rho_i \leq \rho$ ensuring the total perturbation remains within the $\rho$-ball. The procedure yields the final perturbed parameters directly as $\vartheta_k$, while approximating the best perturbation $\delta^*$ as $\vartheta_k-\vartheta_0$.

After such approximation of the best perturbation, the SAM objective in Equation~(\ref{eqn: sam}) reduces to:
\begin{equation}
    \min_\theta L(\theta + \delta^*), \qquad \text{or equivalently,} \qquad \min_\theta L(\vartheta_k).
\end{equation}
To optimize this objective efficiently, SAM employs a key approximation. It assumes $\nabla_{\theta} \, \delta^* = \mathbf{0}$, or equivalently, $\nabla_{\theta} \, \vartheta_k = I$, thereby avoiding involving expensive higher-order derivatives. Formally, 
\begin{equation}
    \nabla_{\theta} L(\theta + \delta^*) = \nabla_{\theta} L(\vartheta_k) = \nabla_{\vartheta_k} L(\vartheta_k) \cdot \!\!\!\!\!\!\!\!\!\!\!\!\!\!\!\!\!\!\! \underbrace{\nabla_{\theta}(\vartheta_k)}_{\text{Approximated as identity matrix $I$}} \!\!\!\!\!\!\!\!\!\!\!\!\!\!\!\!\!\!\! \approx \nabla_{\vartheta_k} L(\vartheta_k).
\end{equation}
The resulting algorithm essentially applies the gradient at the final ascent point $\vartheta_k$ to $\theta$:
\begin{equation}
    \theta_{t+1} = \theta_{t} - \eta_t \cdot \nabla_{\vartheta_k} L(\vartheta_k).
\end{equation}

\subsection{A Novel Interpretation of SAM's Underlying Mechanism} \label{sec:exploration_of_sam}

Despite the key approximation in the classical SAM algorithm being justified as assuming $\nabla_{\theta} \, \vartheta_k = I$, it leads to an unusual gradient operation, applying the gradient at another point ($\vartheta_k$) to the current parameters ($\theta$). It is apparent that applying the gradient at an arbitrarily distant point to the current parameters makes no sense, since it brings vanishingly little information about the local loss geometry around the current parameters. This contradiction raises a fundamental question: How is $\vartheta_k$ special? Why does applying this nonlocal gradient tend to outperform the local gradient in practice?

While a body of literature has sought to explain how SAM works after such approximation~\citep{wen2023doessharpnessawareminimizationminimize,bartlett2023dynamics,andriushchenko2023sharpness,andriushchenko2023modernlookrelationshipsharpness}, they often attribute it to implicit bias or regularization. None of them directly addresses our core inquiry: the underlying mechanism that enables this specific nonlocal gradient operation to be effective, which is the focus of this work.

\subsubsection{Empirical Analysis through Visualizations}

To unravel the underlying mechanism, we start by visualizing the gradients at the ascent point on the local loss surface during SAM training. For a tractable analysis and a clear comparison between the
gradient at the ascent point and the gradient at the current parameters, we focus on the loss surface
over the hyperplane spanned by these two gradient vectors. We begin with the single-step setting.

\textbf{Better Approximation}. As depicted in Figure~\ref{fig: single-step sam}, the gradient at the single-step ascent point, when applied to the current parameters, can better approximate the direction toward the maximum within the local neighborhood than the gradient at the current parameters (i.e., the local gradient). More specifically,  $g_1\!@\vartheta_0$ points clearly closer to the high-loss region around the upper-right corner than $g_0$, and the loss value along $g_1\!@\vartheta_0$ is also literally higher. This phenomenon is consistently observed in practice, as shown by additional visualizations in Appendix~\ref{sec: surface_whole_process}.

\textbf{Inaccuracy and Instability}. Although $g_1\!@\vartheta_0$ provides a better approximation than $g_0$, we can clearly see in Figure~\ref{fig: single-step sam} that the approximation by $g_1\!@\vartheta_0$ can still be rough and inaccurate. In fact, according to the additional visualizations in Appendix~\ref{sec: surface_whole_process}, the approximation quality by $g_1\!@\vartheta_0$ is also unstable, exhibiting large variations during training. This suggests that such an approximation by $g_1\!@\vartheta_0$ can not well adapt to the evolving local loss landscape.

\textbf{Multi-Step Degradation}. We further extend the visualization analysis to multi-step settings. To approximate the complexity of high-dimensional landscapes, where multi-step ascent gradients deviate from a 2D plane, we simulate the process on a suitably complex 2D test function. As shown in Figure~\ref{fig: multi-step sam}, the gradient at the multi-step ascent point, when applied to the current parameters, may act as an unexpectedly poorer approximation compared to the gradient at the single-step ascent point. Specifically, $g_2@\vartheta_0$ inferiorly indicates the nearby high-loss region for $\vartheta_0$ than $g_1@\vartheta_0$. Notably, $g_2$ at its original position $\vartheta_2$ indeed points toward the nearby high-loss region; however, when applied to $\vartheta_0$, the resulting vector $g_2@\vartheta_0$ points toward a relatively flat region. This offers a visual explanation for why multi-step SAM does not work as well as expected~\citep{foret2020sharpness,andriushchenko2022towards}. Additional simulation results supporting this finding are included in Appendix~\ref{sec: surface_whole_process}.

\subsubsection{Theoretical Confirmation under Second-Order Approximation}

In this section, we substantiate our core empirical observations with the following results:
\begin{proposition}\label{theorem:1}
Let $L: \mathbb{R}^n \to \mathbb{R}$ be a twice continuously differentiable function that admits a second-order approximation at $\vartheta_0$ with:
\begin{itemize}[nosep, leftmargin=30pt]
    \item $\nabla L(\vartheta_0) = g_0$, which does not equal to 0;
    \item $\nabla L\left(\vartheta_0 + \rho \frac{g_0}{\|g_0\|}\right) = g_1$, which is not parallel to $g_0$;
    \item Hessian $H = \nabla^2 L(\vartheta_0)$ positive definite.
\end{itemize}
Then there exists $\rho_0 > 0$ such that for all $\rho_m > \rho_0$:
\begin{itemize}[leftmargin=30pt, itemsep=0pt, topsep=0pt]
    \item[1)] \fbox{SAM better approximates the direction toward the maximum in the vicinity than SGD}
    \[
    L\left(\vartheta_0 + \rho_m \frac{g_1}{\|g_1\|}\right) > L\left(\vartheta_0 + \rho_m \frac{g_0}{\|g_0\|}\right);
    \]
    \item[2)] \fbox{There exist better approximations than SAM} there exists $\alpha \in \mathbb{R}$ such that
    \[
    L\left(\vartheta_0 + \rho_m \frac{g_\alpha}{\|g_\alpha\|}\right) > L\left(\vartheta_0 + \rho_m \frac{g_1}{\|g_1\|}\right),
    \quad g_\alpha = \alpha g_1 + (1-\alpha) g_0.
    \]
\end{itemize}
\end{proposition}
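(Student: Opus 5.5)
Throughout I work with the exact second-order model the hypotheses provide,
\[
L(\vartheta_0+d)=L(\vartheta_0)+g_0^\top d+\tfrac12 d^\top H d ,
\]
so that $g_1=g_0+\rho H u_0$ with $u_0=g_0/\|g_0\|$. For any unit vector $u$ this gives $L(\vartheta_0+\rho_m u)=L(\vartheta_0)+\rho_m g_0^\top u+\tfrac12\rho_m^2 R(u)$, where $R(u)=u^\top H u$ is the Rayleigh quotient. Both claims then reduce to \emph{strict} inequalities between Rayleigh quotients. Once $R(u)>R(u')$ holds strictly, the quadratic term dominates the bounded linear difference $\rho_m g_0^\top(u-u')$. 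Concretely, $\rho_m>2\|g_0\|\,\|u-u'\|/(R(u)-R(u'))$ suffices, and this supplies $\rho_0$. The result is a finite threshold for part 1 and another for the chosen $\alpha$ in part 2; $\rho_0$ is their maximum.

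\textbf{Part 1.} Write $g_1\propto v+cHv$ with $v=u_0$ and $c=\rho/\|g_0\|>0$. Diagonalize $H$ and let $m_j=v^\top H^j v$ denote the moments of the spectral measure of $v$, which lives on $(0,\infty)$. Then
\[
R(g_1)=\frac{m_1+2cm_2+c^2m_3}{m_0+2cm_1+c^2m_2}.
\]
Cross-multiplying against $R(g_0)=m_1/m_0$ gives the difference $2c(m_0m_2-m_1^2)+c^2(m_0m_3-m_1m_2)$. The first bracket is a variance, hence nonnegative. The second is nonnegative by Chebyshev's sum inequality, since $\lambda$ and $\lambda^2$ are similarly ordered on $\lambda>0$. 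Strictness comes from the non-parallel hypothesis: $g_1\not\parallel g_0$ forces $Hv\not\parallel v$. So $v$ is not an eigenvector, its spectral measure is not a point mass, and $m_0m_2>m_1^2$. This gives $R(u_1)>R(u_0)$, and part 1 follows by the dominance argument above.

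\textbf{Part 2.} The family $g_\alpha$, $\alpha\in\mathbb{R}$, sweeps every direction, up to sign, of the plane $P=\mathrm{span}\{g_0,g_1\}=\mathrm{span}\{v,Hv\}$ except that of $g_1$ itself. Since $R$ is sign-invariant, it suffices to show that $u_1$ is not a local maximizer of $R$ restricted to $P$. For then some nearby direction in $P$, which is a $g_\alpha$ with $\alpha$ near $1$, has strictly larger Rayleigh quotient, and dominance finishes. The directional derivative of $R$ at $g_1$ along $Hv$ is proportional to
\[
(Hv)^\top H g_1 - R(g_1)\,(Hv)^\top g_1 .
\]
It suffices to show this is nonzero. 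Equivalently, $g_1$ is not an eigenvector of the compression $\Pi_P H \Pi_P$. I would try to show that the expression is strictly positive, i.e.\ that $c\mapsto R(v+cHv)$ is strictly increasing for $c>0$. Then $\alpha>1$, which means moving further along $Hv$, is the improvement. The derivative of the moment ratio above has numerator a polynomial in $c$ whose coefficients are again moment determinants of the form $m_im_j-m_km_l$. I expect these to be nonnegative by log-convexity of $j\mapsto m_j$, a Cauchy--Schwarz/Hankel-positivity argument, with the constant term $2(m_0m_2-m_1^2)>0$ strict.

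\textbf{Main obstacle.} The main obstacle is exactly this sign analysis of the derivative in part 2, in particular controlling the $c^2$ coefficient of the derivative's numerator. If monotonicity proves delicate, the fallback is the weaker qualitative route. The maximizer of $R$ on the 2D plane is the top eigenvector of the $2\times2$ compressed Hessian in the basis $\{v,Hv\}$, and I would then check directly that $g_1$ fails the eigenvector equation. That check may require a genericity argument, using the positive-variance fact already obtained in part 1.
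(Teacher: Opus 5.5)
Your proposal is correct and is essentially the paper's proof. Both arguments reduce each claim to a strict Rayleigh-quotient inequality that the $\rho_m^2$ term makes decisive; Part 1 follows from the same cross-multiplication with strict Cauchy--Schwarz (since $Hg_0\not\parallel g_0$) plus Chebyshev, and Part 2 from showing $\frac{d}{d\alpha}R(g_\alpha)>0$ at $\alpha=1$. Your exact quadratic model just makes the paper's loose $o(\cdot)$ bookkeeping rigorous and gives an explicit $\rho_0$.

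The obstacle you anticipate does not arise. The cubic terms cancel, and the derivative numerator is exactly $2(m_0m_2-m_1^2)+2c(m_0m_3-m_1m_2)+2c^2(m_1m_3-m_2^2)$, the same three terms the paper obtains for $f'(1)$. The last bracket is nonnegative by Cauchy--Schwarz, so your monotonicity claim holds for all $c>0$. One small slip: the family $g_\alpha$ omits the direction of $g_1-g_0$, not that of $g_1$.
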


The first result in the proposition delivers that for any fixed distance that is relatively large, the loss along the direction of the gradient at the single-step ascent point is higher than that along the gradient at the current parameters. This confirms, from the loss-value perspective, that the gradient of single-step SAM better approximates the direction toward the maximum within its local neighborhood than that of SGD. Note that a relatively large distance is necessary for the second-order term to dominate the first-order term. For a distance that is too small, $g_0$ is by definition the steepest ascent direction. A detailed proof is provided in Appendix~\ref{sec:proofs}. We additionally compare the losses of $L(\vartheta_0 + \rho_m\, g_1 /\|g_1\|)$ and $L(\vartheta_0 + \rho_m\, g_0 /\|g_0\|)$ across different $\rho$ and $\rho_m$ in actual experiments. See Figure~\ref{fig:compare_loss_cifar10_resnet18} and \ref{fig:compare_loss_cifar100_resnet18} in Appendix~\ref{sec: surface_whole_process}, which provides further empirical evidence of this result.

The second result in the proposition implies that there exist better approximations than the gradient of single-step SAM even in the two-dimensional hyperplane spanned by $g_0$ and $g_1$. This confirms our observation that the approximation by the gradient at the single-step ascent point is often inaccurate.

\subsubsection{Heuristic Explanation and Deductive Analysis}

To help establish a more intuitive understanding of why $g_1\!@\vartheta_0$ provides a better approximation for the direction of the maximum, we further provide the following heuristic explanation. Assuming the Hessian matrix of the loss function exhibits sufficiently slow variation within the local neighborhood, i.e., the gradient field evolves smoothly. Then, if $g_1$ is not parallel to $g_0$, the directional change from $g_0$ to $g_1$  reveals how the gradient field evolves in the surroundings. Considering additional \textit{virtual} ascent steps within the local region, e.g., $\vartheta_2$ and $g_2$. The directional change from $g_1$ to $g_2$ will tend to follow a similar trend as that from $g_0$ to $g_1$. The same pattern persists for all subsequent virtual ascent steps, i.e., the virtual ascent trajectory will tend to curve in a consistent manner. Therefore, the high-loss region identified by the virtual ascent trajectory will likely be located at a position that is further shifted from the one-step ascent point $\vartheta_1$, along the direction of $g_1$, but curves further in the evolving direction of the gradient. Its direction relative to $\vartheta_0$ is thus better captured by $g_1\!@\vartheta_0$ than by $g_0$. Nevertheless, such an approximation is inherently inaccurate.

In multi-step settings, a crucial observation is that each adjacent pair of steps $(i, i+1)$ recapitulates the configuration of single-step SAM. Consequently, the conclusion from the single-step analysis holds inductively for each step. That is, $g_{i+1}@\vartheta_{i}$ better approximates the direction toward the maximum than $g_{i}@\vartheta_{i}$, for $i\in [0,\dots,k-1]$. However, a critical discrepancy arises in multi-step SAM: it directly applies $g_k$ to $\vartheta_0$, but it remains unclear whether $g_k@\vartheta_0$ stands as a better approximation of the direction from $\vartheta_{0}$ toward the maximum than $g_1@\vartheta_0$ (or even $g_0$). The core difference here is that $g_1$ is evaluated along the ray defined by $g_0$ and $\vartheta_0$, whereas $g_k$ may substantially deviate from the ray defined by $g_0$ and $\vartheta_0$. Because the entire multi-step trajectory can curve significantly. This renders the direct application of $g_k$ to $\vartheta_0$ potentially suboptimal or unjustified.

As a final remark, a simple deduction reveals the inherent inaccuracy of the SAM gradient approximation: Consider SAM operating on a fixed loss surface. Regardless of how accurately $g_k@\vartheta_0$ currently approximates the direction, as long as we continuously decrease $\{\rho_i\}$ (for all $i \in [0, k-1]$) toward $0$, $g_k$ will reduce to $g_0$. Consequently, the approximation quality of $g_k@\vartheta_0$ will get reduced arbitrarily close to that of the original gradient $g_0$. This sensitivity to the choice of $\{\rho_i\}$ also implies that, for an arbitrary $\{\rho_i\}$, it is typically suboptimal (even for a certain fixed loss surface). \Comment{}On the other hand, we can also tune $\{\rho_i\}$ to make it the best possible approximation, which could have played a role in the practical effectiveness of SAM. Nevertheless, given the evolving local loss landscape during training, the approximation with any fixed $\{\rho_i\}$ can hardly remain relatively accurate throughout.

\begin{figure}[t]
    \begin{minipage}[t][9cm][t]{0.50\textwidth}
    \vspace{-13pt}
        \begin{algorithm}[H]
        \caption{XSAM}
        \label{alg:example}
        \normalsize   
        \begin{algorithmic}[1]
            \Require Initial parameters $\theta_0$, number of iterations $T$, number of ascent steps $k\geq1$, perturbation radius $\{\rho_i\}$, neighborhood radius $\rho_m$, $\alpha^*$ update frequency $T_\alpha$, learning rate $\{\eta_t\}$
            \Ensure Final parameters $\theta_T$
            \For{$t=0$ \textbf{to} $T-1$}
                \State $\vartheta_0 = \theta_t$
                \For{$i=0$ \textbf{to} $k-1$} \Comment{\footnotesize\textcolor{gray}{\textit{Single-step: $k=1$}}}
                    \State $g_i = \nabla_{\vartheta_i} L(\vartheta_i)$
                    \State $\vartheta_{i+1} = \vartheta_i + \rho_i \frac{g_i}{\|g_i\|}$
                \EndFor
                \State $g_k = \nabla_{\vartheta_k} L(\vartheta_k)$
                \State $v_0 = \frac{\vartheta_k - \vartheta_0}{\|\vartheta_k - \vartheta_0\|}, \quad v_1 = \frac{g_k}{\|g_k\|}$
                \State $\psi = \arccos(v_0 \cdot v_1)$
                \If{$t \bmod T_\alpha = 0$}
                    \State $\alpha_t^* = \arg\max_{\alpha} L(\vartheta_0 + \rho_m \cdot v(\alpha)), $  
                    \State where $v(\alpha) = \frac{\sin((1-\alpha) \psi)}{\sin(\psi)} v_0 + \frac{\sin(\alpha \psi)}{\sin(\psi)} v_1$
                \Else
                    \State $\alpha_t^* = \alpha_{t-1}^*$
                \EndIf
                \State $\theta_{t+1} = \theta_t - \eta_t \cdot v(\alpha_t^*) \cdot \|g_k\|$
            \EndFor
        \end{algorithmic}
        \normalsize
        \end{algorithm}
    \end{minipage}
    \hfill
    \begin{minipage}[t][9cm][t]{0.48\textwidth}
        \centering
        \captionof{table}{Training time comparison. Values are presented as hours/200 epochs, SAM / XSAM.}
        \label{table:sam-dsam-time}
        \setlength{\tabcolsep}{8.0pt}
        \begin{adjustbox}{width=\textwidth}
        \begin{tabular}{lccc}
        \toprule
        & CIFAR-10 & CIFAR-100 & Tiny-ImageNet \\
        \midrule
        VGG-11      & 0.93 / 0.96 &0.98 / 1.03 & 2.18 / 2.22 \\
        ResNet-18   & 2.35 / 2.39 &2.40 / 2.43 & 4.95 / 4.98 \\
        DenseNet-121 &8.02 / 8.08 & 8.05 / 8.07 & 16.50 / 16.55 \\
        \bottomrule
        \end{tabular} 
        \end{adjustbox}
        \par\vspace{1em} 
        \includegraphics[width=0.9\textwidth,height=5.3cm]{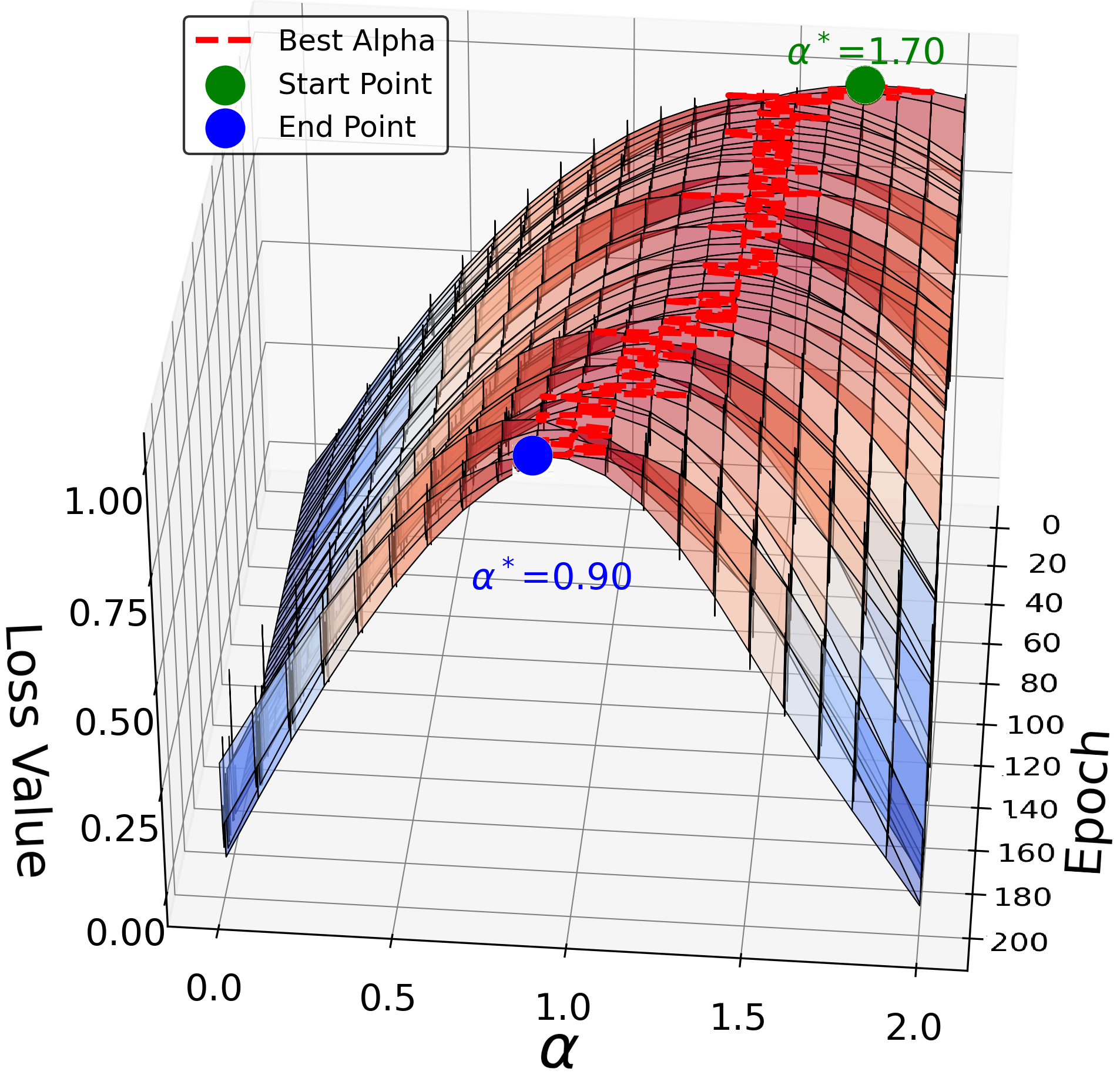}
        \captionof{figure}{Slow variation of $\alpha^*$ during training.}
        \label{fig:loss_alpha_epoch_3D}
    \end{minipage}
\end{figure} 

\section{Explicit Sharpness-Aware Minimization}\label{sec:DSAM}

As shown in the above section, the approximation by the SAM gradient is often inaccurate and lacks adaptivity to the evolving local loss landscape. Moreover, the approximation quality may degrade as the number of ascent steps increases. To provide an integrated solution that simultaneously addresses all these limitations, we propose in this section eXplicit Sharpness-Aware Minimization (XSAM).

XSAM addresses the inaccuracy issue by explicitly probing the location of the maximum within the local neighborhood, thereby providing a more accurate update direction. By dynamically performing this probe during training, it further enhances adaptivity to the evolving local loss landscape.

Probing the maximum within the entire high-dimensional neighborhood can be computationally intractable. We therefore assume that the maximum is located at the neighborhood boundary, while further constraining the probe to a two-dimensional hyperplane. The 2D hyperplane is spanned by the gradient at the final ascent point (i.e., the point reached after $k\geq1$ ascent steps) and the vector from the current parameters to that point. Formally, the two spanning vectors are defined as:
\begin{equation}
    v_0 = \frac{\vartheta_k - \vartheta_0}{ \| \vartheta_k  - \vartheta_0 \|}, \quad \quad  v_1= \frac{g_k}{\|g_k\|}.
\end{equation}
This definition of the two-dimensional hyperplane is crucial and provides four key advantages. First, it ensures that the point with the highest known loss (the one pointed to by $g_k$, standing at $\vartheta_k$) lies within the hyperplane. Second, it avoids the inaccuracy issue of directly applying the gradient at the multi-step ascent point to the current parameters, while fully leveraging its informational value. Specifically, we use $\vartheta_k$ and $g_k$ to define a search space that encompasses all the information they contain, instead of directly applying $g_k$ to $\vartheta_0$. Third, it offers a unified formulation for both single-step and multi-step settings. Note that when $k=1$, $v_0$ and $v_1$ correspond to the directions of $g_0$ and $g_1$, respectively. Fourth, normalization is applied to separate direction from magnitude, allowing us to manage them independently.

To probe within the two-dimensional hyperplane, we generate new directions as the spherical linear interpolation between $v_0$ and $v_1$:
\begin{equation} \label{eqn:v} 
    v(\alpha) = \frac{\sin((1-\alpha) \psi)}{\sin(\psi)} v_0 + \frac{\sin(\alpha \psi)}{\sin(\psi)} v_1,
\end{equation}
where $\psi = \arccos(v_0 \cdot v_1)$ and $\alpha$ is the interpolation factor. It has $\|v(\alpha)\|=1$ for any $\alpha$, $v(0)=v_0$, $v(1)=v_1$. More generally, $v(\alpha)$ is a unit vector that rotates from $v_0$ by an angle of $\alpha\cdot\psi$ along the direction toward $v_1$. It can span all possible directions in the search space.

We then determine the direction, parametrized by $\alpha^*$, that maximizes the loss at a predefined distance:
\begin{equation}
    \label{eq:alpha}
    \alpha^* = \arg\max_{\alpha \in [0, a]} L(\vartheta_0 + \rho_{m} \cdot v(\alpha)),
\end{equation}
where $\rho_m$ is a hyperparameter specifying the radius of the \textit{true} (in contrast to the perturbation radius) sharpness-aware neighborhood. In each dynamic search, we uniformly sample $\alpha$ values from $[0, a]$. In practice, setting $a$ to $2$ or $4$ and sampling $20–40$ samples is typically sufficient.

Once $\alpha^*$ is identified, the model parameters are updated using $-v(\alpha^*)$ as the descent direction. The gradient scale, by default, is set to $\|g_k\|$ to make it consistent with SAM\footnote{Alternative gradient scaling strategies are examined in Appendix~\ref{sec:scale_computation}.}. Formally,
\begin{equation}\label{eq:dsam}
    \theta_{t+1} = \theta_{t} - \eta_t \cdot v(\alpha^*) \cdot \|g_k\|,
\end{equation}
by which $v(\alpha^*)$ steers the parameters away from the estimated maximum within the neighborhood.

\textbf{Faithfulness and Effectiveness.} Since we use $L(\vartheta_0 + \rho_m \cdot v(\alpha))$ as a proxy\footnote{Our implementation uses only the current batch, consistent with the standard SAM procedure.}, the method explicitly identifies the maximum within a neighborhood of radius $\rho_m$. Although restricted to a hyperplane, this approximation relies only on the boundary assumption. It thus more faithfully identifies the maximum in the local neighborhood, in contrast to directly regarding $\vartheta_k$ as the maximum or approximating its direction by $g_k@\vartheta_0$. XSAM thereby more authentically realizes the sharpness-aware minimization.

\textbf{The Cost of Explicit Estimation}. The evaluation of each $\alpha$ requires a forward pass. Thus, the cost of explicit estimation scales with the number of sampled $\alpha$ values times the cost of a forward pass. If performed at every iteration, this would introduce substantial overhead. Fortunately, frequent updates of $\alpha^*$ are unnecessary. Our experiments show that $\alpha^*$ remains relatively stable and varies smoothly during training (Figure~\ref{fig:loss_alpha_epoch_3D}). By default, we adopt an epoch-wise update strategy: $\alpha^*$ is updated at the first iteration of each epoch and then fixed for the remainder. Runtime comparison is shown in Table~\ref{table:sam-dsam-time}, indicating the additional overhead is negligible. Further details are provided in Appendix~\ref{sec:computational_overhead}.

\section{Related Work}\label{sec:related_work}

SAM has been extended in several distinct directions. One line of work focuses on improving the gradient ascent (i.e., perturbation) step, addressing issues such as parameter scale dependence (ASAM \citep{kwon2021asam}; Fisher SAM \citep{kim2022fisher}), approximation quality (RSAM~\citep{liu2022random}; CR-SAM~\citep{wu2024cr}), and perturbation stability (VaSSO~\citep{li2024enhancing}; FSAM~\citep{li2024friendly}). These approaches are largely complementary to ours; for instance, Appendix~\ref{sec:xsam_asam} demonstrates that integrating XSAM with ASAM yields additional performance gains.

Another line of research targets the parameter update step. GSAM \citep{zhuang2022surrogate} combines the perturbed gradient with the orthogonal component of the local gradient. GAM \citep{zhang2023gradient} simultaneously optimizes empirical loss and first-order flatness. In particular, WSAM~\citep{yue2023sharpness} and \citet{zhao2022penalizing} derive their update rules as a linear combination of $g_0$ and $g_1$ through weighted sharpness regularization and gradient-norm penalization, respectively. While their superior performance over SAM is readily explained by our interpretation, this very perspective reveals a critical weakness: their dependence on a fixed combination weight, treated as a hyperparameter, is inherently suboptimal. In contrast, XSAM explicitly estimates the optimal interpolation factor dynamically during training and naturally extends this principle to multi-step settings. More fundamentally, our approach is derived from a reformulation of the sharpness-aware objective itself, rather than introducing an auxiliary regularization term, offering a more general and principled solution.

Multi-step SAM variants are discussed in Section~\ref{sec:multi_step_sam}, while additional related work on topics such as flatness, efficiency, and long-tail learning is deferred to Appendix~\ref{sec:related_work_supplement}.

\section{Empirical Results} \label{sec:exp}

In this section, we empirically compare SAM and its related variants with the proposed XSAM. Due to space limitations, detailed experimental settings are deferred to Appendix~\ref{sec:experiment_details}.

\subsection{2D Test Function}

Following \citep{yue2023sharpness,kim2022fisher}, we first evaluate methods on a 2D function featuring a sharp and a flat minimum within a certain distance, serving as an ideal testbed for sharpness-aware minimization. We compare SGD, SAM, and XSAM across different initial points and hyperparameters. XSAM consistently converges to the flat minima when $\rho_m$ is sufficiently large, whereas SAM and SGD are more prone to get trapped in the sharp minima. Representative training trajectories for each method are shown in {Figure~\ref{fig:toy}. Both SAM and XSAM are evaluated in their single-step form.

\begin{table}[t]
    \centering
    \caption{Test accuracies on classification tasks in the single-step setting.}
    \label{table:single-step}
    \vspace{7pt}
    \setlength{\tabcolsep}{2.0pt}
    \begin{adjustbox}{width=\textwidth}
    \begin{tabular}{l*{9}{>{\centering\arraybackslash}p{1.9cm}}}
    \toprule
    Dataset& \multicolumn{3}{c}{CIFAR-10} & \multicolumn{3}{c}{CIFAR-100} & \multicolumn{3}{c}{Tiny-ImageNet}\\ 
    \cmidrule(lr){2-4} \cmidrule(lr){5-7} \cmidrule(lr){8-10}
     Model& \footnotesize VGG-11 & \footnotesize ResNet-18 & \footnotesize DenseNet-121 & \footnotesize VGG-11 & \footnotesize ResNet-18 & \footnotesize DenseNet-121 & \footnotesize VGG-11 & \footnotesize ResNet-18 & \footnotesize DenseNet-121 \\
    \midrule
    SGD  & 93.19$_{\pm 0.11}$ & 96.15$_{\pm 0.05}$ & 96.34$_{\pm 0.11}$ & 71.46$_{\pm 0.17}$ & 78.55$_{\pm 0.20}$ & 81.78$_{\pm 0.06}$ & 47.44$_{\pm 0.33}$ &  57.02$_{\pm 0.42}$ & 61.93$_{\pm 0.10}$ \\
    SAM   & 93.83$_{\pm 0.06}$ & 96.59$_{\pm 0.06}$ & 96.97$_{\pm 0.02}$ & 74.01$_{\pm 0.05}$ & 80.93$_{\pm 0.11}$ & 83.81$_{\pm 0.02}$ & 51.96$_{\pm 0.26}$ & 62.81$_{\pm 0.09}$ & 66.31$_{\pm 0.09}$ \\
    XSAM  & \textbf{94.25$_{\pm 0.14}$} & \textbf{96.74$_{\pm 0.04}$} & \textbf{97.15$_{\pm 0.03}$}  & \textbf{74.21$_{\pm 0.14}$} & \textbf{81.24$_{\pm 0.07}$} & \textbf{83.96$_{\pm 0.10}$} & \textbf{52.58$_{\pm 0.38}$} & \textbf{63.82$_{\pm 0.23}$} & \textbf{66.81$_{\pm 0.08}$} \\ 
    \bottomrule
    \end{tabular}
    \end{adjustbox}
\end{table}

\begin{figure}[t]
    \begin{subfigure}{0.32\linewidth}
        \centering
        \includegraphics[width=1.0\textwidth]{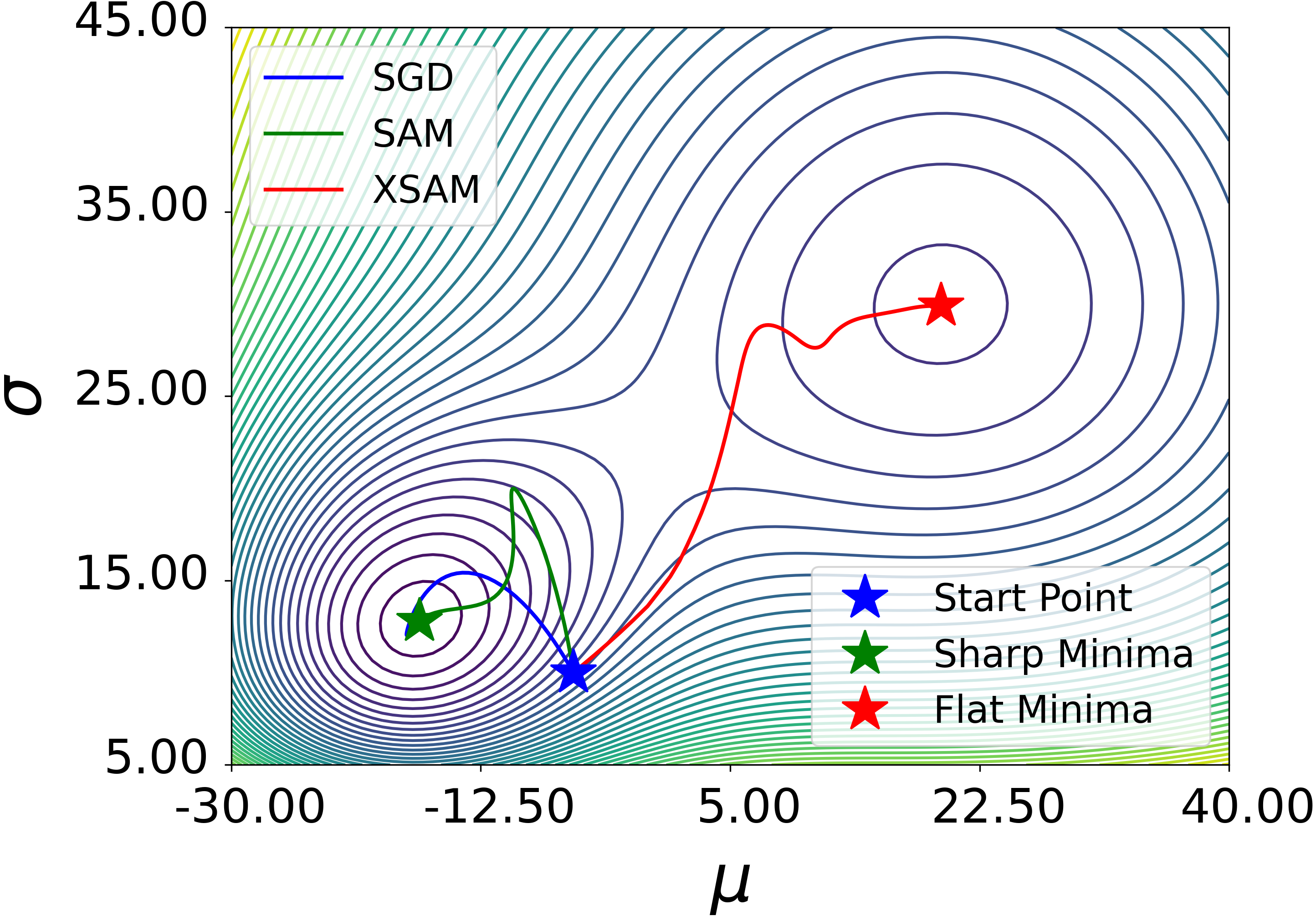}
        \caption{}
        \label{fig:toy}
    \end{subfigure}
    \hfill
    \begin{subfigure}{0.32\linewidth}
        \centering
        \includegraphics[width=1.0\textwidth]{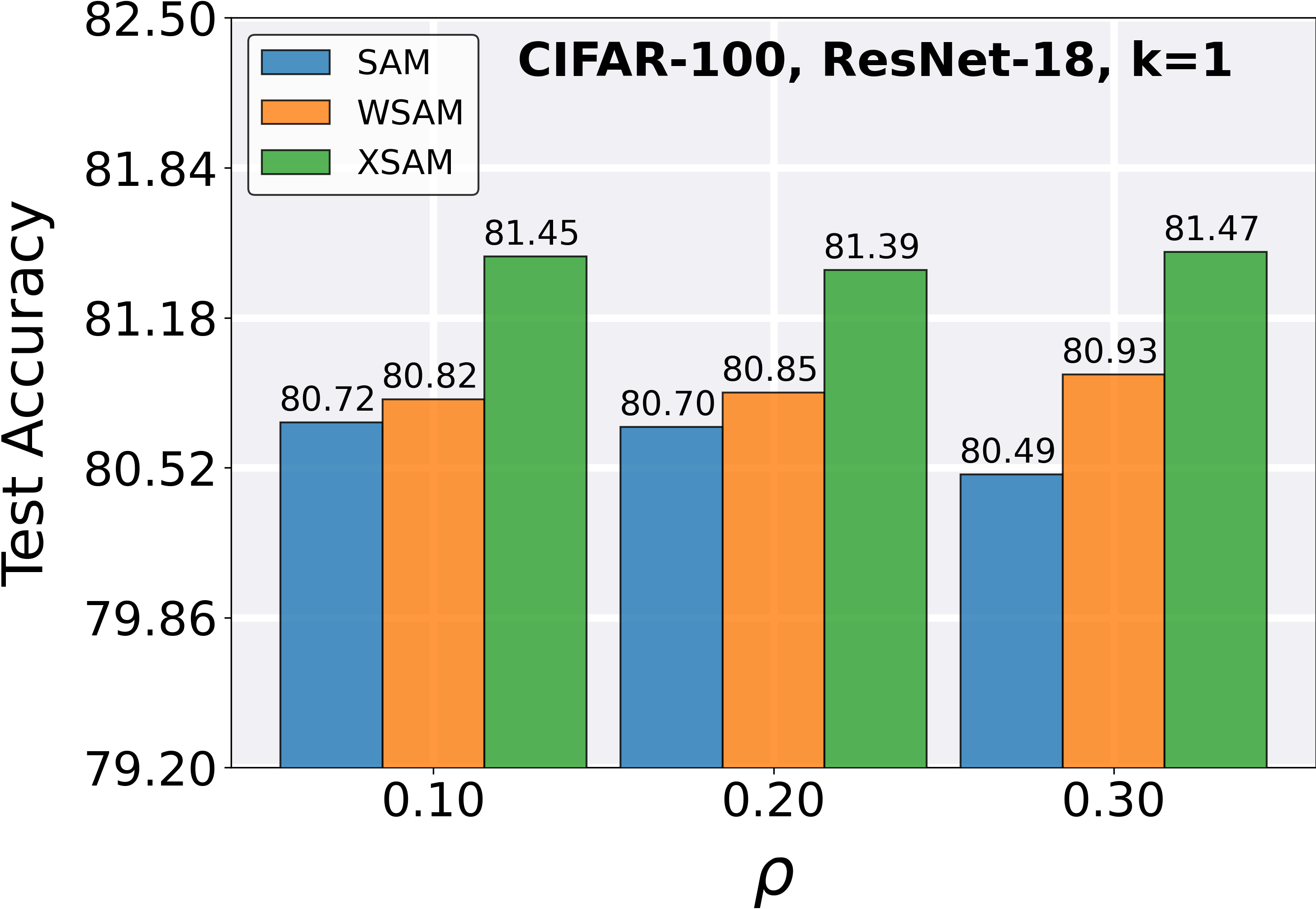}
        \caption{}
        \label{fig:single_step_rho}
    \end{subfigure}
    \hfill
    \begin{subfigure}{0.32\linewidth}
        \centering
        \includegraphics[width=\textwidth]{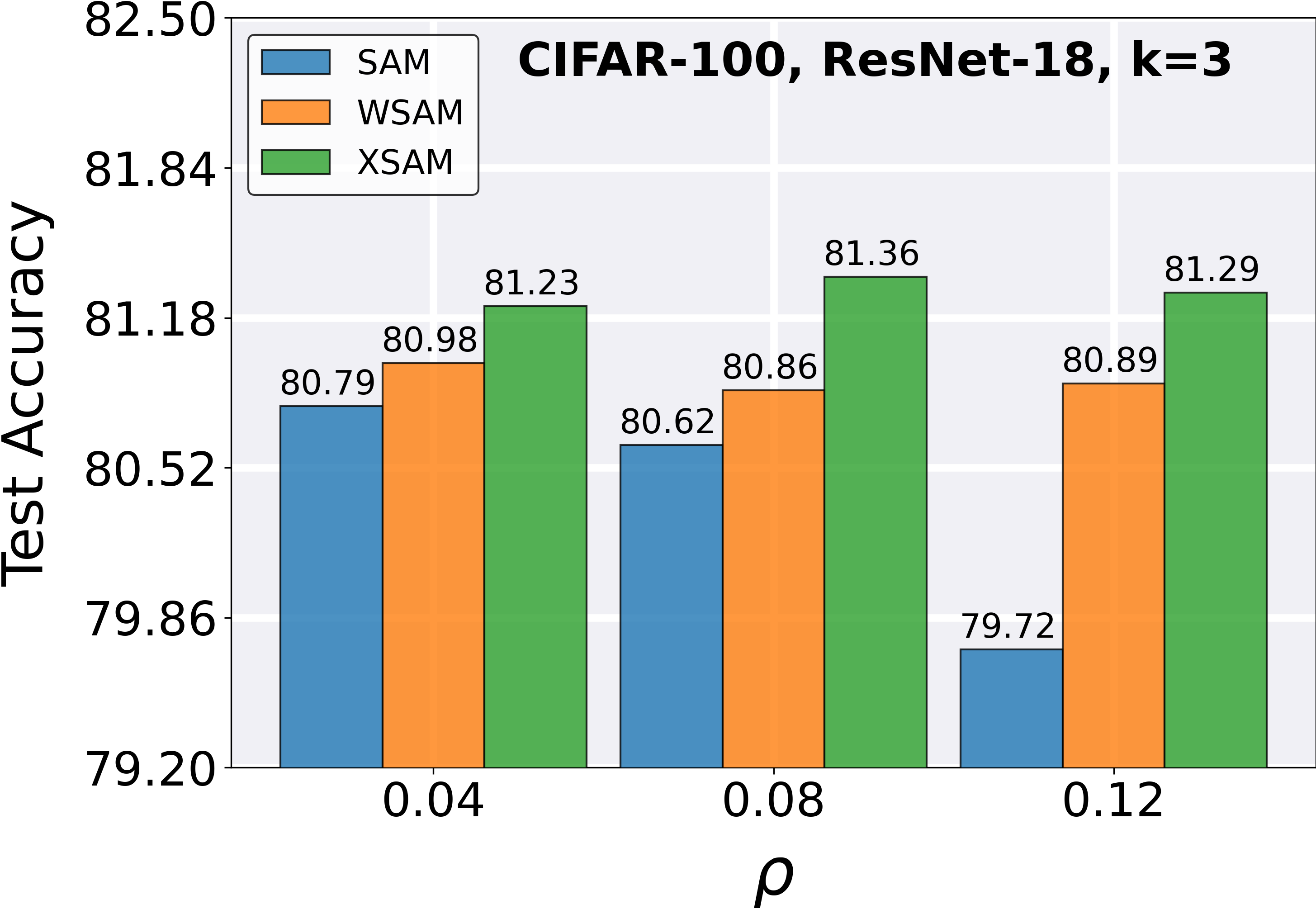}
        \caption{}
        \label{fig:multi_step_rho}
    \end{subfigure}
    \caption{(a) Training trajectory comparisons on 2D test function. (b)-(c) Test accuracy comparisons of ResNet-18 trained on CIFAR-100 in single-step and multi-step ($k=3$) settings with varying $\rho$.}
\end{figure}

\subsection{Evaluation under the Single-Step Setting}

In this section, we evaluate the methods under the single-step setting across a variety of classification datasets and model architectures. To stress-test the methods, we first tune SAM’s learning rate, weight decay, and $\rho$ to achieve its optimal performance on each dataset. Other methods are then tuned using the same hyperparameters whenever feasible. To isolate the effect of different gradient directions and eliminate the influence of gradient scaling, all methods adopt SAM’s gradient scale, i.e., $\|g_k\|$.

We evaluate the methods across diverse neural network architectures and datasets to ensure broad applicability. The experiments cover architectures ranging from VGG-11~\citep{simonyan2014very} and ResNet-18~\citep{he2016deep} to DenseNet-121~\citep{huang2017densely}, encompassing classic models of increasing capacity. The datasets include CIFAR-10, CIFAR-100, and Tiny-ImageNet, which span increasing complexities. As shown in Table \ref{table:single-step}, SAM consistently outperforms SGD, confirming the superiority of the gradient direction of $g_1$ compared to $g_0$. Meanwhile, XSAM consistently outperforms SAM, highlighting the benefit of explicitly estimating the direction.

To provide a more thorough comparison, we evaluate performance under varying $\rho$ on CIFAR-100 using ResNet-18. For this experiment, we further include a WSAM-like baseline, which implements our method with a fixed but tunable $\alpha$, to highlight the benefit of dynamically estimating $\alpha$ compared to a static choice. The best fixed $\alpha$ for the WSAM is determined via grid search over $[-1.0, 3.0]$ with a step size of $0.25$. As shown in Figure~\ref{fig:single_step_rho}, the WSAM improves over SAM, while XSAM consistently achieves further and significant improvements over the WSAM.

Having established XSAM's potent performance under varying $\rho$, we further assess XSAM's generality on larger-scale and more diverse tasks. We conduct experiments on ImageNet with ResNet-50, a neural machine translation task with a Transformer~\citep{vaswani2017attention}, and CIFAR-100 with ViT-Ti~\citep{dosovitskiy2020image}. The results in Table~\ref{tab:imagenet_and_others} show that XSAM consistently outperforms SAM, demonstrating its broad applicability across diverse tasks and models.

\begin{table}[ht]
    \centering
    \begin{minipage}[t][3cm][t]{0.48\textwidth}
        \caption{Comparison of SAM and XSAM on larger-scale and more diverse tasks.}
        \label{tab:imagenet_and_others}
        \renewcommand{\arraystretch}{1.62} 
        \large 
        \begin{adjustbox}{width=\textwidth}
        \begin{tabular}{lcccc}
        \toprule
        & \makecell{ImageNet \\ ResNet-50 \\ (Accuracy)} 
        & \makecell{Transformer\\IWSLT2014 \\ (BLEU)} 
        & \makecell{ViT-Ti\\CIFAR-100 \\ (Accuracy)} \\
        \midrule
        SAM & 77.04 $\pm$ 0.09 & 35.30 $\pm$ 0.04 & 67.80 $\pm$ 0.22 \\
        XSAM & \textbf{77.22 $\pm$ 0.07} & \textbf{35.63 $\pm$ 0.12} & \textbf{68.32 $\pm$ 0.18} \\
        \bottomrule
        \end{tabular}
        \end{adjustbox}
    \end{minipage}
    \hfill
    \begin{minipage}[t][3cm][t]{0.48\textwidth}
        \centering
        \caption{Multi-step results on CIFAR-100 with ResNet-18. $\rho=\rho^*/k$ with $\rho^*$ for single-step.}
        \label{tab:multi-step-k}
        \begin{adjustbox}{width=\textwidth}
        \centering
        \begin{tabular}{lcccc}
        \toprule
        Methods & $k=1$ & $k=2$ & $k=4$\\ 
        \midrule
        SAM & 80.93 $\pm$ 0.11 & 80.91 $\pm$ 0.10 & 80.65 $\pm$ 0.26 \\
        LSAM & 80.93 $\pm$ 0.11 & 80.94 $\pm$ 0.09 &  80.74 $\pm$ 0.18 \\
        LSAM+ &  80.61 $\pm$ 0.20 & 80.83 $\pm$ 0.11 & 80.41 $\pm$ 0.03\\
        MSAM & 80.93 $\pm$ 0.11 & 81.18 $\pm$ 0.06 &  81.01 $\pm$ 0.09 \\
        MSAM+ & 80.83 $\pm$ 0.05 & 80.86 $\pm$ 0.34 & 80.77 $\pm$ 0.08 \\
        XSAM & \textbf{81.27 $\pm$ 0.07} & \textbf{81.44 $\pm$ 0.09} & \textbf{81.37 $\pm$ 0.24} \\
        \bottomrule
        \end{tabular}
        \end{adjustbox}
    \end{minipage}
\end{table}

\subsection{Evaluation under the Multi-Step Setting}\label{sec:multi_step_sam}

We proceed to evaluate and compare methods in a multi-step setting. We use a constant perturbation magnitude $\rho$ for all steps (i.e., $\rho_i = \rho$ for all $i$), therefore omitting the subscript $i$ for clarity. All experiments in this section are conducted on CIFAR-100 using a ResNet-18.

As the first experiment, we compare XSAM with multi-step SAM variants across different values of $k$. The considered methods include: MSAM \citep{kim2023exploring}, which updates parameters with $\sum^k_{i=1} g_i$, and LSAM \citep{mordido2024lookbehindsamkstepsback}, which employs $\sum^k_{i=1} g_i/\|g_i\|$. To ensure a thorough comparison, we further introduce two augmented variants that incorporate the initial gradient $g_0$: MSAM+ ($\sum^k_{i=0} g_i$) and LSAM+ ($\sum_{i=0}^k g_i/\|g_i\|$). Consistent with our previous protocol, we isolate the effect of gradient direction by readjusting all gradients to have the norm $\|g_k\|$. The perturbation radius is set to $\rho = \rho^*/k$, where $\rho^*$ is the optimal value for single-step SAM, as suggested by \citet{kim2023exploring}; all other hyperparameters remain unchanged from the single-step setup.

As shown in Table~\ref{tab:multi-step-k}, the performance of SAM tends to decline as $k$ increases. This phenomenon can be attributed to the growing deviation of $g_k$ from the original ascent direction $g_0@\vartheta_0$ as the single ascent step is subdivided. As a result, when applied to $\vartheta_0$, it leads to a poorer approximation of the direction toward the maximum in the vicinity. In contrast, XSAM is not affected by this issue and typically benefits from more steps, demonstrating its superior ability to leverage multi-step ascent.

\setlength{\intextsep}{1.8em}  
\setlength{\columnsep}{1.8em}  
\begin{wrapfigure}[13]{r}{0.35\textwidth}
    \vspace{-10pt} 
    \centering
    \includegraphics[width=0.35\textwidth]{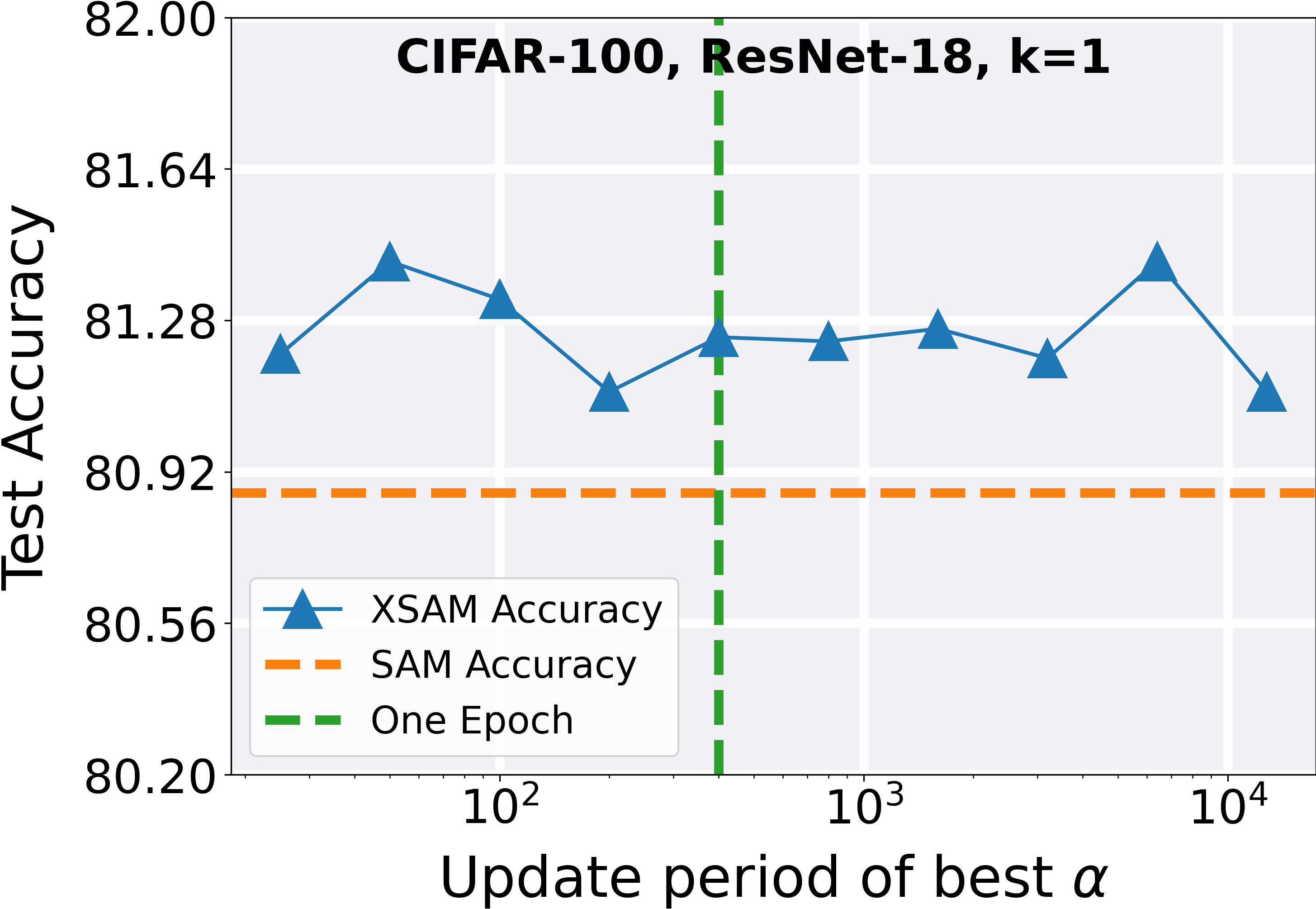}
    \caption{XSAM robustness to the $\alpha^*$ update frequency.}
    \label{fig:alpha_frequecy_acc2}
\end{wrapfigure}

LSAM and MSAM, which incorporate intermediate ascent gradients (${g_i}$ for $0\le i\le k$), generally surpass SAM. The decline in SAM's performance with large $k$ suggests substantial deviation of $g_k$ from the ideal direction, which makes earlier, less-deviated gradients $g_i$ valuable. Notably, LSAM+, which essentially moves away directly from the identified maximum point by multi-step ascent, even underperforms SAM. This highlights the value of an extra explicit estimation of the direction toward the maximum. Nevertheless, XSAM consistently outperforms all these methods across all settings.

We further evaluate SAM and XSAM under a multi-step setting ($k=3$) with a varying perturbation radius $\rho$. A multi-step extension of WSAM, which combines the gradients $g_k$ and $g_0$ with a fixed interpolation factor, is also compared. The results in Figure~\ref{fig:multi_step_rho} indicate that while the WSAM variant outperforms SAM, XSAM consistently outperforms WSAM. 

Figure~\ref{fig:alpha_frequecy_acc2} shows the robustness of XSAM to the $\alpha^*$ update frequencies. We observe no consistent pattern in performance when varying the update frequency of $\alpha^*$. Additional ablation results are presented in Appendix~\ref{sec:inner_properties}. Appendix~\ref{sec:flatness} further visualizes the loss surface at convergence, illustrating that XSAM finds \textbf{flatter minima} than SAM.


\section{Conclusion}

In this paper, we have studied the underlying mechanism of SAM and provided a novel, intuitive explanation of why it is valid and effective to apply the gradient at the ascent point to the current parameters. We have shown that the SAM gradient in its single-step version can provably better approximate the direction of the maximum within the local neighborhood than that of SGD. We have further demonstrated that such an approximation can be inaccurate, lacks adaptivity to the evolving local loss landscape, and may degrade as the number of ascent steps increases. To address these limitations, we have proposed XSAM that explicitly and dynamically estimates the direction of the maximum within the local neighborhood during training. XSAM thereby more faithfully and effectively moves the current parameters away from high-loss regions. Extensive experiments across various models, datasets, tasks, and settings have demonstrated the effectiveness of XSAM.

\section*{Acknowledgments}
This work was supported in part by the National Natural Science Foundation of China under Grant No. U22B2020.

\section*{Reproducibility Statement}

We have provided the code as supplementary material, along with detailed instructions for reproducing our experiments. The experimental settings and hyperparameters are described in the Appendix~\ref{sec:experiment_details}. The datasets used in this paper are publicly available and can be downloaded online. Detailed proofs of the proposed proposition are included in the Appendix~\ref{sec:proofs}. 

\bibliography{iclr2026_conference}
\bibliographystyle{iclr2026_conference}

\clearpage
\newpage
\newpage

\appendix
\section*{Appendix}
\addcontentsline{toc}{section}{Appendix}
\section{Visualization of Loss Surface during Training} \label{sec: surface_whole_process}

\FloatBarrier
\begin{figure}[H]
    \centering
    \begin{subfigure}[t]{0.20\linewidth}
        \centering
        \includegraphics[width=\linewidth]{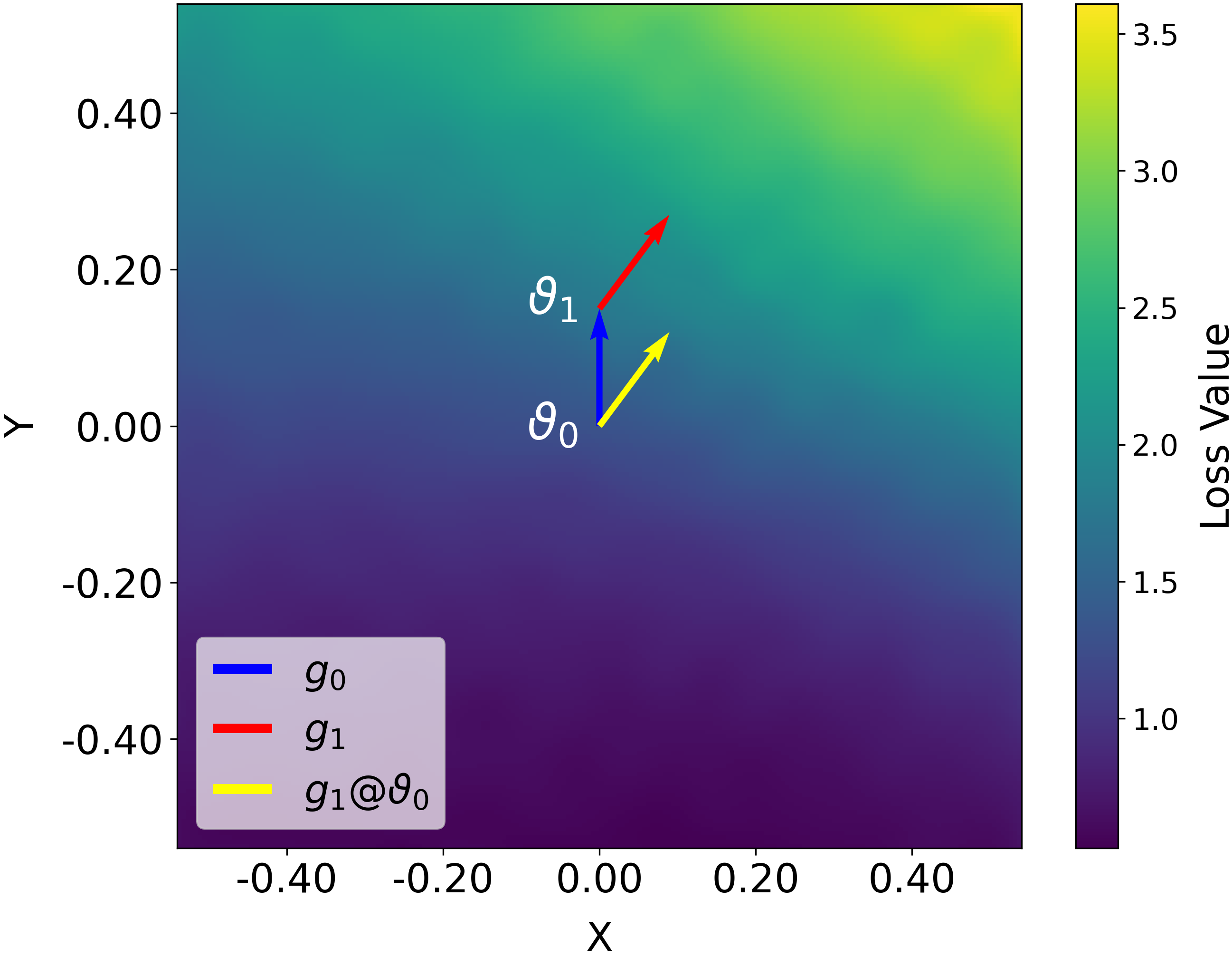}
        \caption{Epoch 50}
    \end{subfigure}
    \hfill
    \begin{subfigure}[t]{0.20\linewidth}
        \centering
        \includegraphics[width=\linewidth]{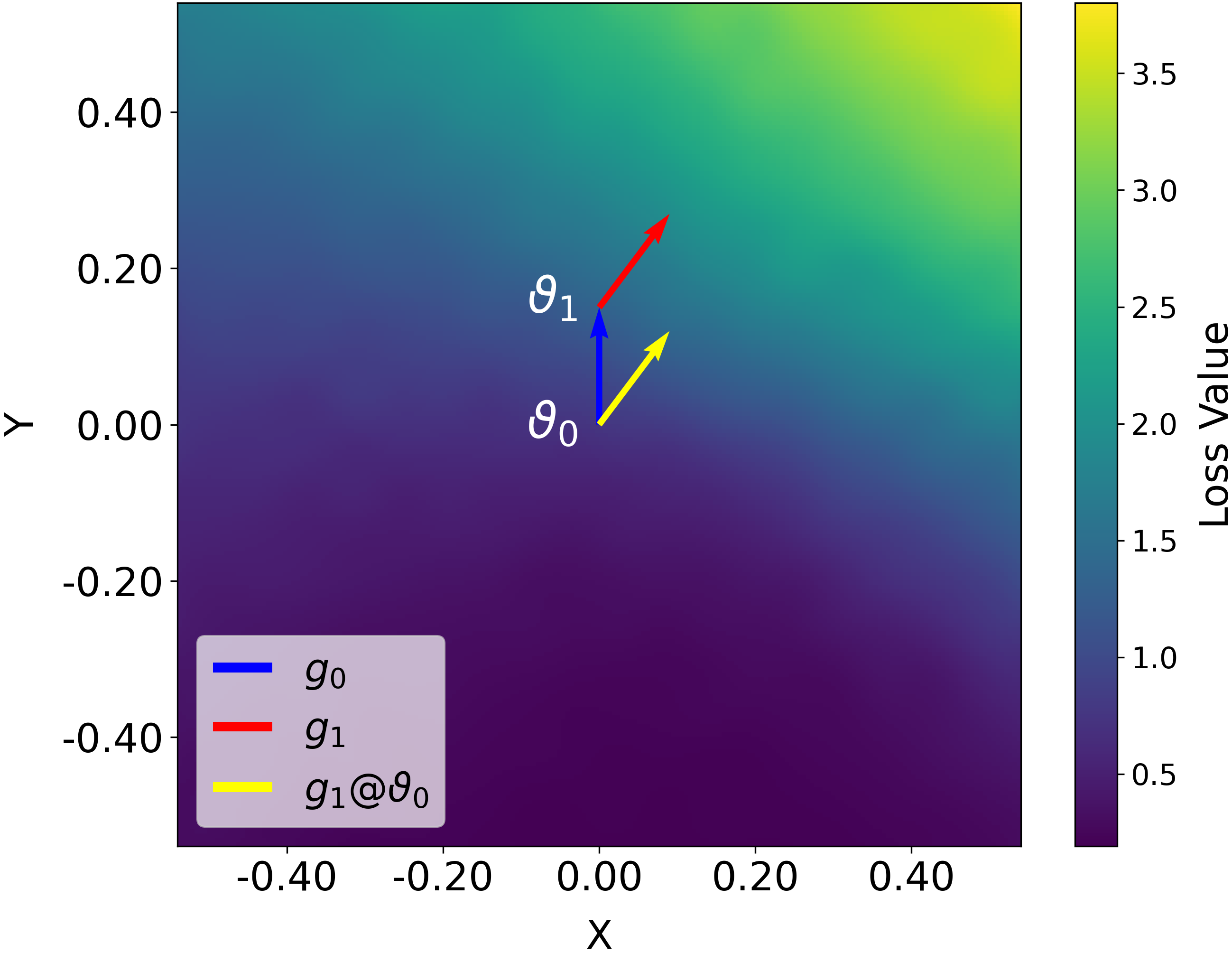}
        \caption{Epoch 100}
    \end{subfigure}
    \hfill
    \begin{subfigure}[t]{0.20\linewidth}
        \centering
        \includegraphics[width=\linewidth]{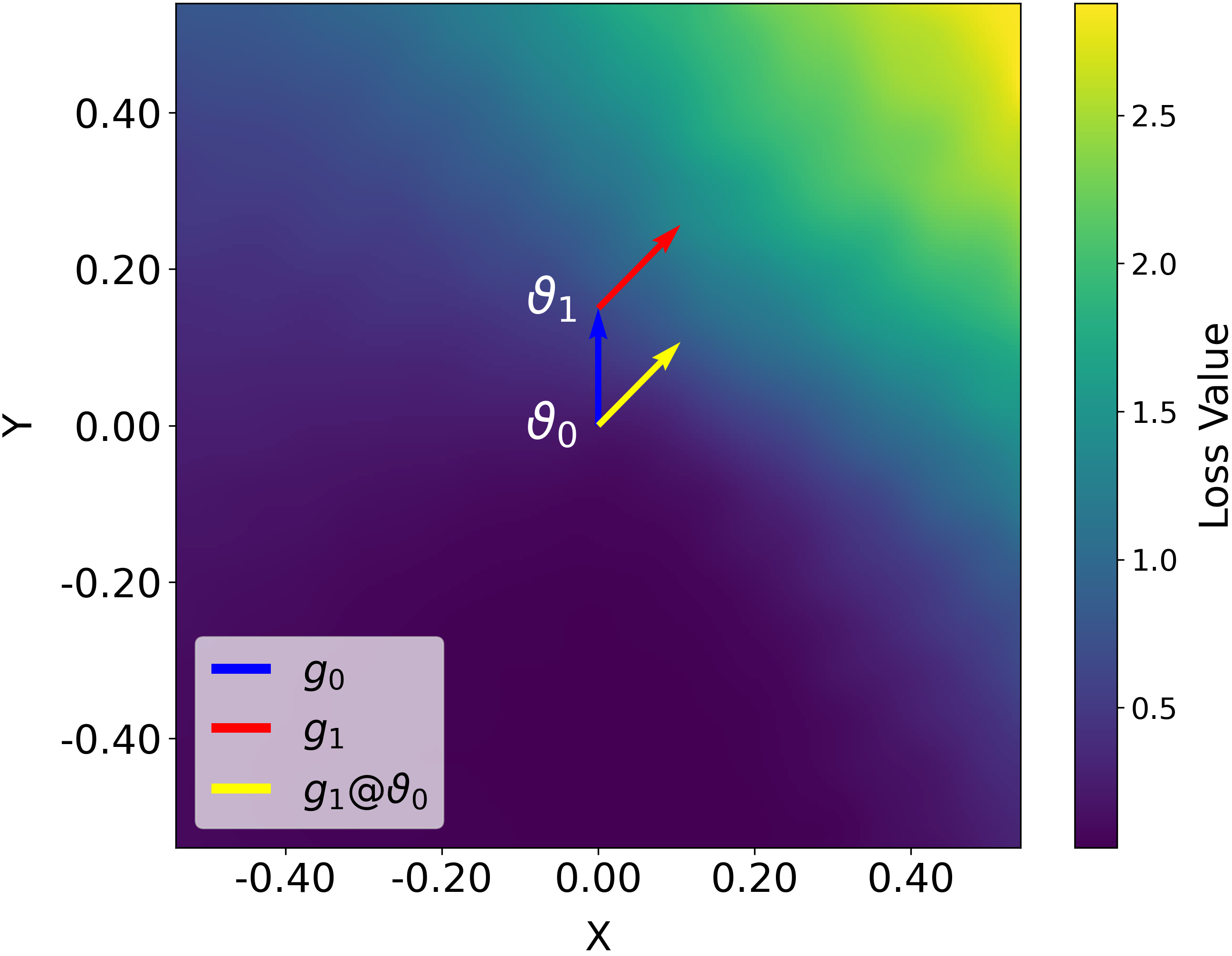}
        \caption{Epoch 150}
    \end{subfigure}
    \hfill
    \begin{subfigure}[t]{0.20\linewidth}
        \centering
        \includegraphics[width=\linewidth]{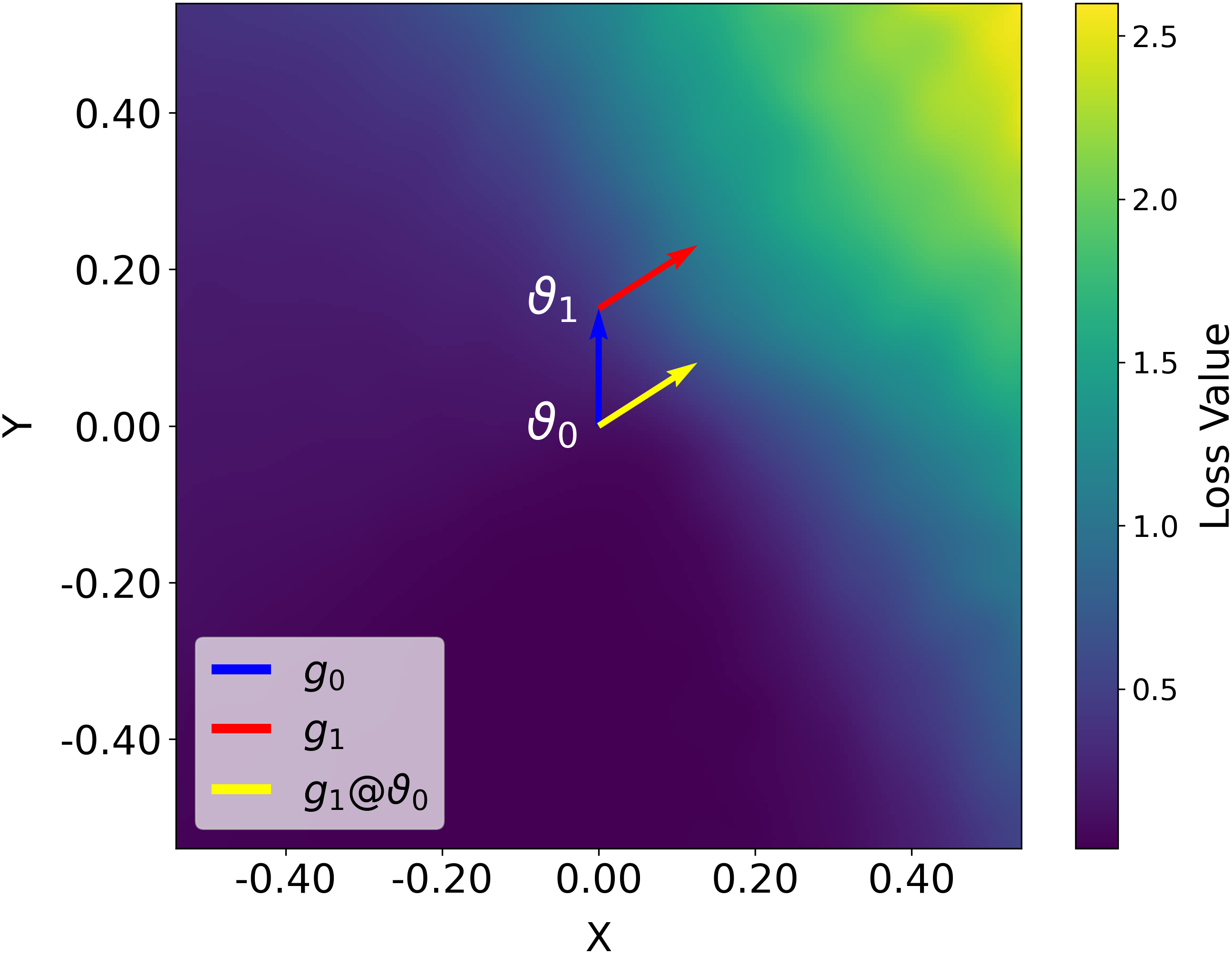}
        \caption{Epoch 200}
    \end{subfigure}    
    \caption{Visualization of loss surface during training: VGG-11 trained on CIFAR-100.}
    \label{fig: loss_surface_cifar100_vgg11 }
\end{figure}
\vspace{-20pt}
\begin{figure}[H]
    \vspace{-5pt}
    \centering
    \begin{subfigure}[t]{0.20\linewidth}
        \centering
        \includegraphics[width=\linewidth]{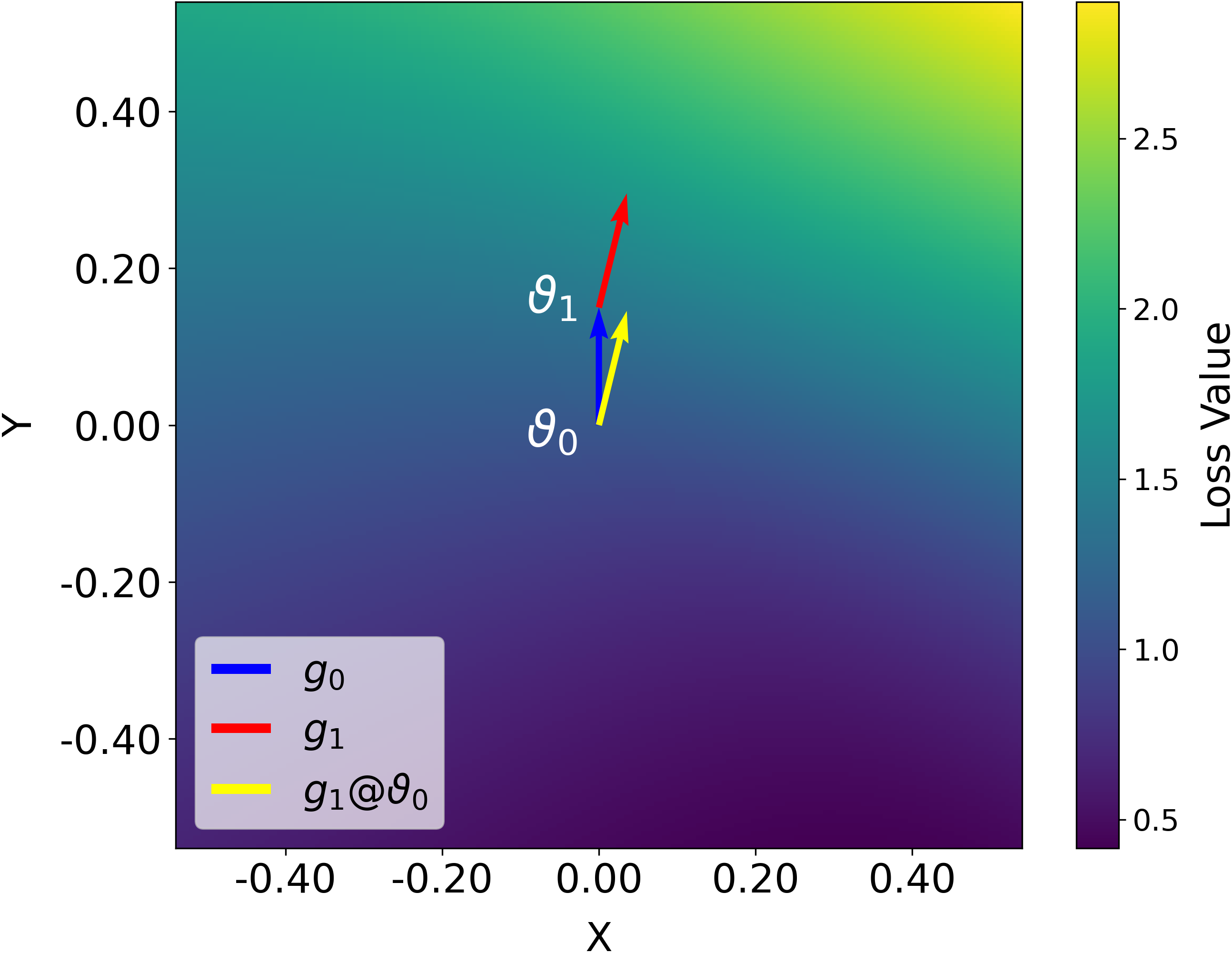}
        \caption{Epoch 50}
    \end{subfigure}
    \hfill
    \begin{subfigure}[t]{0.20\linewidth}
        \centering
        \includegraphics[width=\linewidth]{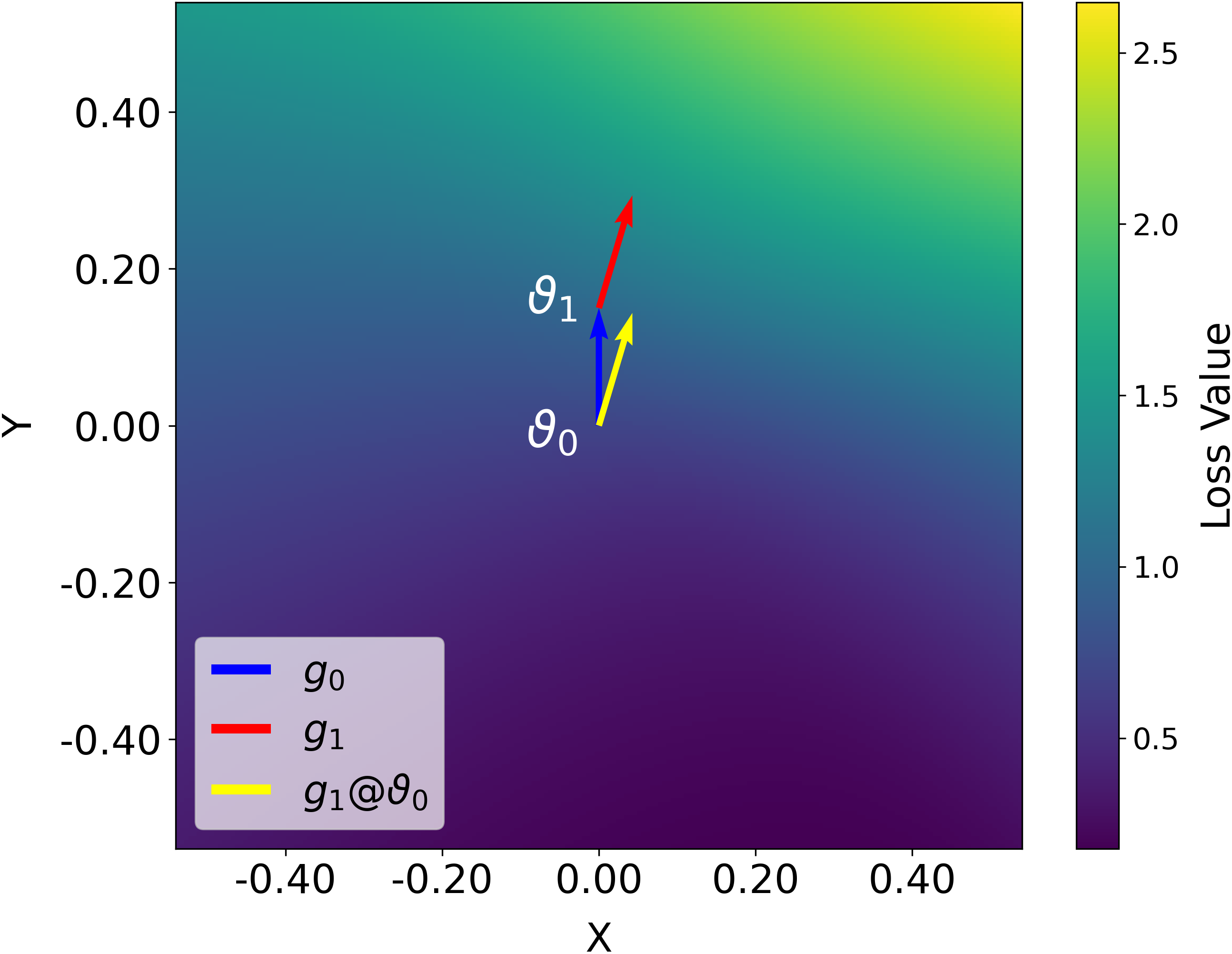}
        \caption{Epoch 100}
    \end{subfigure}
    \hfill
    \begin{subfigure}[t]{0.20\linewidth}
        \centering
        \includegraphics[width=\linewidth]{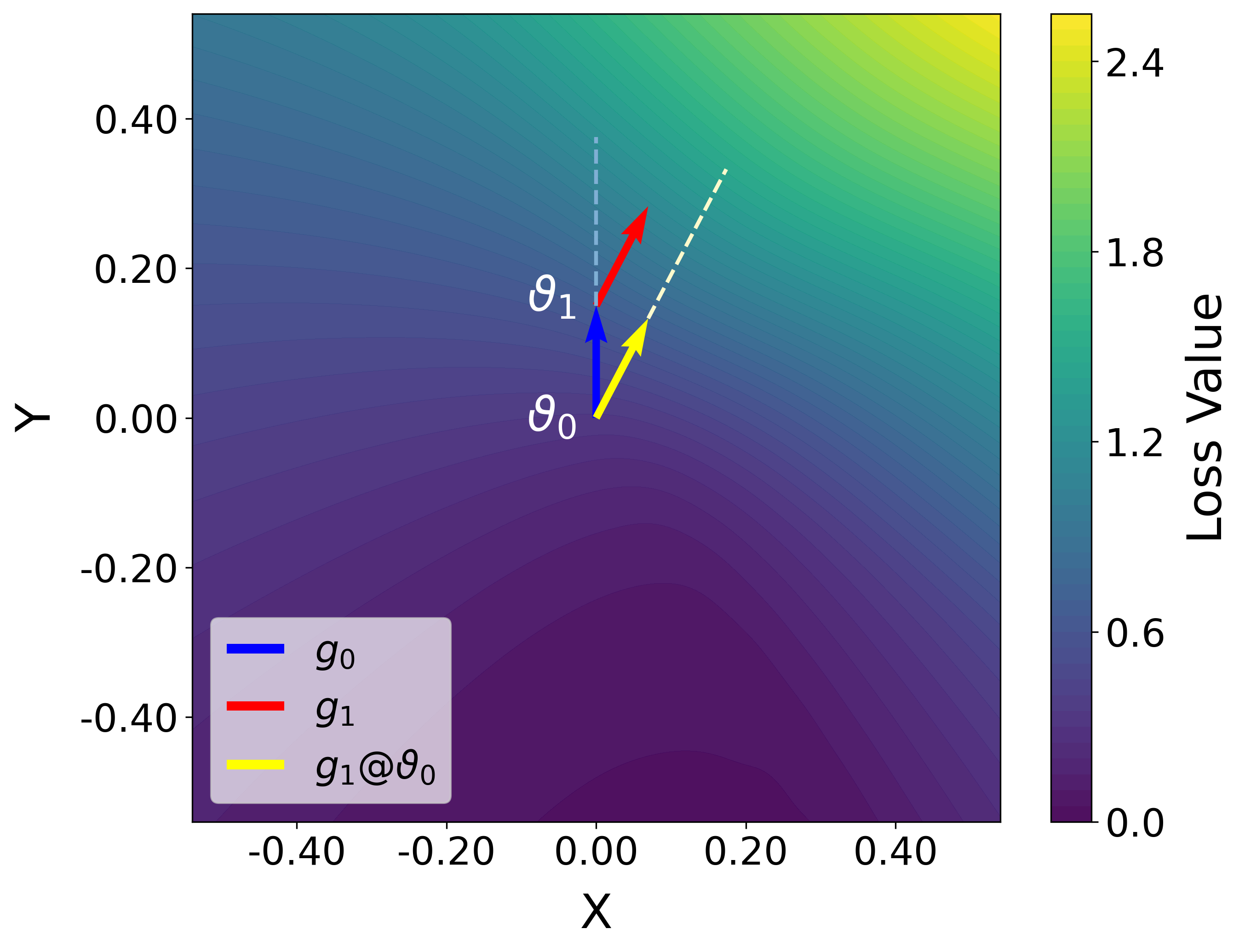}
        \caption{Epoch 150}
    \end{subfigure}
    \hfill
    \begin{subfigure}[t]{0.20\linewidth}
        \centering
        \includegraphics[width=\linewidth]{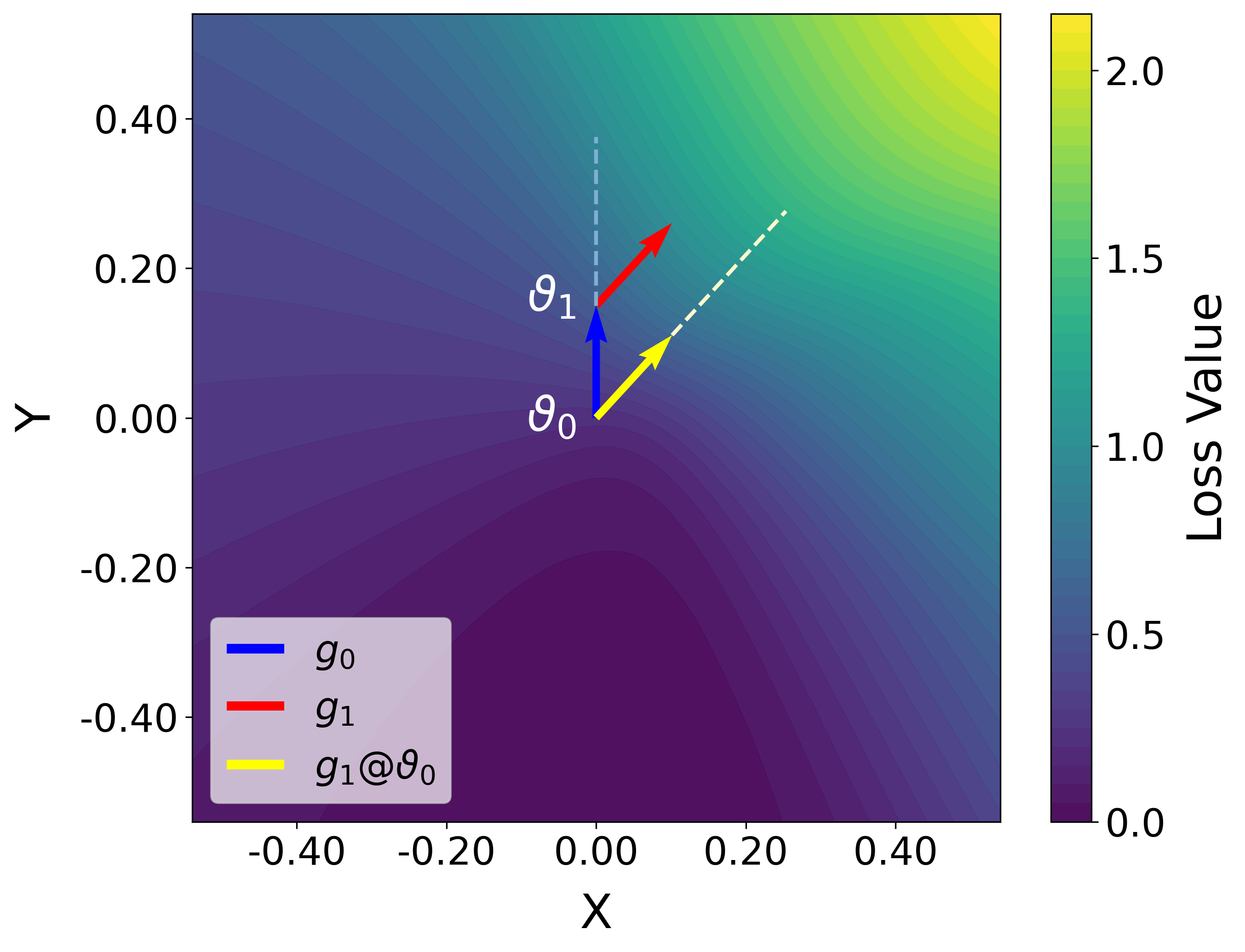}
        \caption{Epoch 200}
    \end{subfigure}    
    \caption{Visualization of loss surface during training: ResNet-18 trained on CIFAR-100.}
    \label{fig:loss_surface_cifar100_resnet18}
\end{figure}
\vspace{-20pt}
\begin{figure}[H]
    \vspace{-5pt}
    \centering
    \begin{subfigure}[t]{0.20\linewidth}
        \centering
        \includegraphics[width=\linewidth]{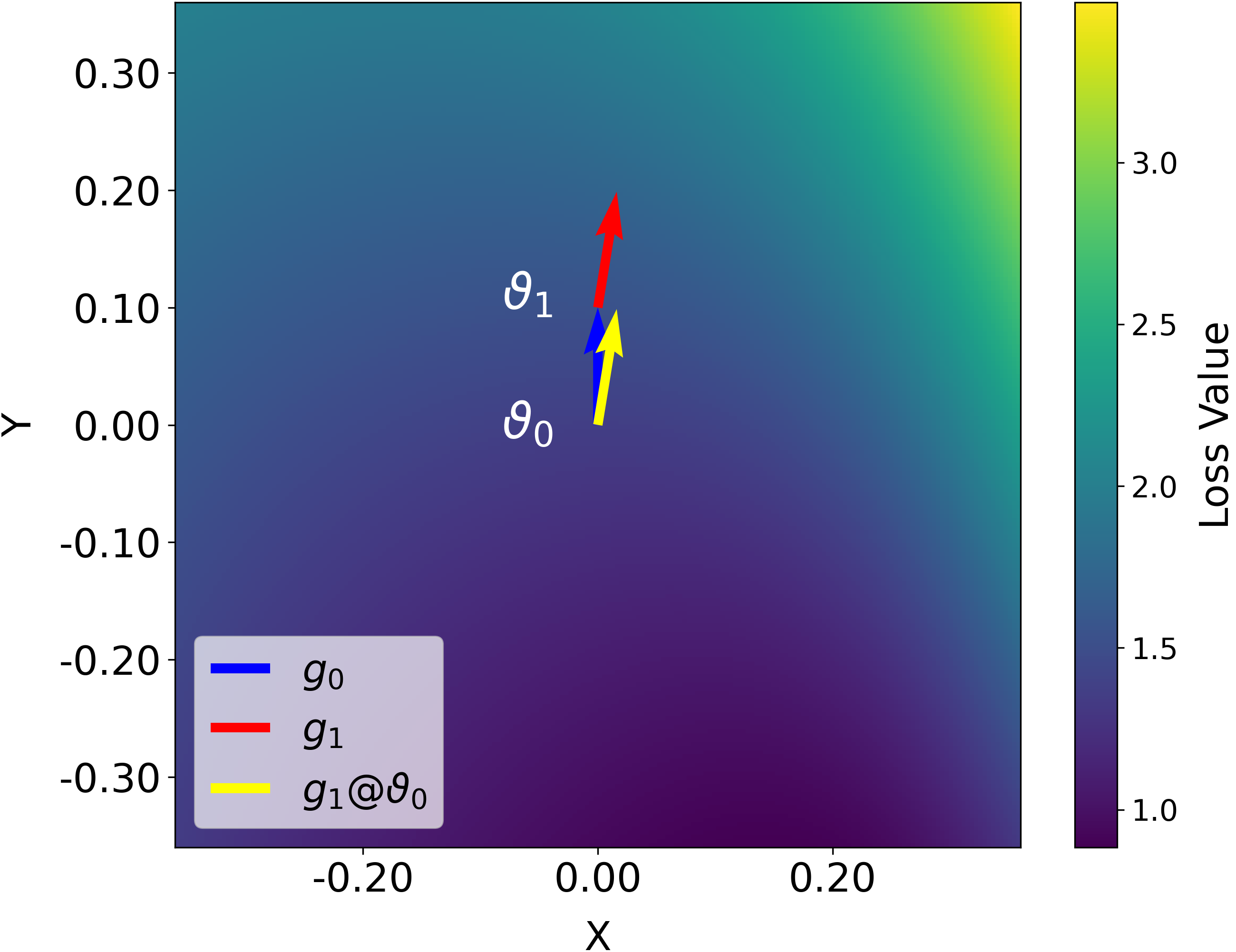}
        \caption{Epoch 50}
    \end{subfigure}
    \hfill
    \begin{subfigure}[t]{0.20\linewidth}
        \centering
        \includegraphics[width=\linewidth]{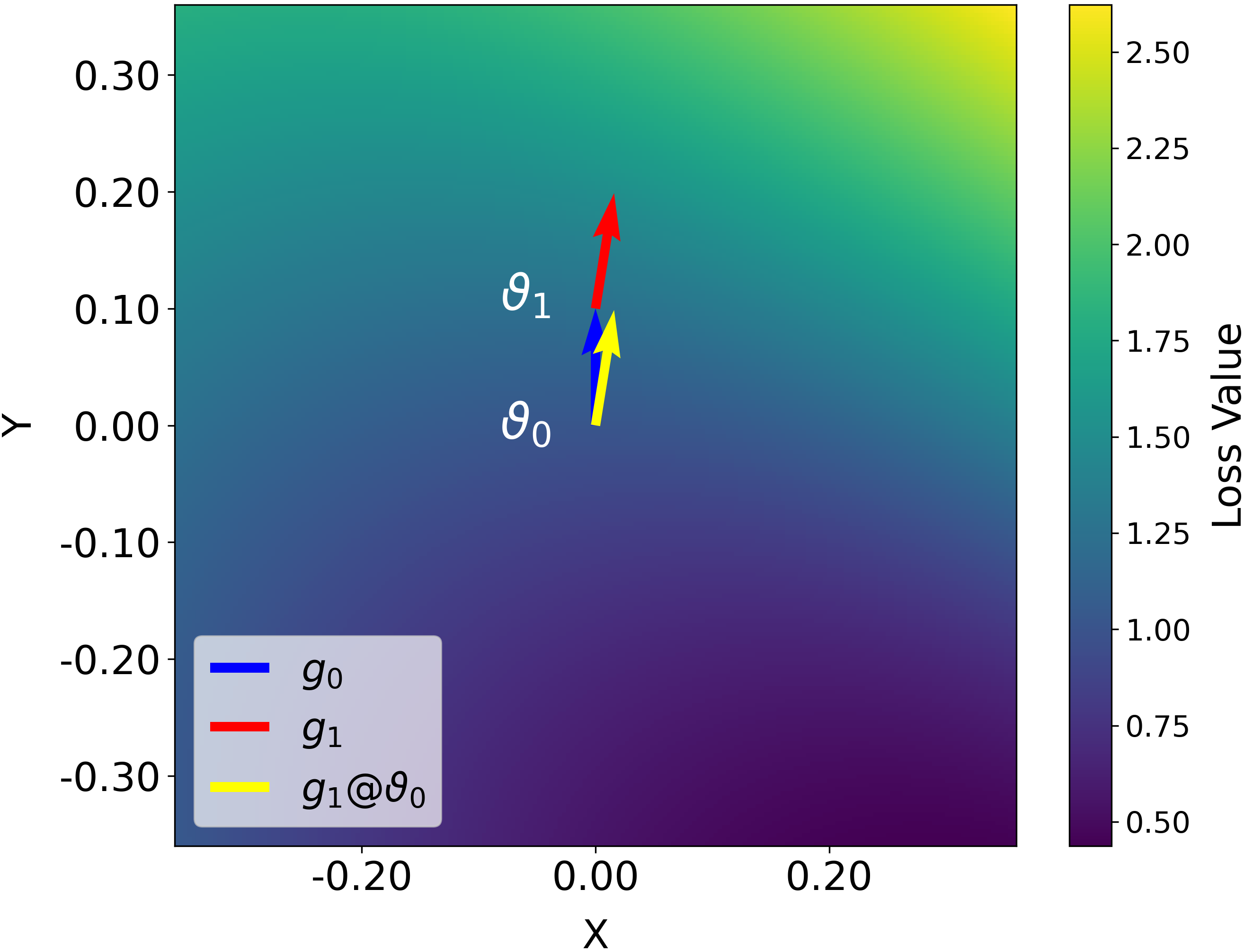}
        \caption{Epoch 100}
    \end{subfigure}
    \hfill
    \begin{subfigure}[t]{0.20\linewidth}
        \centering
        \includegraphics[width=\linewidth]{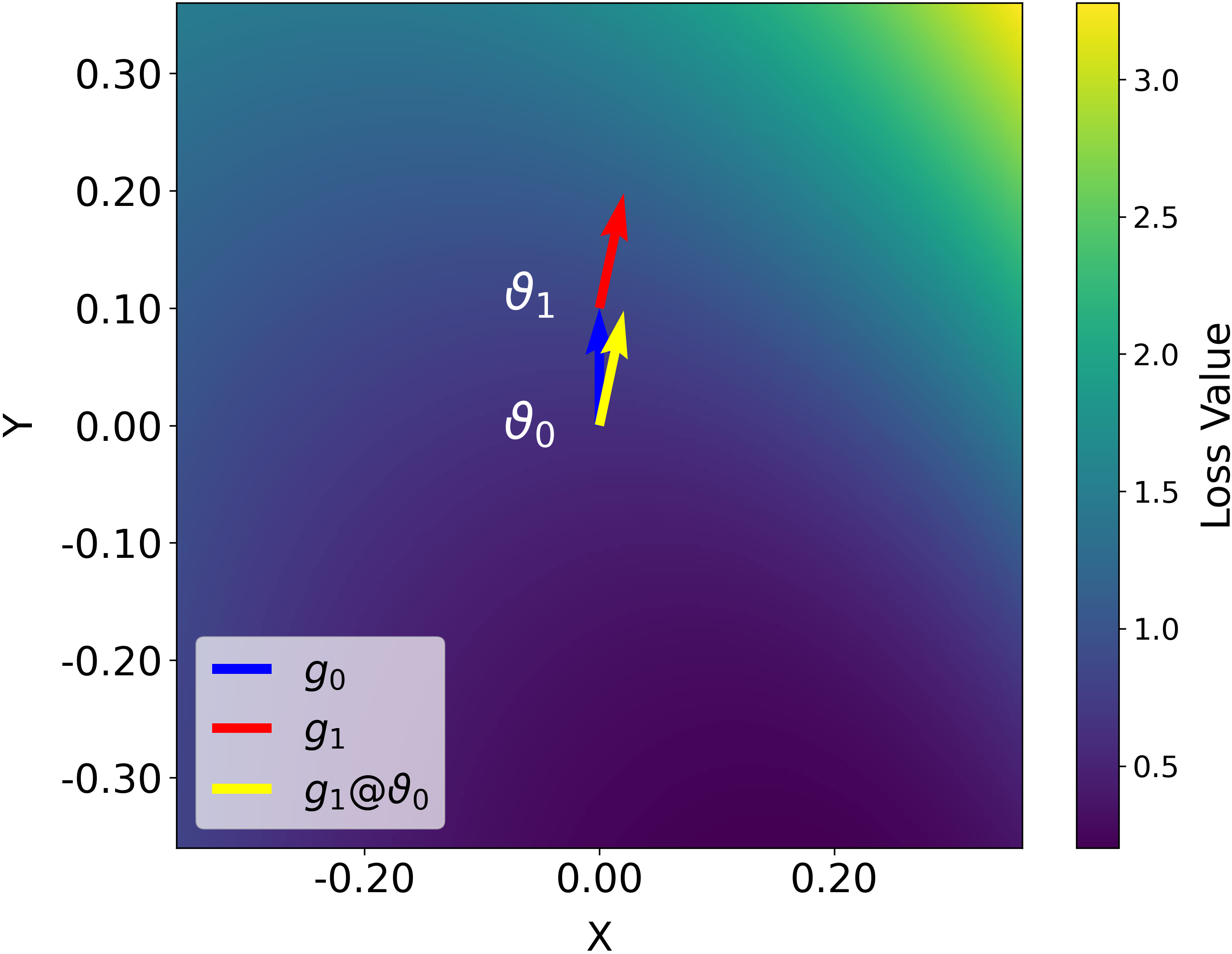}
        \caption{Epoch 150}
    \end{subfigure}
    \hfill
    \begin{subfigure}[t]{0.20\linewidth}
        \centering
        \includegraphics[width=\linewidth]{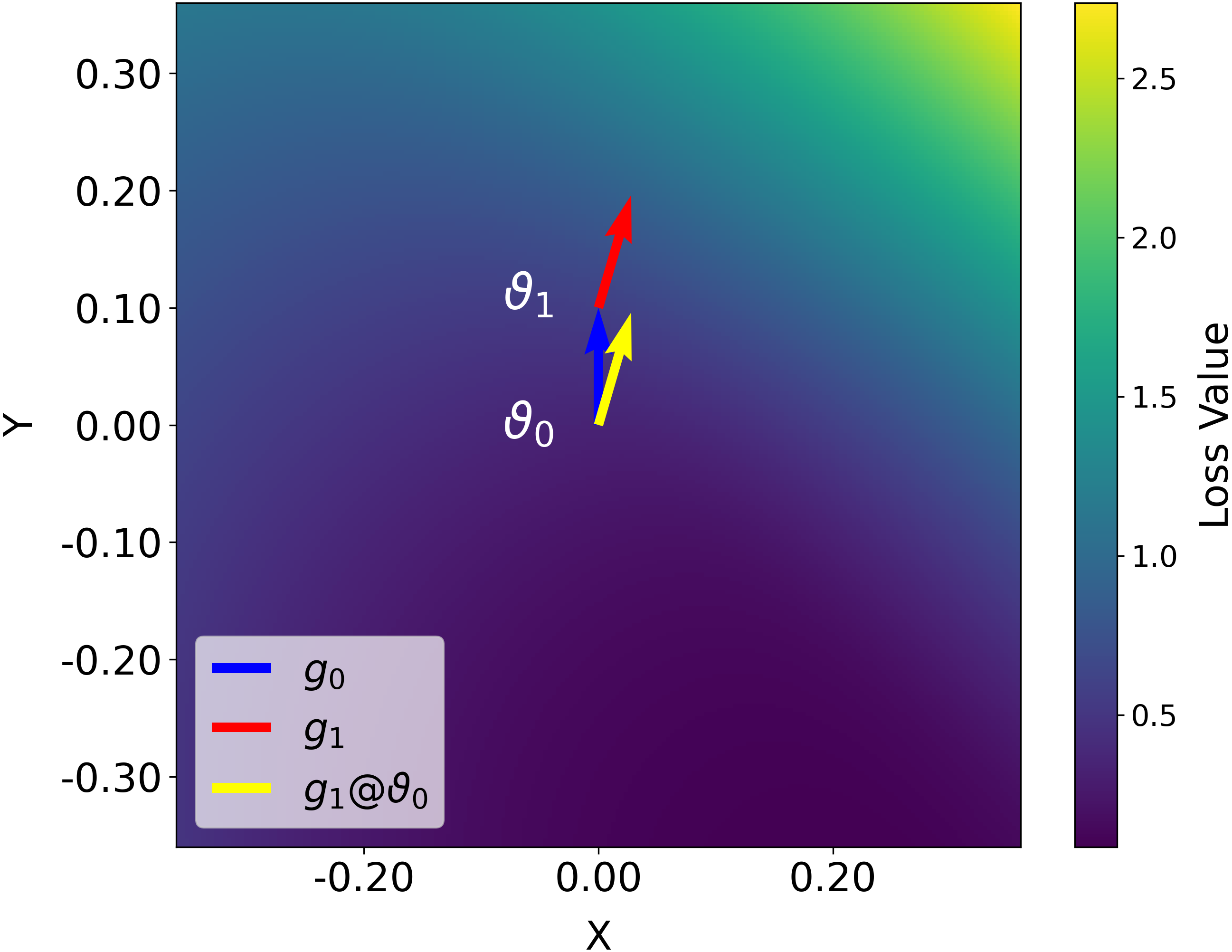}
        \caption{Epoch 200}
    \end{subfigure}
    \caption{Visualization of loss surface during training: ViT-Ti trained on CIFAR-100.}
    \label{fig:loss_surface_cifar100_vitt}
\end{figure}
\vspace{-20pt}
\begin{figure}[H]
    \vspace{-5pt}
    \centering
    \begin{subfigure}[t]{0.20\linewidth}
        \centering
        \includegraphics[width=\linewidth]{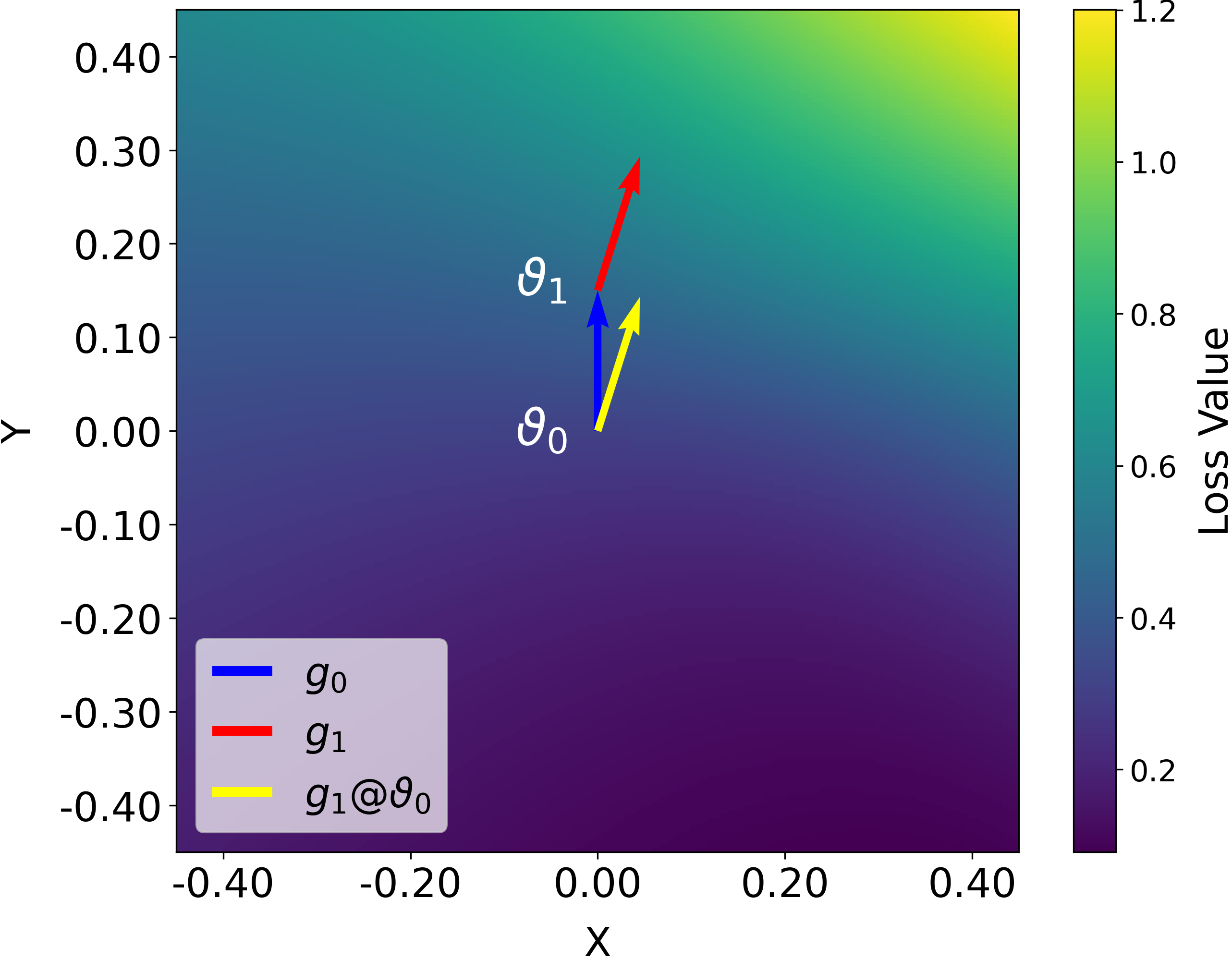}
        \caption{Epoch 50}
    \end{subfigure}
    \hfill
    \begin{subfigure}[t]{0.20\linewidth}
        \centering
        \includegraphics[width=\linewidth]{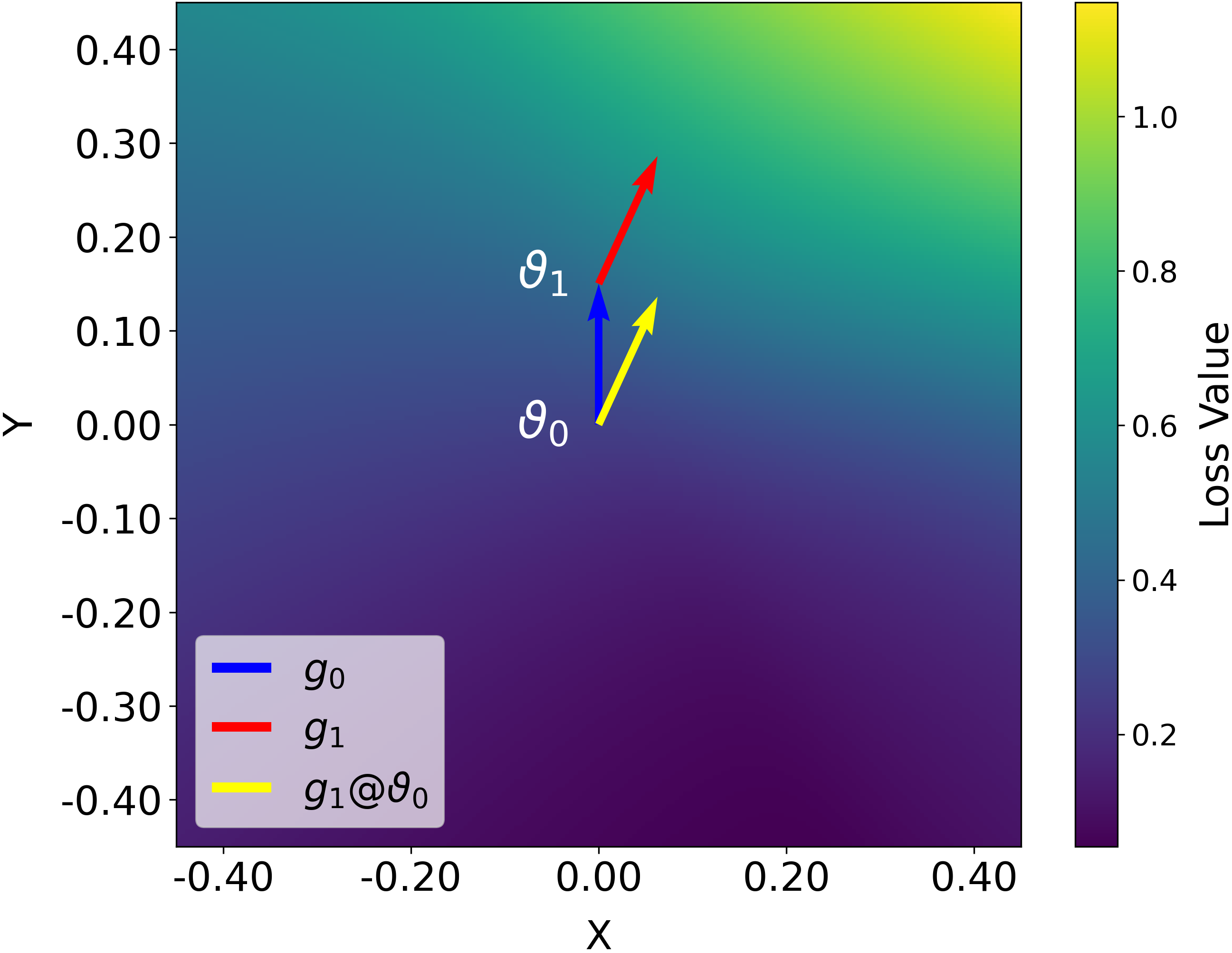}
        \caption{Epoch 100}
    \end{subfigure}
    \hfill
    \begin{subfigure}[t]{0.20\linewidth}
        \centering
        \includegraphics[width=\linewidth]{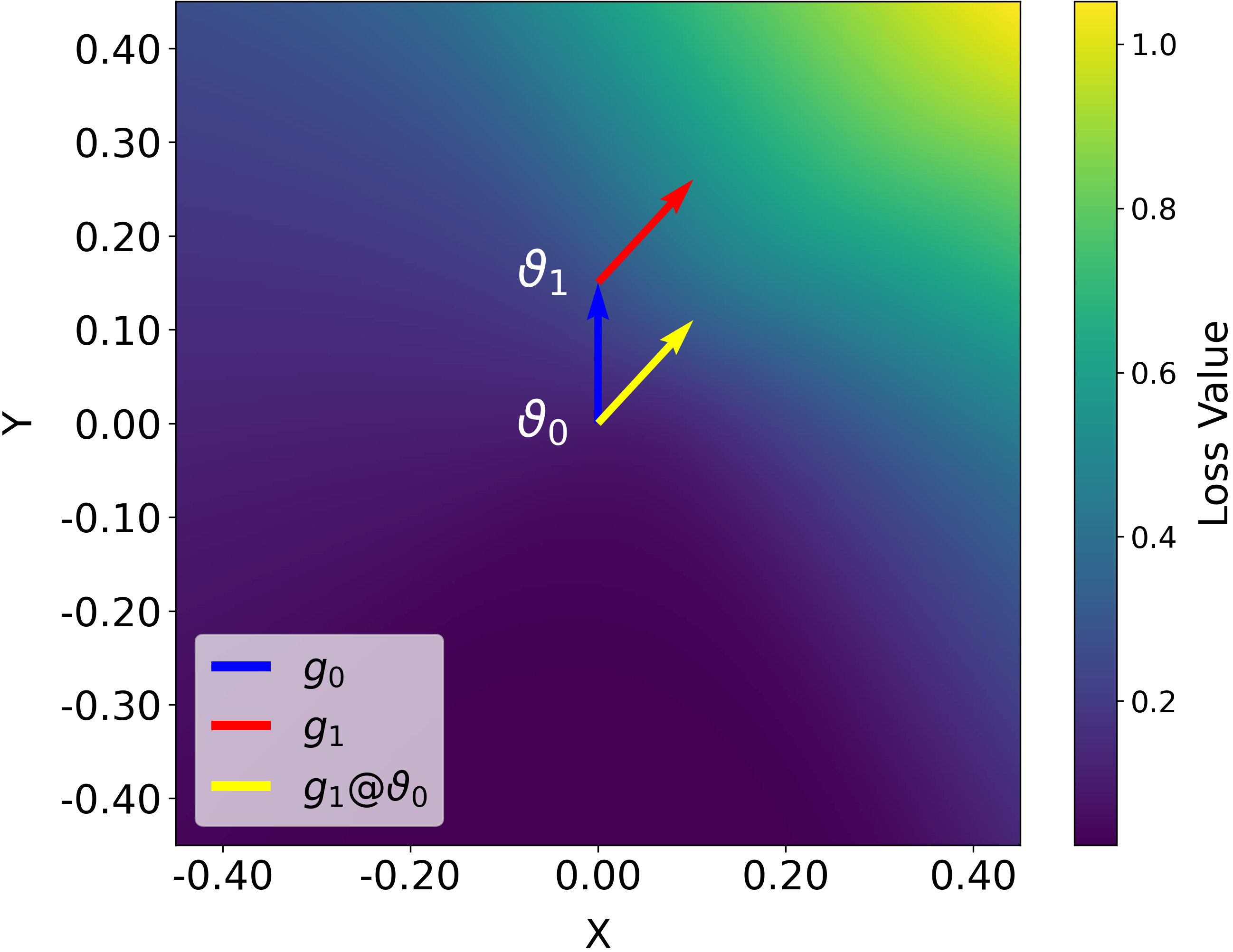}
        \caption{Epoch 150}
    \end{subfigure}
    \hfill
    \begin{subfigure}[t]{0.20\linewidth}
        \centering
        \includegraphics[width=\linewidth]{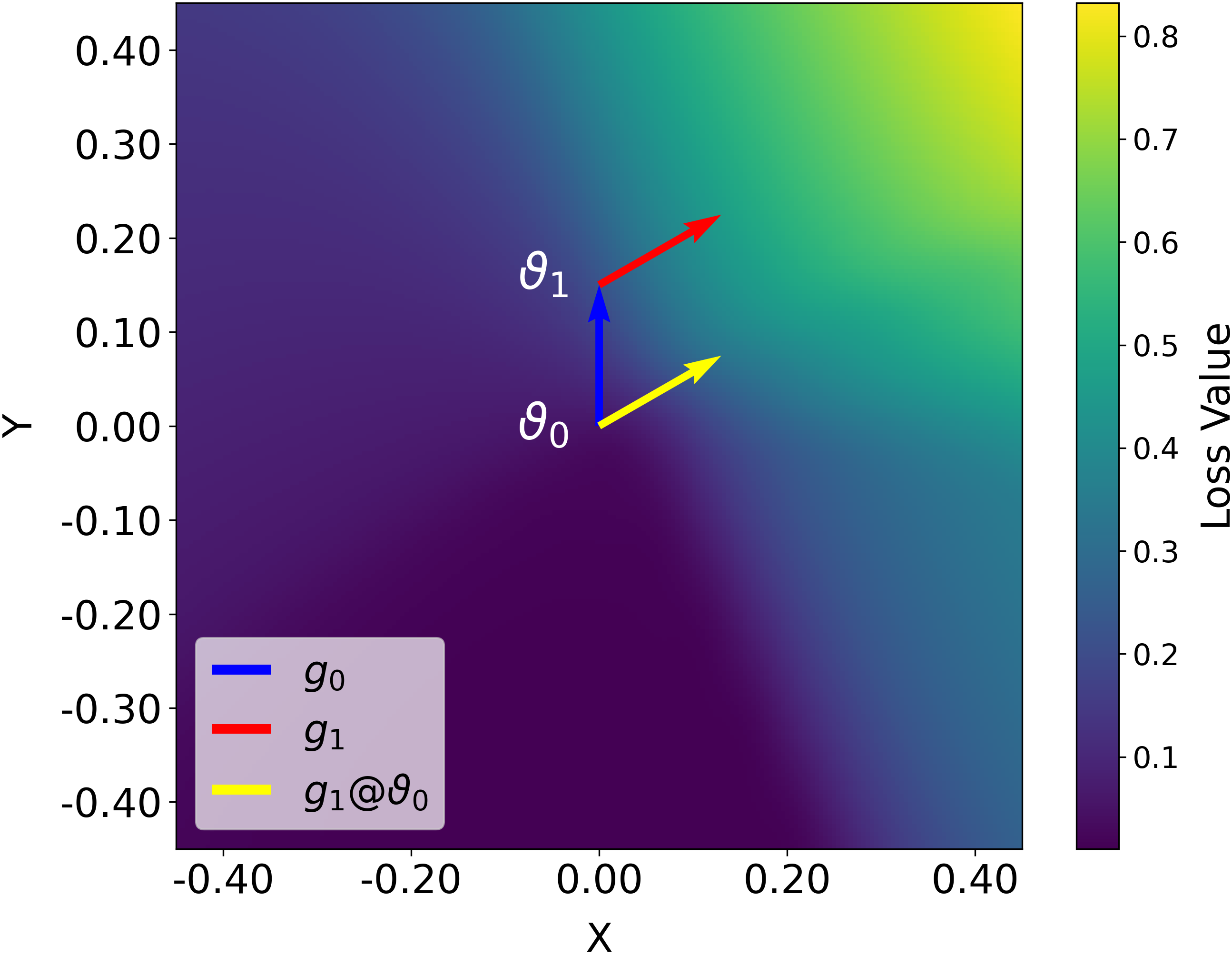}
        \caption{Epoch 200}
    \end{subfigure}    
    \caption{Visualization of loss surface during training: ResNet-18 trained on CIFAR-10.}
    \label{fig:loss_surface_cifar10_resnet18}
\end{figure}
\vspace{-20pt}
\begin{figure}[H]
    \vspace{-5pt}
    \centering
    \begin{subfigure}[t]{0.20\linewidth}
        \centering
        \includegraphics[width=\linewidth]{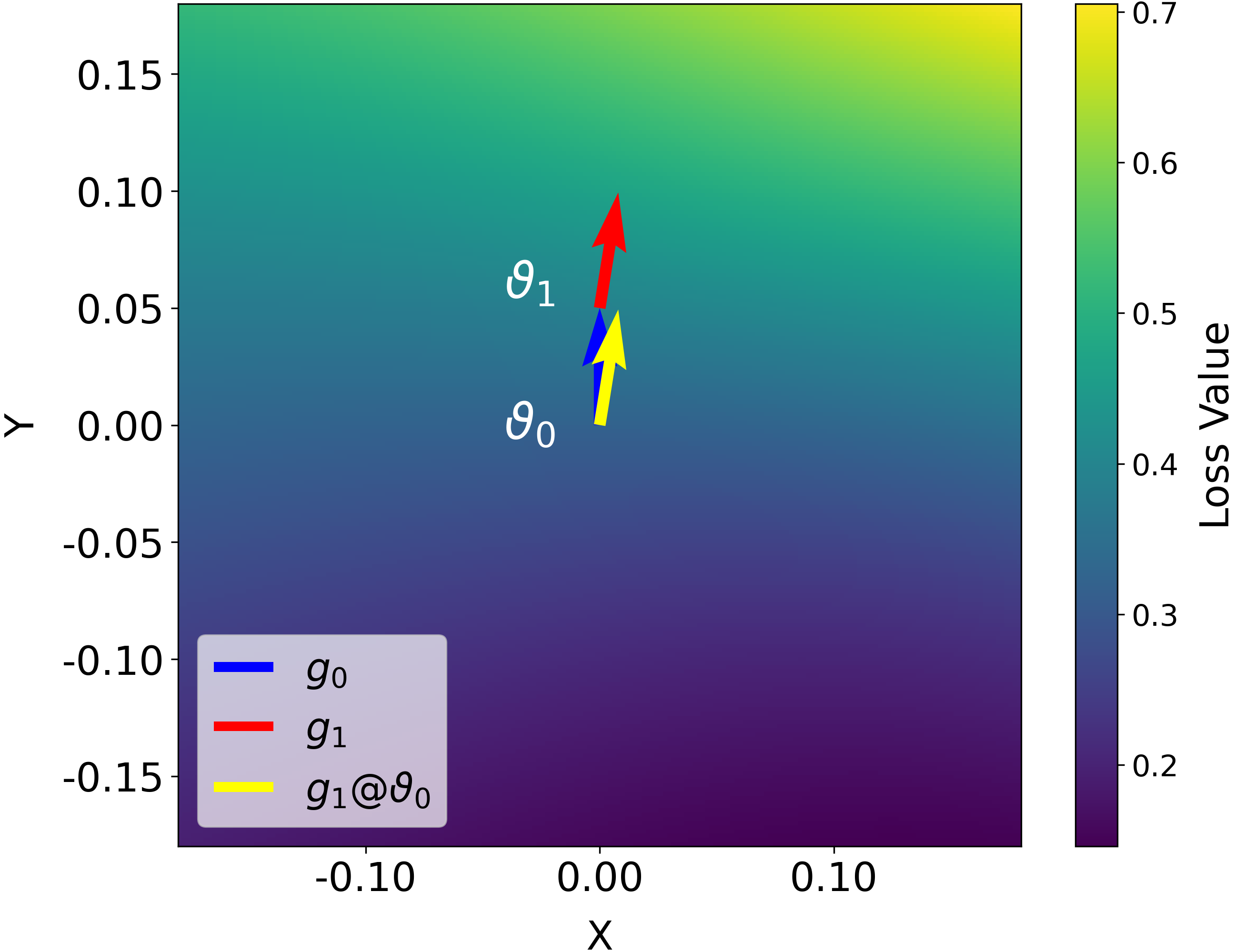}
        \caption{Epoch 50}
    \end{subfigure}
    \hfill
    \begin{subfigure}[t]{0.20\linewidth}
        \centering
        \includegraphics[width=\linewidth]{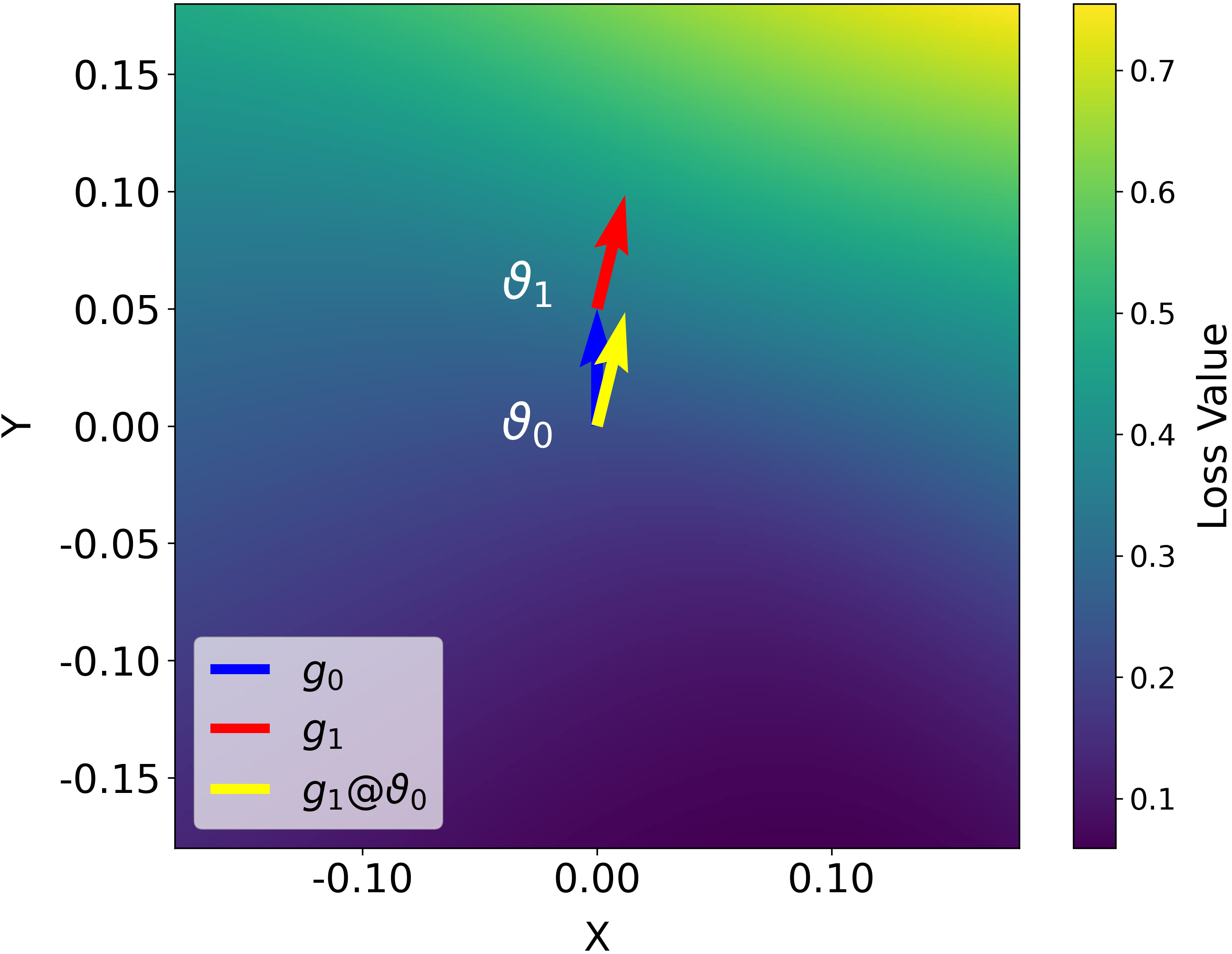}
        \caption{Epoch 100}
    \end{subfigure}
    \hfill
    \begin{subfigure}[t]{0.20\linewidth}
        \centering
        \includegraphics[width=\linewidth]{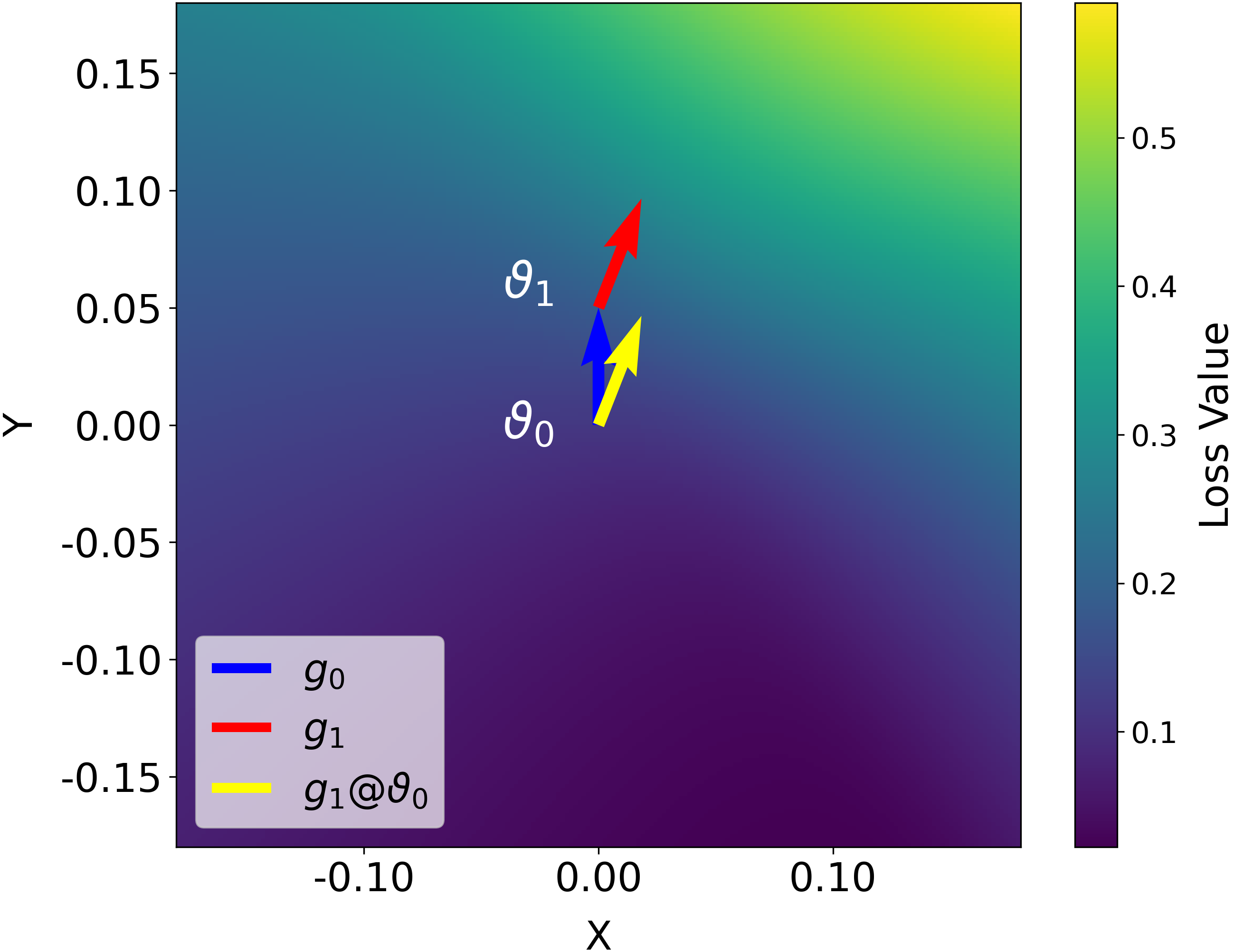}
        \caption{Epoch 150}
    \end{subfigure}
    \hfill
    \begin{subfigure}[t]{0.20\linewidth}
        \centering
        \includegraphics[width=\linewidth]{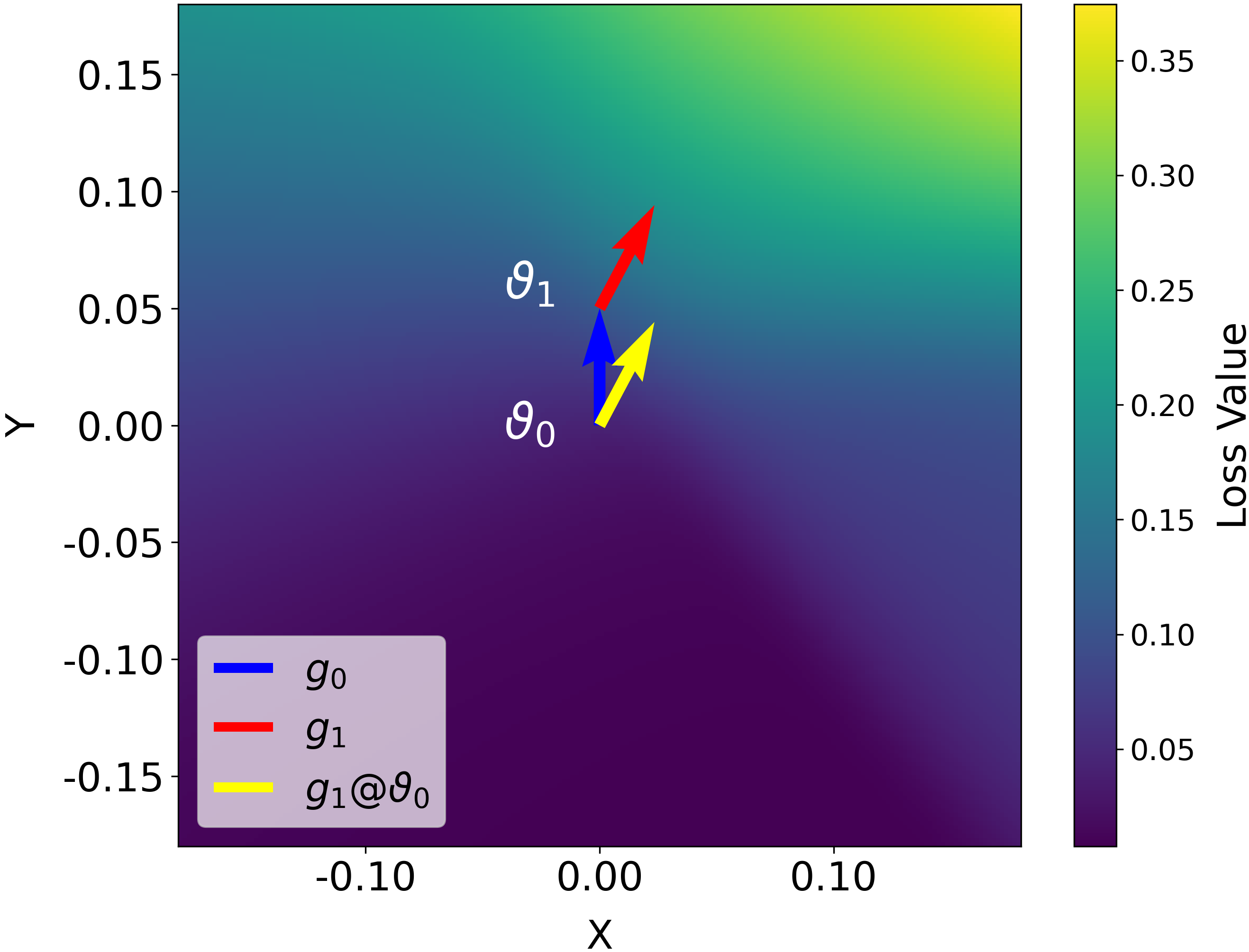}
        \caption{Epoch 200}
    \end{subfigure}    
    \caption{Visualization of loss surface during training: DenseNet-121 trained on CIFAR-10.}
    \label{fig: loss_surface_cifar10_densenet }
\end{figure}
\FloatBarrier
\vspace{-10pt}
In this section, we provide more visualizations of the loss surfaces of different datasets and models during SAM training. The results are shown in Figure \ref{fig: loss_surface_cifar100_vgg11 }, \ref{fig:loss_surface_cifar100_resnet18}, \ref{fig:loss_surface_cifar100_vitt}, \ref{fig:loss_surface_cifar10_resnet18}, \ref{fig: loss_surface_cifar10_densenet }, and \ref{fig:loss_surface_200}. The gradient of the ascent point better approximates the direction toward the maximum within the neighborhood than the local gradient. However, the approximation can often be inaccurate and unstable during training.
\begin{figure}[t]
    \centering
    \begin{subfigure}{0.35\linewidth}
        \centering
        \includegraphics[width=\linewidth]{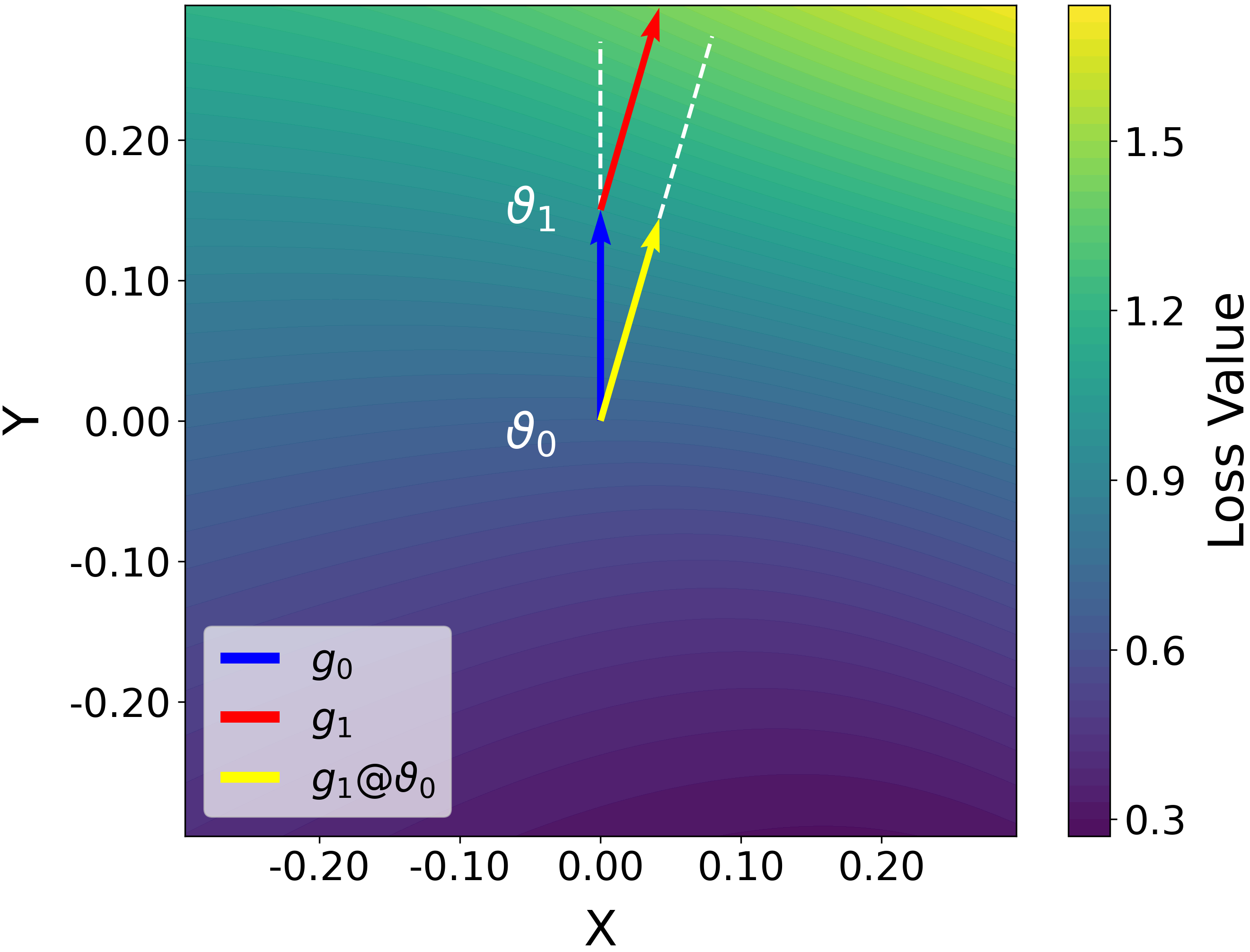}
        \caption{Visualization of single-step SAM}
    \end{subfigure}
    \hspace{40pt}
    \begin{subfigure}{0.35\textwidth}
        \includegraphics[width=\textwidth]{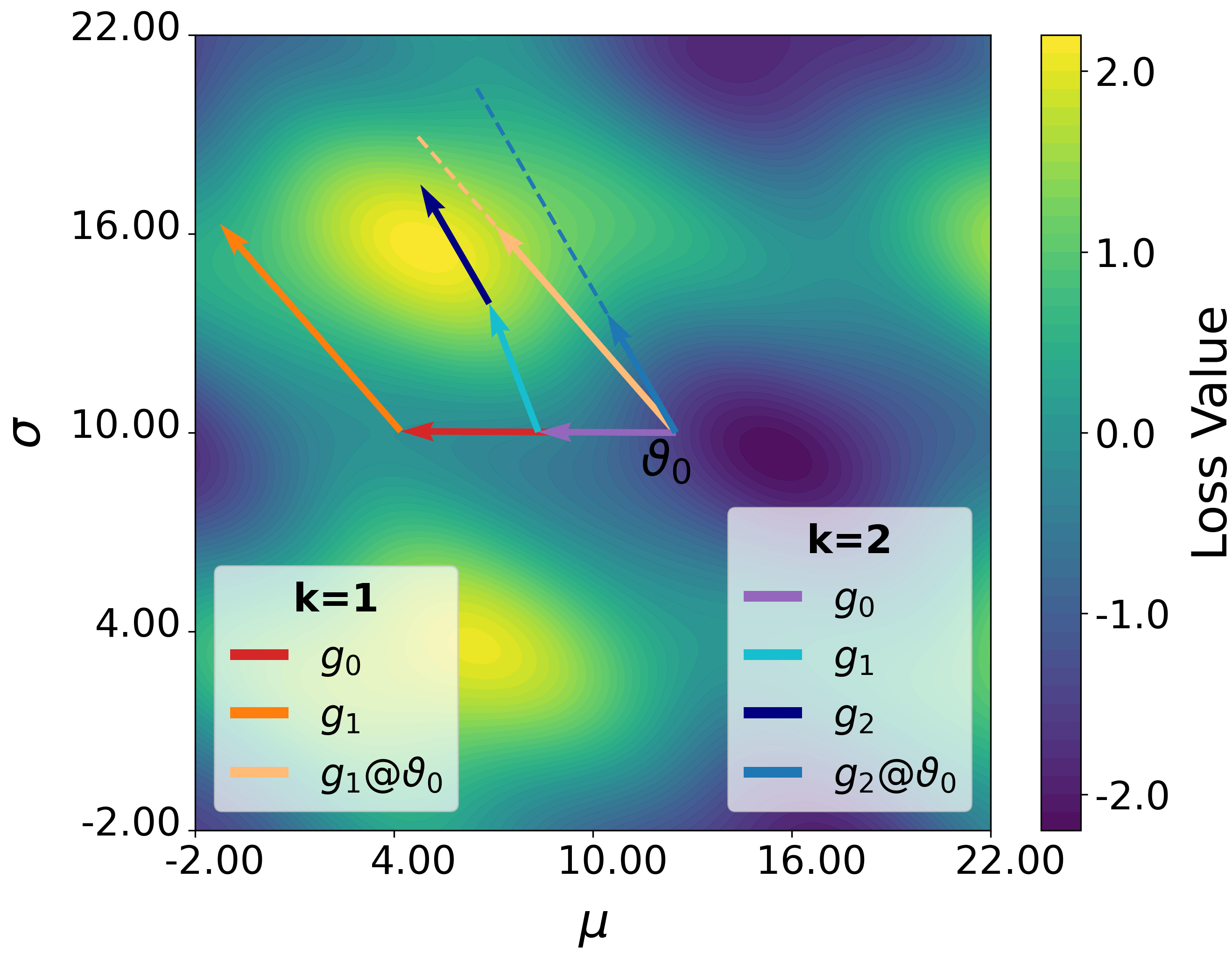}
        \caption{Simulation of multi-step SAM}
    \end{subfigure}
    \caption{(a) Visualization of the local loss surface of single-step SAM. The visualization procedure follows the same steps as in Figure~\ref{fig: single-step sam}. Data is collected at the first iteration of the 100th epoch in training ResNet-18 on CIFAR-100. \textbf{We see that $\boldsymbol{g_1\!@\vartheta_0}$ (i.e., $\boldsymbol{g_1}$ applied to $\boldsymbol{\vartheta_0}$) points clearly closer to the direction from $\boldsymbol{\vartheta_0}$ toward the maximum within the local neighborhood than $\boldsymbol{g_0}$}. he targeted direction is roughly from the origin to the upper-right corner in the figure. The loss along $g_1\!@\vartheta_0$ (i.e., $L(\vartheta_0 + \rho_m \cdot {g_1}/{\|g_1\|})$) is higher than that along $g_0$ (i.e., $L(\vartheta_0 + \rho_m \cdot {g_0}/{\|g_0\|})$), for sufficiently large $\rho_m$. (b) A simulation of multi-step SAM on a test function. \uline{The gradient at the multi-step ascent point, when applied to the current parameters, may be an inferior approximation of the direction toward the maximum}.}
    \label{fig:loss_surface_200}
\end{figure}  

\begin{figure}[H]
    \vspace{5pt}
    \centering
    \begin{subfigure}[t]{0.23\linewidth}
        \centering
        \includegraphics[width=\linewidth]{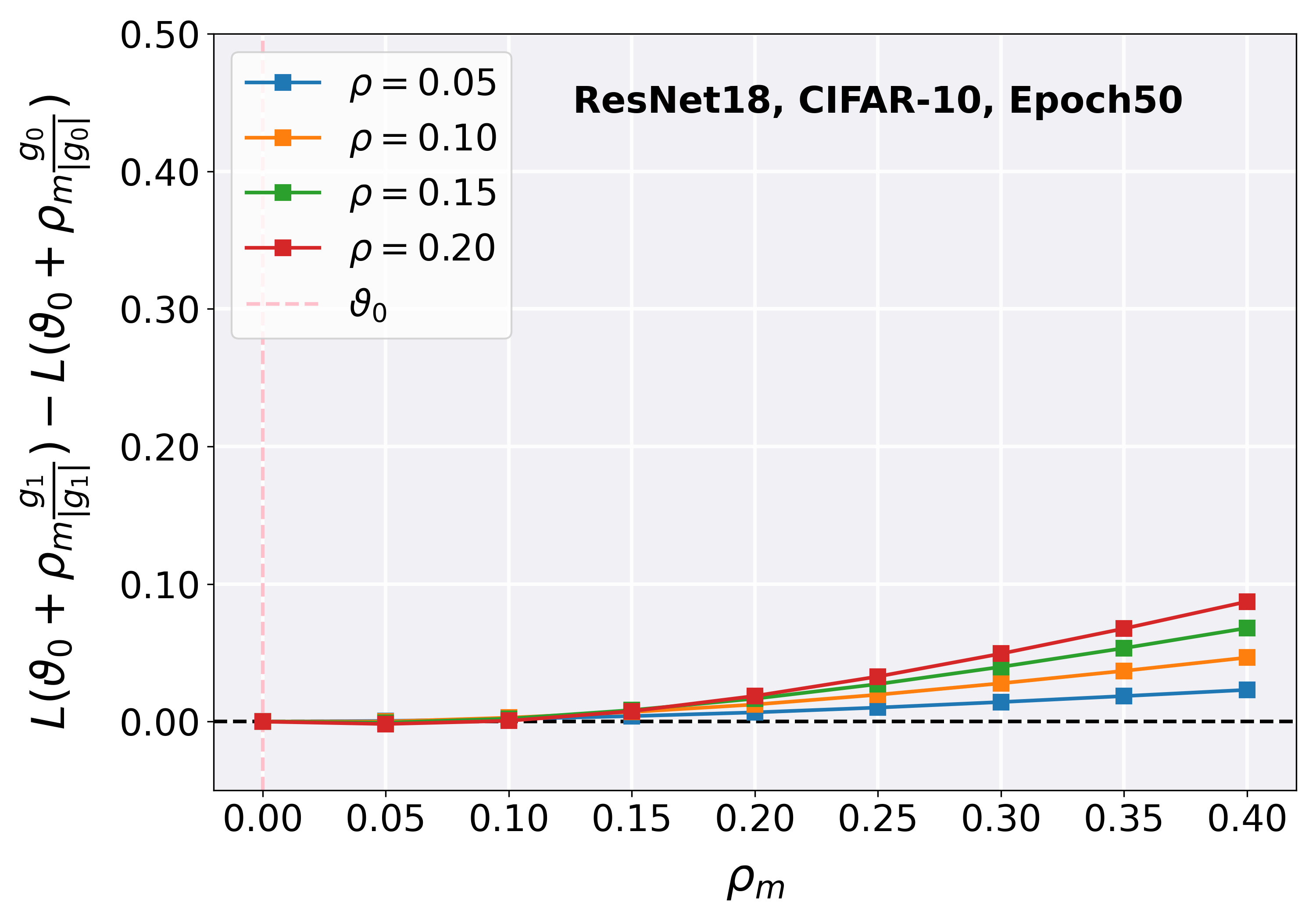}
        \caption{Epoch 50}
    \end{subfigure}
    \hfill
    \begin{subfigure}[t]{0.23\linewidth}
        \centering
        \includegraphics[width=\linewidth]{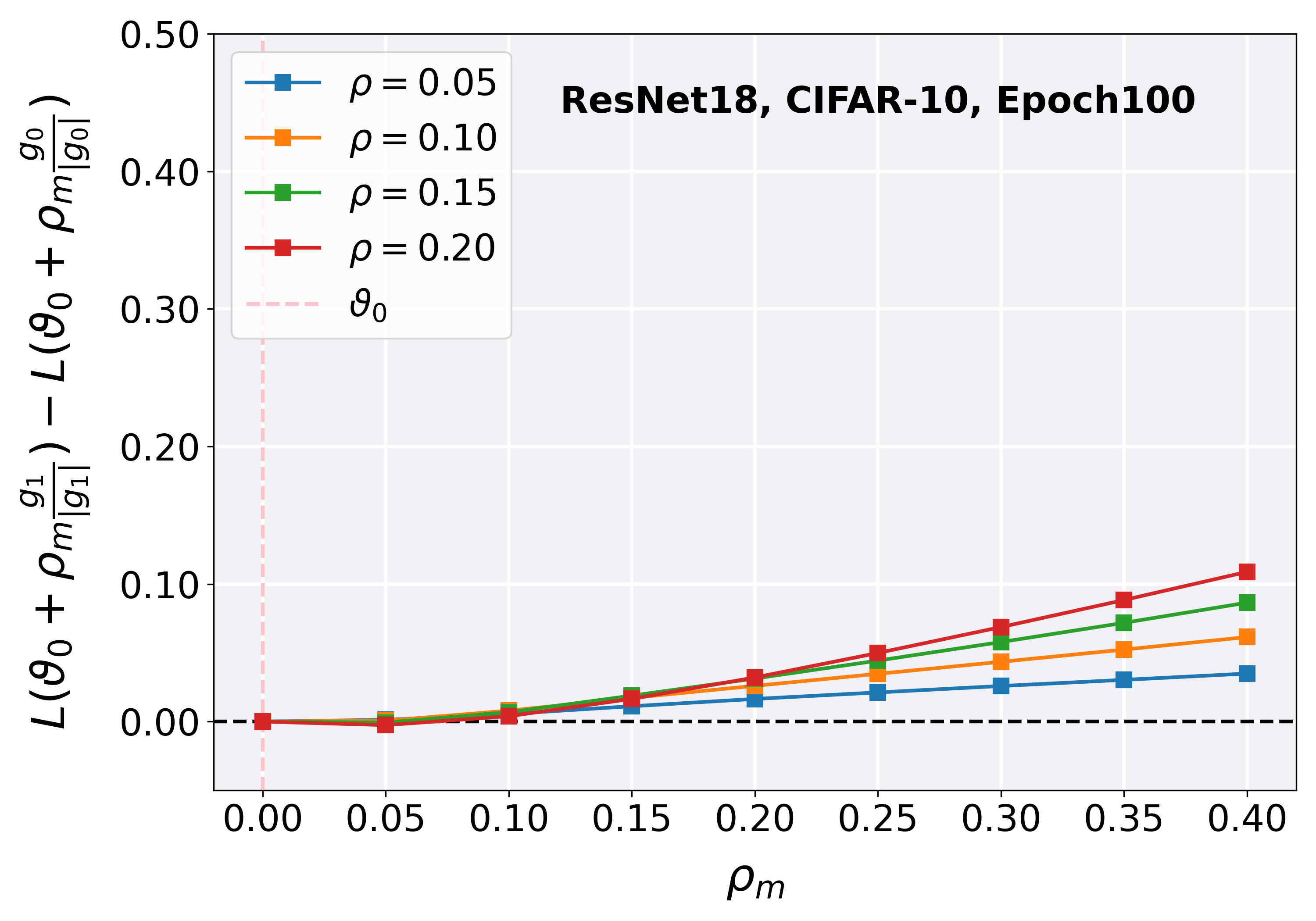}
        \caption{Epoch 100}
    \end{subfigure}
    \hfill
    \begin{subfigure}[t]{0.23\linewidth}
        \centering
        \includegraphics[width=\linewidth]{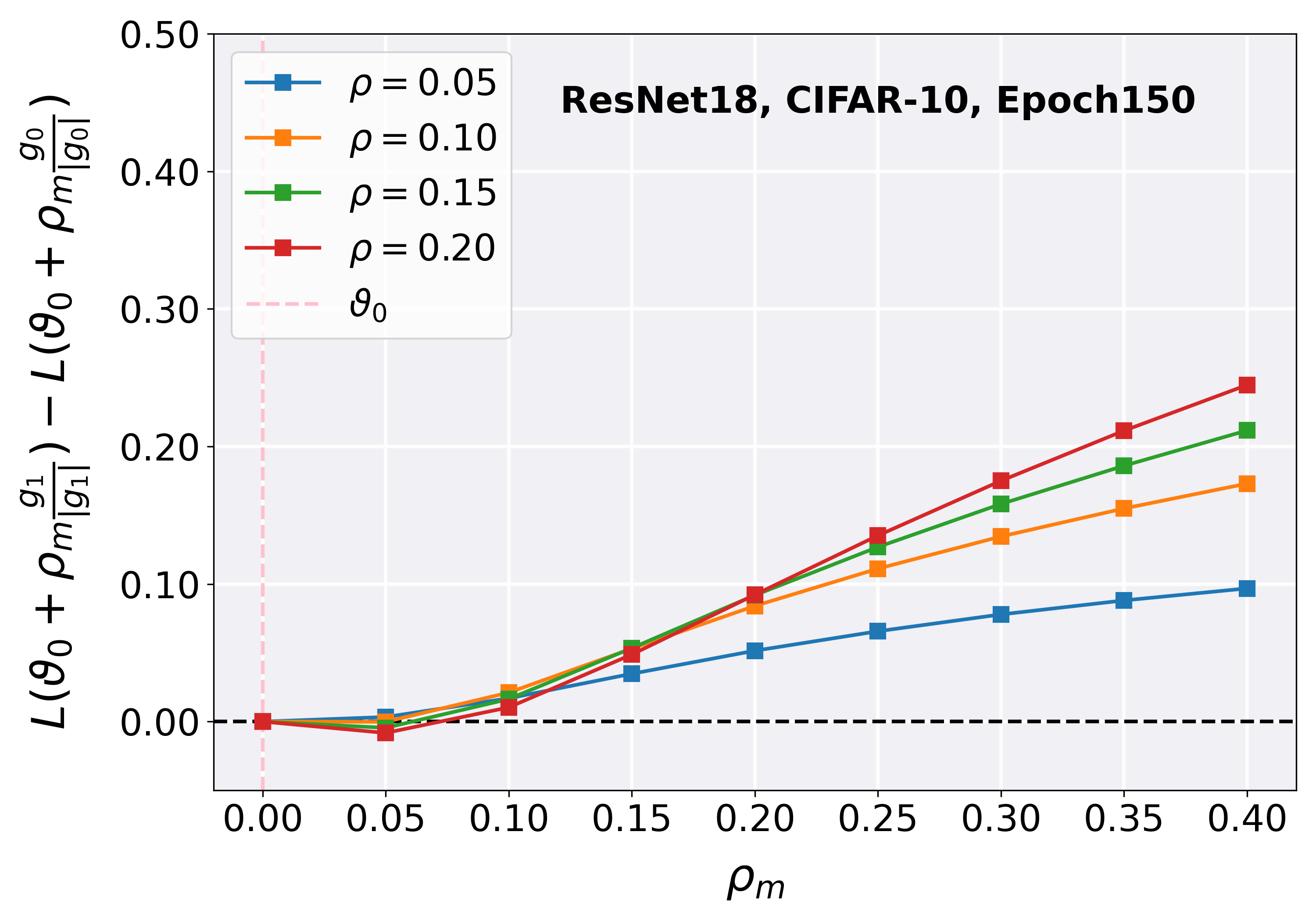}
        \caption{Epoch 150}
    \end{subfigure}
    \hfill
    \begin{subfigure}[t]{0.23\linewidth}
        \centering
        \includegraphics[width=\linewidth]{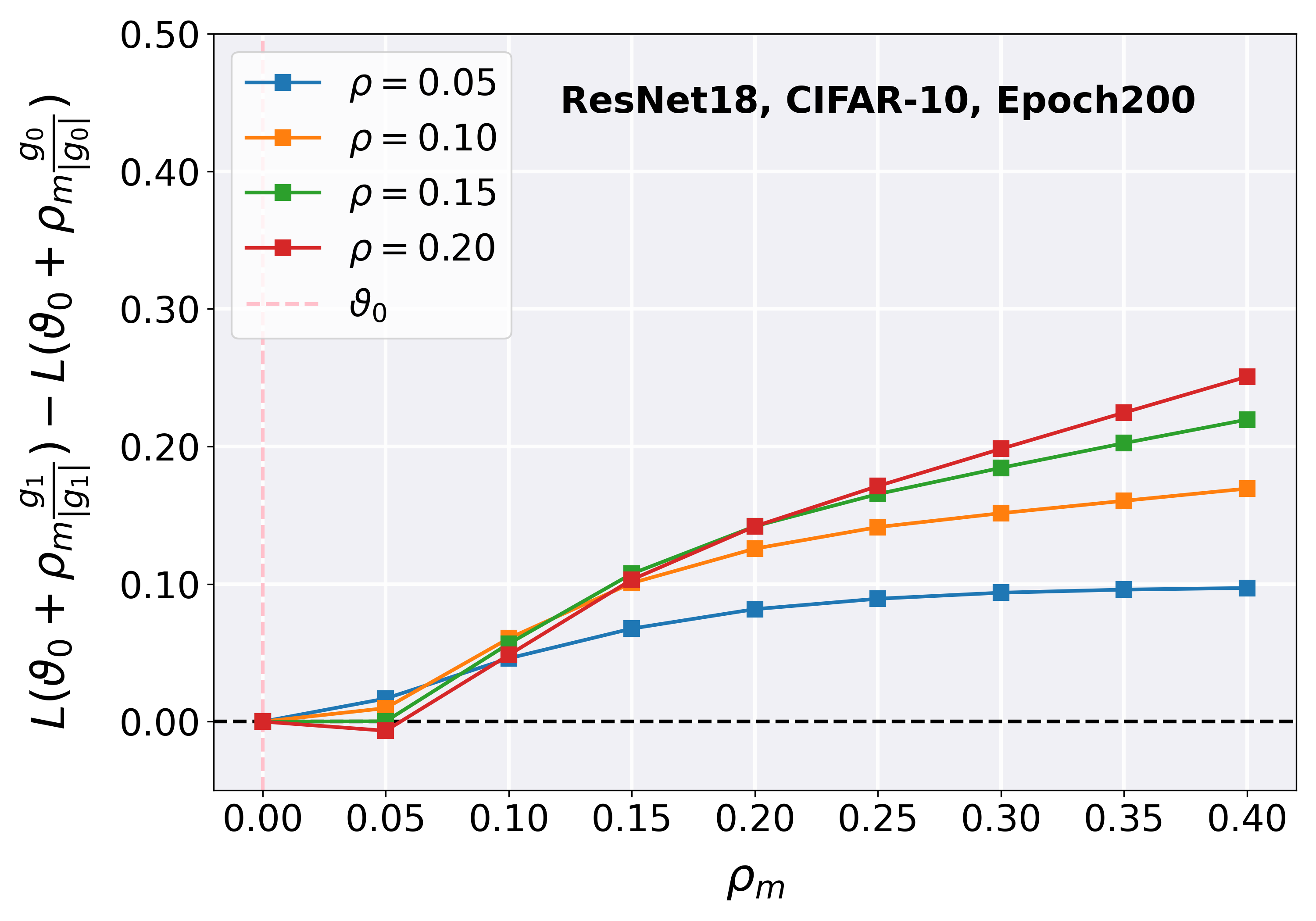}
        \caption{Epoch 200}
    \end{subfigure}    
    \caption{Visualization of $L(\vartheta_0 + \rho_m g_1 / ||g_1||) - L(\vartheta_0 + \rho_m g_0 / ||g_0||)$ during training.}
    \label{fig:compare_loss_cifar10_resnet18}
\end{figure}
\vspace{-20pt}
\begin{figure}[H]
    \centering
    \begin{subfigure}[t]{0.23\linewidth}
        \centering
        \includegraphics[width=\linewidth]{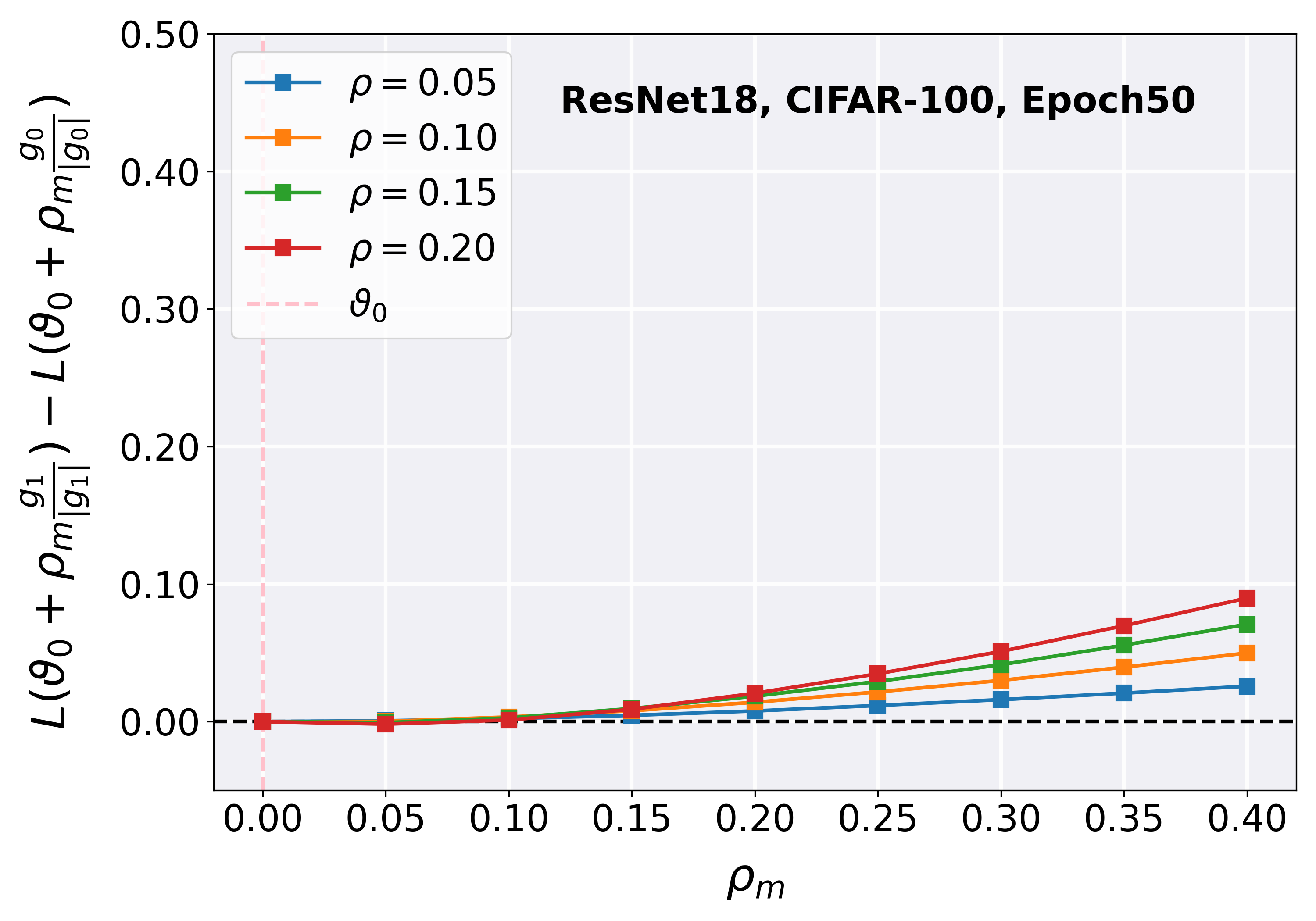}
        \caption{Epoch 50}
    \end{subfigure}
    \hfill
    \begin{subfigure}[t]{0.23\linewidth}
        \centering
        \includegraphics[width=\linewidth]{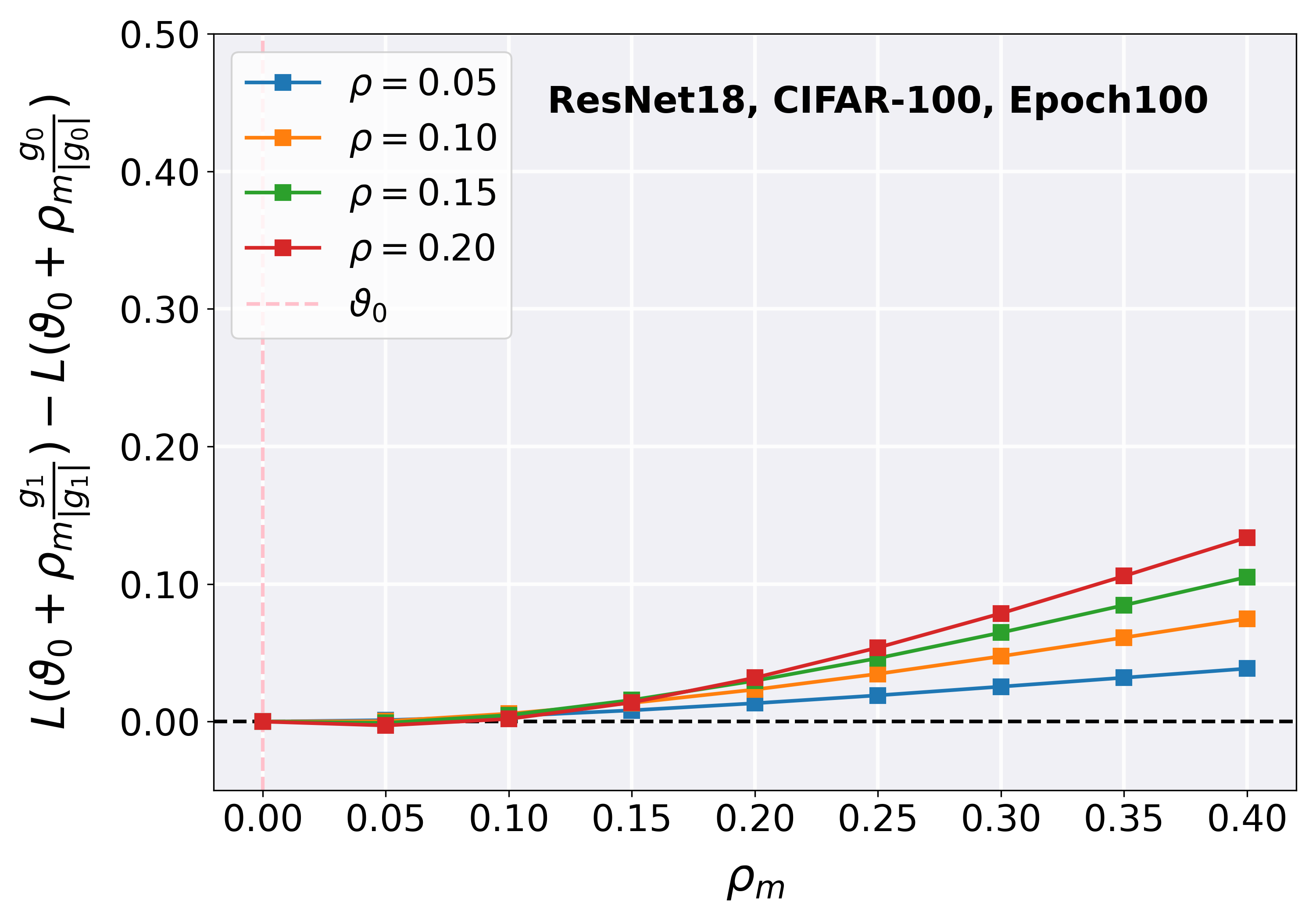}
        \caption{Epoch 100}
    \end{subfigure}
    \hfill
    \begin{subfigure}[t]{0.23\linewidth}
        \centering
        \includegraphics[width=\linewidth]{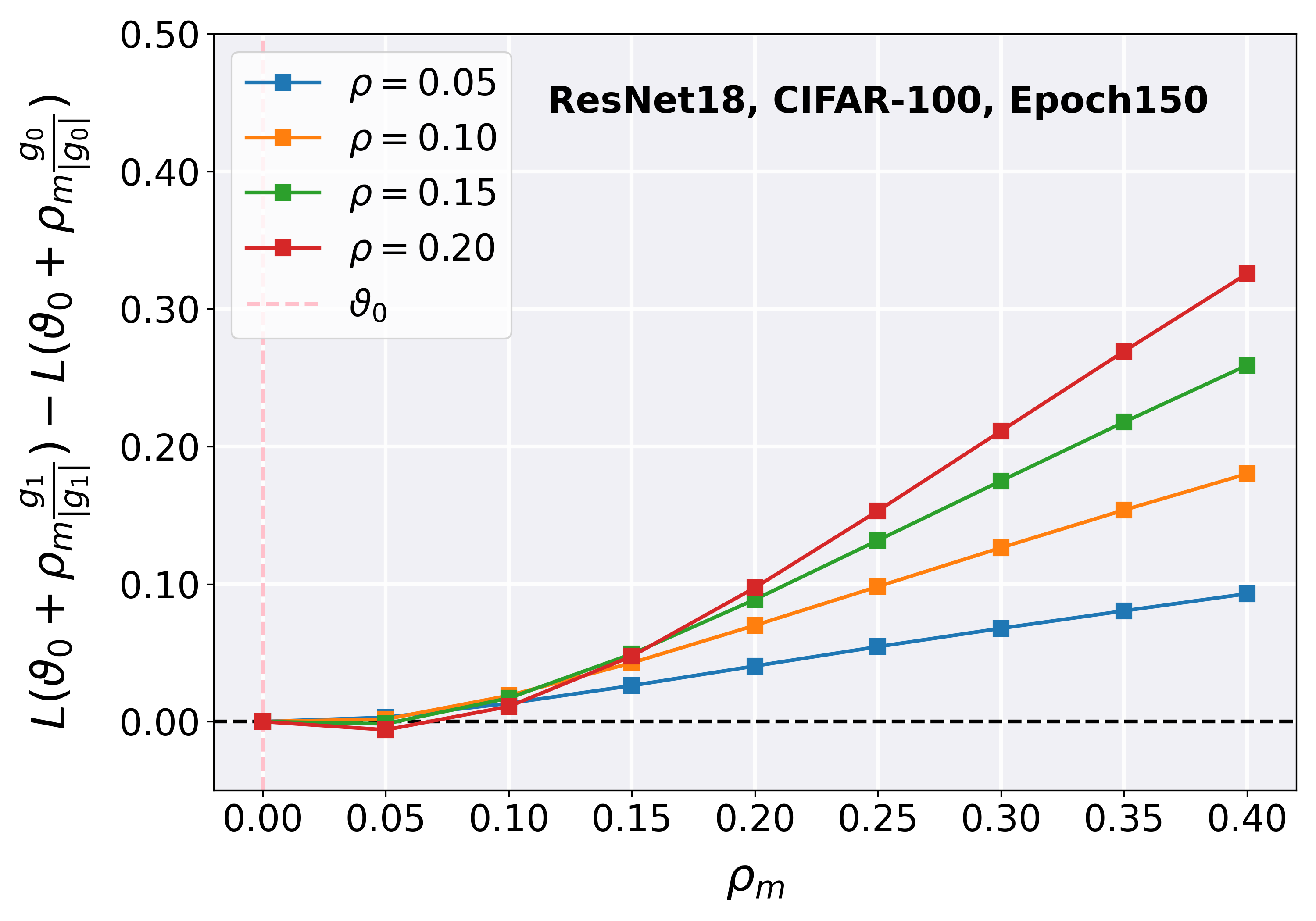}
        \caption{Epoch 150}
    \end{subfigure}
    \hfill
    \begin{subfigure}[t]{0.23\linewidth}
        \centering
        \includegraphics[width=\linewidth]{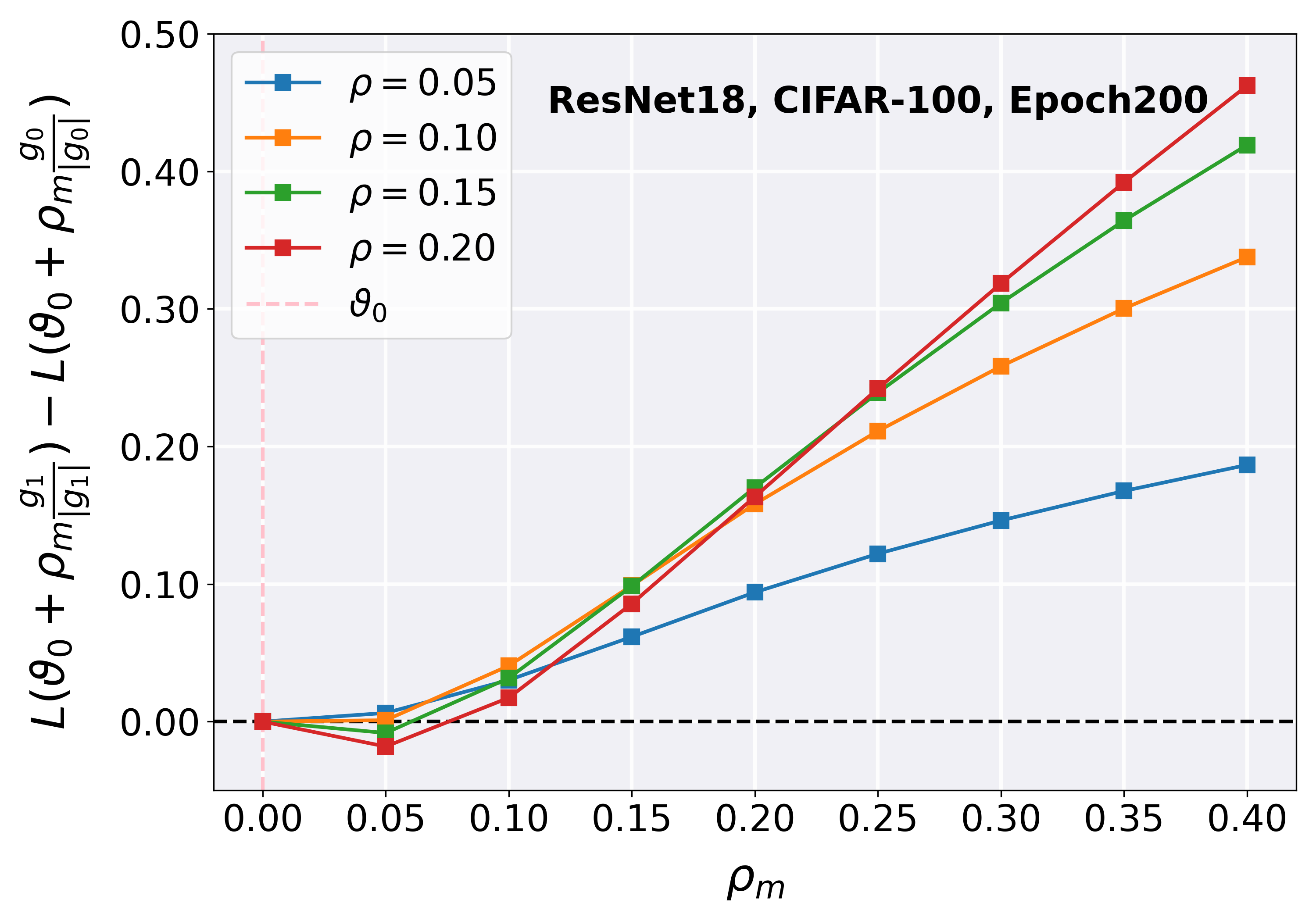}
        \caption{Epoch 200}
    \end{subfigure}    
    \caption{Visualization of $L(\vartheta_0 + \rho_m g_1 / ||g_1||) - L(\vartheta_0 + \rho_m g_0 / ||g_0||)$ during training.}
    \label{fig:compare_loss_cifar100_resnet18}
\end{figure}
\vspace{-10pt}
We compare $L(\vartheta_0 + \rho_m g_1 / ||g_1||)$ and $L(\vartheta_0 + \rho_m g_0 / ||g_0||)$ across different $\rho$ and $\rho_m$ in Figure ~\ref{fig:compare_loss_cifar10_resnet18} and \ref{fig:compare_loss_cifar100_resnet18}. We gradually increase $\rho_m$ for each $\rho$, while keeping $\vartheta_0$ and $g_0$ fixed. As can be seen, $L(\vartheta_0 + \rho_m g_1 / ||g_1||)$ becomes larger than $L(\vartheta_0 + \rho_m g_0 / ||g_0||)$ when $\rho_m$ is relatively large. This provides further evidence for our claim that along $g_1$ one can find a higher loss than long $g_0$ for $\vartheta_0$.


\section{Proofs}\label{sec:proofs}
\setcounter{proposition}{0}
\begin{proposition}
Let $L: \mathbb{R}^n \to \mathbb{R}$ be a twice continuously differentiable function that admits a second-order approximation at $\vartheta_0$ with:
\begin{itemize}[itemsep=0pt, topsep=0pt]
    \item $\nabla L(\vartheta_0) = g_0$, which does not equal to 0;
    \item $\nabla L\left(\vartheta_0 + \rho \frac{g_0}{\|g_0\|}\right) = g_1$, which is not parallel to $g_0$;
    \item Hessian $H = \nabla^2 L(\vartheta_0)$ positive definite.
\end{itemize}
Then there exists $\rho_0 > 0$ such that for all $\rho_m > \rho_0$:
\begin{itemize}[leftmargin=30pt, itemsep=0pt, topsep=0pt]
    \item[1)] \fbox{SAM better approximates the direction toward the maximum in the vicinity than SGD}
    \[
    L\left(\vartheta_0 + \rho_m \frac{g_1}{\|g_1\|}\right) > L\left(\vartheta_0 + \rho_m \frac{g_0}{\|g_0\|}\right);
    \]
    \item[2)] \fbox{There exist better approximations than SAM} there exists $\alpha \in \mathbb{R}$ such that
    \[
    L\left(\vartheta_0 + \rho_m \frac{g_\alpha}{\|g_\alpha\|}\right) > L\left(\vartheta_0 + \rho_m \frac{g_1}{\|g_1\|}\right),
    \quad g_\alpha = \alpha g_1 + (1-\alpha) g_0.
    \]
\end{itemize}
\end{proposition}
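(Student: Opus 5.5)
Treat $L$ as exactly quadratic around $\vartheta_0$, which is what ``admits a second-order approximation'' means here:
\[
L(\vartheta_0+\delta)=L(\vartheta_0)+g_0^\top\delta+\tfrac12\delta^\top H\delta .
\]
Its gradient is affine, so $g_1 = g_0 + \frac{\rho}{\|g_0\|}Hg_0$. For any unit vector $u$, $L(\vartheta_0+\rho_m u)-L(\vartheta_0)=\rho_m\, g_0^\top u + \tfrac12\rho_m^2\, u^\top H u$. For large $\rho_m$ the quadratic term dominates. The plan is therefore to compare directions by their Rayleigh quotients $R(u)=u^\top Hu$. If $R(u)>R(u')$ strictly, then for $\rho_m$ larger than an explicit threshold, namely $\rho_m > 2\,(g_0^\top u'-g_0^\top u)/(R(u)-R(u'))$ whenever the numerator is positive, the loss along $u$ exceeds the loss along $u'$. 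Both parts thus reduce to strict Rayleigh-quotient inequalities, and $\rho_0$ is taken as the maximum of the two thresholds.

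\textbf{Part 1.} Set $u_0=g_0/\|g_0\|$ and $u_1=g_1/\|g_1\|$. Since $g_1$ is not parallel to $g_0$, $g_0$ is not an eigenvector of $H$. Work in the plane spanned by $g_0$ and $Hg_0$, and write $c=\rho/\|g_0\|>0$, so that $g_1=(I+cH)g_0$. The target inequality is
\[
\frac{g_0^\top (I+cH)H(I+cH)g_0}{g_0^\top (I+cH)^2 g_0} > \frac{g_0^\top H g_0}{g_0^\top g_0}.
\]
Write $m_k=g_0^\top H^k g_0$. After cross-multiplying, the inequality becomes
\[
(m_1+2cm_2+c^2m_3)\,m_0 > m_1\,(m_0+2cm_1+c^2m_2),
\]
that is, $2c\,(m_0m_2-m_1^2)+c^2\,(m_0m_3-m_1m_2)>0$.

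Both brackets are positive. The first is Cauchy--Schwarz applied to $g_0$ and $Hg_0$, and it is strict because $g_0$ is not an eigenvector. For the second, $m_1m_2\le m_0m_3$ follows from the log-convexity of $k\mapsto m_k$; in the eigenbasis, $m_k=\sum_i w_i\lambda_i^k$ with $w_i\ge0$ and $\lambda_i>0$ because $H$ is positive definite. Concretely, $m_0m_3-m_1m_2=\tfrac12\sum_{i,j}w_iw_j(\lambda_i-\lambda_j)(\lambda_i^2-\lambda_j^2)\ge 0$. The same sum representation also gives the first bracket. This identity is the main technical step, and positive definiteness is used exactly here.

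\textbf{Part 2.} Restrict to the plane $P=\mathrm{span}(g_0,g_1)$. The restriction of the quadratic form to the unit circle in $P$ is maximized at the top eigenvector $e$ of the compressed $2\times2$ matrix $H_P$. It then suffices to show that $u_1$ is not that maximizer, that is, $u_1$ is not an eigenvector of $H_P$. Suppose it were. We already know $R(u_1)>R(u_0)$, so $u_1$ would have to be the top eigenvector and $u_0$ would then have a nonzero component along the bottom eigenvector.

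I expect this exclusion step to be the main obstacle. The first thing I would try is a monotonicity argument. Extend the family to $g(c)=(I+cH)g_0$ for $c\in[0,\infty)$. Repeating the Part 1 computation with a general $c$ shows that $R(g(c)/\|g(c)\|)$ is strictly increasing in $c$. For $c'$ slightly larger than $c$, $g(c')$ is not exactly in $P$ in general, so this does not by itself give a competitor inside $P$. The practical fix is to use the first-order effect within $P$. Differentiate $R$ along the slerp curve in $P$ at $u_1$, that is, compute $2\,u_1^{\perp\top}Hu_1$, where $u_1^\perp$ is the unit vector in $P$ orthogonal to $u_1$ pointing away from $u_0$. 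Show this derivative is strictly positive, which follows if $u_1^\perp$ has positive inner product with $H u_1$ projected onto $P$.

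If that derivative computation gets messy, I would fall back on a simpler exclusion. Assume $H_P u_1=\mu u_1$ and write $g_0$ and $Hg_0$ in terms of $u_1$ and the bottom eigenvector of $H_P$. Then check that $g_1=g_0+cHg_0$ cannot be parallel to $u_1$ unless $g_0$ is also an eigenvector, which would contradict the non-parallel hypothesis.

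Once it is established that $u_1$ is not the maximizer, some $u=g_\alpha/\|g_\alpha\|\in P$ has $R(u)>R(u_1)$. Every direction in $P$, up to sign, is of the form $g_\alpha/\|g_\alpha\|$ for some real $\alpha$, or is the limit $\alpha\to\infty$, which is the direction of $g_1-g_0$; in that case pick a nearby finite $\alpha$ by continuity of $R$. The large-$\rho_m$ domination argument from the first paragraph then finishes Part 2.
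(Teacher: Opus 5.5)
Your Part 1 is the paper's argument. The paper also substitutes $g_1=g_0+\rho Hg_0/\|g_0\|$ into the Rayleigh quotient and cross-multiplies. It gets exactly your two brackets: $m_0m_2-m_1^2>0$ by strict Cauchy--Schwarz, and $m_0m_3-m_1m_2\ge 0$ by Chebyshev's sum inequality, which your double-sum identity proves.

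For Part 2, the paper essentially carries out your first idea. It sets $f(\alpha)=g_\alpha^\top Hg_\alpha/\|g_\alpha\|^2$ and computes $f'(1)$. Its sign is that of $(m_0m_2-m_1^2)+c\,(m_0m_3-m_1m_2)+c^2(m_1m_3-m_2^2)$, whose coefficients are nonnegative with the first strictly positive, so some $\alpha>1$ beats SAM. You leave that derivative as a to-do.

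Your fallback, however, is a complete and genuinely different proof, and it is the one to keep. Because $P=\mathrm{span}(g_0,g_1)=\mathrm{span}(g_0,Hg_0)$, one has $Hg_0=H_Pg_0$, hence $g_1=(I+cH_P)g_0$. Suppose $u_1$ were an eigenvector of $H_P$ with eigenvalue $\mu>0$. Then $g_0=(I+cH_P)^{-1}g_1=g_1/(1+c\mu)$, contradicting non-parallelism. This needs neither the detour through the top/bottom eigenvector nor any moment inequality. It also covers the case where $H_P$ is a multiple of the identity. Your handling of the sign ambiguity and of the missing direction $g_1-g_0$ then correctly converts the top eigenvector of $H_P$ into an admissible $g_\alpha$.

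The trade-off is this. The paper's derivative route locates the improvement: $\alpha>1$, i.e.\ extrapolating past SAM, consistent with XSAM searching beyond $\alpha=1$. Your argument only gives existence of some real $\alpha$.

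In exchange, your reading of $L$ as exactly quadratic, together with your explicit threshold $\rho_m>2(g_0^\top u'-g_0^\top u)/(R(u)-R(u'))$, is what actually justifies ``the $\rho_m^2$ term dominates.'' The paper's $o(\rho_m^2)$ remainders, read literally for a general $C^2$ function, only control small $\rho_m$, not large $\rho_m$.
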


\subsection{Proof of the First Conclusion}
\begin{proof} $ $ \newline

1. Since $L$ admits a second-order approximation at $\theta_0$:
\begin{align*}
L\left(\vartheta_0 + \rho_m \frac{g_1}{\|g_1\|}\right) &= L(\vartheta_0) + \rho_m \frac{g_0^\top g_1}{\|g_1\|} + \frac{\rho_m^2}{2} \frac{g_1^\top H g_1}{\|g_1\|^2} + o(\rho_m^2),\\
L\left(\vartheta_0 + \rho_m \frac{g_0}{\|g_0\|}\right) &= L(\vartheta_0) + \rho_m \|g_0\| + \frac{\rho_m^2}{2} \frac{g_0^\top H g_0}{\|g_0\|^2} + o(\rho_m^2).
\end{align*}

2. For sufficiently large $\rho_m$, the $\rho_m^2$ term dominates. Thus, we need to show: 
\[
\frac{g_1^\top H g_1}{\|g_1\|^2} > \frac{g_0^\top H g_0}{\|g_0\|^2}.
\]

3. Expand $g_1$ as the gradient of $L$ (which admits a second-order approximation) at $\vartheta_0 + \rho\frac{g_0}{\|g_0\|}$:
\[
g_1 = g_0 + \rho H \frac{g_0}{\|g_0\|} + o(\rho).
\]

4. Compute the numerator and denominator to the second order:
\begin{align*}
g_1^\top H g_1 &= \left(g_0 + \rho H \frac{g_0}{\|g_0\|} + o(\rho) \right)^\top H \left(g_0 + \rho H \frac{g_0}{\|g_0\|} + o(\rho) \right)\\ &= g_0^\top H g_0 + 2\rho \frac{g_0^\top H^2 g_0}{\|g_0\|} + \rho^2 \frac{g_0^\top H^3 g_0}{\|g_0\|^2} + o(\rho^2), \\[10pt]
\|g_1\|^2 &= \left\|g_0 + \rho H \frac{g_0}{\|g_0\|} + o(\rho) \right\|^2 = \|g_0\|^2 + 2\rho \frac{g_0^\top H g_0}{\|g_0\|} + \rho^2 \frac{g_0^\top H^2 g_0}{\|g_0\|^2} + o(\rho^2).
\end{align*}

4. Ignoring higher-order terms $o(\rho^2)$, the inequality becomes:
\[
\frac{g_0^\top H g_0 + 2\rho \frac{g_0^\top H^2 g_0}{\|g_0\|} + \rho^2 \frac{g_0^\top H^3 g_0}{\|g_0\|^2}}{\|g_0\|^2 + 2\rho \frac{g_0^\top H g_0}{\|g_0\|} + \rho^2 \frac{g_0^\top H^2 g_0}{\|g_0\|^2}} > \frac{g_0^\top H g_0}{\|g_0\|^2}.
\]

\pagebreak[3]

5. Multiply both sides by the positive denominators (since $H$ is positive definite):
\begin{align*}
&\left(g_0^\top H g_0 + 2\rho \frac{g_0^\top H^2 g_0}{\|g_0\|} + \rho^2 \frac{g_0^\top H^3 g_0}{\|g_0\|^2}\right)\|g_0\|^2 > g_0^\top H g_0 \left(\|g_0\|^2 + 2\rho \frac{g_0^\top H g_0}{\|g_0\|} + \rho^2 \frac{g_0^\top H^2 g_0}{\|g_0\|^2}\right).
\end{align*}

6. Cancel common terms and divide by $\rho > 0$:
\[
2\left(\|g_0\| g_0^\top H^2 g_0 - \frac{(g_0^\top H g_0)^2}{\|g_0\|}\right) + \rho \left(g_0^\top H^3 g_0 - \frac{g_0^\top H g_0 g_0^\top H^2 g_0}{\|g_0\|^2}\right) > 0.
\]

7. Term verification:
\begin{itemize}[leftmargin=10pt]
    \item First term: 
    \[
    \|g_0\|^2 g_0^\top H^2 g_0 - (g_0^\top H g_0)^2 > 0.
    \]
    This follows from the strict Cauchy-Schwarz inequality for the inner product, since $g_0$ and $H g_0$ are not parallel by assumption.

    \item Second term:
    \[
    \|g_0\|^2 g_0^\top H^3 g_0 - g_0^\top H g_0 g_0^\top H^2 g_0 \geq 0.
    \]
    Let $H = \sum_{i=1}^n \lambda_i v_i v_i^\top$ be the spectral decomposition with $\lambda_i > 0$. Expressing $g_0 = \sum_{i=1}^n \alpha_i v_i$:
    \[
    \|g_0\|^2 g_0^\top H^3 g_0 - g_0^\top H g_0 g_0^\top H^2 g_0 = \left(\sum \alpha_i^2\right)\left(\sum \lambda_i^3 \alpha_i^2\right) - \left(\sum \lambda_i \alpha_i^2\right)\left(\sum \lambda_i^2 \alpha_i^2\right).
    \]
    The nonnegativity follows from Chebyshev's sum inequality applied to the series $\{\lambda_i\}$ and $\{\lambda_i^2\}$.
\end{itemize}

8. Conclusion:

\vspace{5pt}
\hspace{10pt}Since both terms are non-negative and the first is strictly positive, the inequality holds. 

\vspace{5pt}
\hspace{10pt}For sufficiently large $\rho_m$, the $\rho_m^2$ term dominates the Taylor expansion. 

\vspace{5pt}
\hspace{10pt}That is, $\exists \rho_0 > 0$ such that $\forall \rho_m > \rho_0$:
\[
L\left(\vartheta_0 + \rho_m \frac{g_1}{\|g_1\|}\right) > L\left(\vartheta_0 + \rho_m \frac{g_0}{\|g_0\|}\right).
\]

\end{proof}

\vspace{10pt}
\begin{remark} 
    If $\rho_m$ is too small, the first-order term will dominate. The first-order term has
    \[
    \frac{g_0^\top g_1}{\|g_1\|}=\|g_0\|\cos(\phi)<\|g_0\|, 
    \]
    where $\cos \phi = \frac{g_0^T g_1}{ \|g_0\| \|g_1\| } < 1$ since $g_0$ and $g_1$ do not parallel. Thus, if $\rho_m$ is too small, it will have 
    \[ L\left(\vartheta_0 + \rho_m \frac{g_1}{\|g_1\|}\right) < L\left(\vartheta_0 + \rho_m \frac{g_0}{\|g_0\|}\right). 
    \]
    From another perspective, this must hold because $g_0$ indicates the steepest ascent direction at $\vartheta_0$.
\end{remark}

\vspace{10pt}
\begin{remark} 
    $\rho_m$ needs to be large only to ensure that the difference in the second-order term outweighs the first-order term, not intended to be too large to become impractical in real-world applications.
\end{remark}


\subsection{Proof of the Second Conclusion}
\begin{proof} $ $ \newline

1. Since $L$ admits a second-order approximation at $\theta_0$:
\begin{align*}
L\left(\vartheta_0 + \rho_m \frac{g_\alpha}{\|g_\alpha\|}\right) &= L(\vartheta_0) + \rho_m \frac{g_0^\top g_\alpha}{\|g_\alpha\|} + \frac{\rho_m^2}{2} \frac{g_\alpha^\top H g_\alpha}{\|g_\alpha\|^2} + o(\rho_m^2), \\
L\left(\vartheta_0 + \rho_m \frac{g_1}{\|g_1\|}\right) &= L(\vartheta_0) + \rho_m \frac{g_0^\top g_1}{\|g_1\|} + \frac{\rho_m^2}{2} \frac{g_1^\top H g_1}{\|g_1\|^2} + o(\rho_m^2). \\
\end{align*}

2. Define the quadratic ratio:
\[
f(\alpha) = \frac{g_\alpha^\top H g_\alpha}{\|g_\alpha\|^2}.
\]
\hspace{10pt}At boundary points:
\[
f(1) = \frac{g_1^\top H g_1}{\|g_1\|^2}, \quad f(0) = \frac{g_0^\top H g_0}{\|g_0\|^2}.
\]

3. The derivative is:
\[
f'(\alpha) = \frac{2(g_1 - g_0)^\top H g_\alpha \cdot \|g_\alpha\|^2 - 2(g_\alpha^\top H g_\alpha)(g_1 - g_0)^\top g_\alpha}{\|g_\alpha\|^4}.
\]
\hspace{10pt}At $\alpha = 1$:
\[
f'(1) = \frac{2}{\|g_1\|^4}\left[(g_1 - g_0)^\top H g_1 \cdot \|g_1\|^2 - (g_1^\top H g_1)(g_1 - g_0)^\top g_1\right].
\]

4. Using $g_1 = g_0 + \rho H \frac{g_0}{\|g_0\|} + o(\rho)$:
\[
g_1 - g_0 = \rho H \frac{g_0}{\|g_0\|} + o(\rho).
\]
\hspace{10pt}Substituting into $f'(1)$:
\[
f'(1) = \frac{2\rho}{\|g_1\|^4\|g_0\|}\left[g_0^\top H^2 g_1 \cdot \|g_1\|^2 - (g_1^\top H g_1)(g_0^\top H g_1)\right] + o(\rho).
\]

5. Further substituting $g_1 = g_0 + \rho H \frac{g_0}{\|g_0\|} + o(\rho)$ in:
\begin{align*}
&\|g_1\|^2 = \left\|g_0 + \rho H \frac{g_0}{\|g_0\|} + o(\rho) \right\|^2 = \|g_0\|^2 + 2\rho \frac{g_0^\top H g_0}{\|g_0\|} + \rho^2 \frac{g_0^\top H^2 g_0}{\|g_0\|^2} + o(\rho^2), \\[15pt]
&g_0^\top H^2 g_1 \|g_1\|^2 \\ &= \left(g_0^\top H^2 g_0 + \rho \frac{g_0^\top H^3 g_0}{\|g_0\|} + o(\rho) \right) \left(\|g_0\|^2 + 2\rho \frac{g_0^\top H g_0}{\|g_0\|} + \rho^2 \frac{g_0^\top H^2 g_0}{\|g_0\|^2} + o(\rho^2) \right)  \\
&= g_0^\top H^2 g_0 \|g_0\|^2 + \rho \left(2 \frac{g_0^\top H^2 g_0 \cdot g_0^\top H g_0}{\|g_0\|} + \|g_0\| g_0^\top H^3 g_0\right) \\
&\quad + \rho^2 \left(\frac{g_0^\top H^2 g_0 \cdot g_0^\top H^2 g_0}{\|g_0\|^2} + 2 \frac{g_0^\top H^3 g_0 \cdot g_0^\top H g_0}{\|g_0\|^2}\right) + o(\rho^2), \\[15pt]
&(g_1^\top H g_1)(g_0^\top H g_1)\\ &= \left(g_0^\top H g_0 + 2\rho \frac{g_0^\top H^2 g_0}{\|g_0\|} + \rho^2 \frac{g_0^\top H^3 g_0}{\|g_0\|^2} + o(\rho^2)\right)  \left(g_0^\top H g_0 + \rho \frac{g_0^\top H^2 g_0}{\|g_0\|} + o(\rho)\right) \\
&= (g_0^\top H g_0)^2 + \rho \left(3 \frac{g_0^\top H g_0 \cdot g_0^\top H^2 g_0}{\|g_0\|}\right) + \rho^2 \left(\frac{g_0^\top H^3 g_0 \cdot g_0^\top H g_0}{\|g_0\|^2} + 2 \frac{(g_0^\top H^2 g_0)^2}{\|g_0\|^2}\right) + o(\rho^2).
\end{align*}

6. Combining terms:
\begin{align*}
g_0^\top H^2 g_1 \|g_1\|^2 - (g_1^\top H g_1)(g_0^\top H g_1) &= \left(g_0^\top H^2 g_0 \|g_0\|^2 - (g_0^\top H g_0)^2\right) \\
&\quad + \rho \left(\|g_0\| g_0^\top H^3 g_0 - \frac{g_0^\top H g_0 \cdot g_0^\top H^2 g_0}{\|g_0\|}\right) \\
&\quad + \rho^2 \left(\frac{g_0^\top H^3 g_0 \cdot g_0^\top H g_0 - (g_0^\top H^2 g_0)^2}{\|g_0\|^2}\right) + o(\rho^2).
\end{align*}

7. Sign analysis:
\begin{itemize}[leftmargin=10pt]
\item Zero-order term $g_0^\top H^2 g_0 \|g_0\|^2 - (g_0^\top H g_0)^2$: Strictly positive by Cauchy-Schwarz inequality since $H$ is positive definite and $g_0$ and $H g_0$ are not parallel.

\item First-order term $\|g_0\|^2 g_0^\top H^3 g_0 - g_0^\top H g_0 \cdot g_0^\top H^2 g_0$: Non-negative by Chebyshev's sum inequality for the sequences $\{\lambda_i\}$ and $\{\lambda_i^2\}$ where $H = \sum \lambda_i v_i v_i^\top$.

\item Second-order term $g_0^\top H^3 g_0 \cdot g_0^\top H g_0 - (g_0^\top H^2 g_0)^2$: Non-negative by  Chebyshev's sum inequality.
\end{itemize}

8. Conclusion:

\leftskip=10pt
The term is strictly positive, which means $f'(1) > 0$. So, there exists $\alpha > 1$ such that $f(\alpha) > f(1)$. For sufficiently large $\rho_m$, where the second-order term dominates, this further implies:
\[
L\left(\vartheta_0 + \rho_m \frac{g_\alpha}{\|g_\alpha\|}\right) > L\left(\vartheta_0 + \rho_m \frac{g_1}{\|g_1\|}\right).
\]
\end{proof}

\vspace{10pt}
\begin{remark}
    The $\rho_m$ threshold exists only to ensure the second-order term dominates the first-order term. In practice, moderate values suffice to observe XSAM's advantage over SAM. 
\end{remark}

\vspace{10pt}
\begin{remark}
    Practically, the loss surface may not admit a second-order approximation, and the maximum does not necessarily lie around $\alpha=1$. So we search a relatively large range of $\alpha$, e.g., in $[0, 2]$, to make it more generally applicable. Additionally, we use spherical linear combination instead, for a more uniform distribution of searched directions and better coverage. 
\end{remark}

\section{Computational Overhead}\label{sec:computational_overhead}

The evaluation of each $\alpha$ will involve a forward pass of the neural network for calculating $L(\vartheta_0 + v(\alpha) \cdot \rho_{m})$. So, the cost of the dynamic search of $\alpha^*$ roughly equals the number of samples of $\alpha$ times the cost of a forward pass. Typically, we use $20\!\sim\!40$ samples to search for $\alpha^*$. If this were required at every iteration, it would incur a considerable computational burden. Fortunately, frequent updates of $\alpha^*$ are unnecessary. According to our experiments, $\alpha^*$ is fairly stable and changes smoothly during training, as depicted in Figure~\ref{fig:loss_alpha_epoch_3D} and Figure~\ref{fig: Additional Visualization of alpha}. In experiments, we by default adopt an epoch-wise update strategy: $\alpha^*$ is updated at the first iteration of each epoch and then kept fixed for the rest. Each epoch typically contains over 400 iterations. SAM requires $k+1$ forward and $k+1$ backward passes per iteration. So, the computational overhead of XSAM is roughly $40/(400\cdot2\cdot(k+1)) \le 0.025$, i.e., the increased cost is typically no more than $2.5\%$ when compared to SAM, which is negligible. A straightforward comparison of runtimes is presented in Table~\ref{table:sam-dsam-time}. The runtime of XSAM is nearly identical to that of SAM, indicating that the additional computational overhead is negligible.

\section{Additional Experimental Details for Results in Sections~\ref{sec:exp}}\label{sec:experiment_details}

\subsection{Details about the 2D Test Function} 
\label{sec:2D_test_function}

The test function used is defined by:
\begin{equation} \label{eq:2D_test_function}
    \!\!L(\theta) = L(\mu, \sigma) = -\log \left( 0.7 e^{-K_1(\mu, \sigma)/1.8^2} \!+ 0.3 e^{-K_2(\mu, \sigma)/1.2^2} \right),
\end{equation}
where $K_i(\mu, \sigma)$ is the KL divergence between two univariate Gaussian distributions, 
\begin{equation}
    K_i(\mu, \sigma) = \log \frac{\sigma_i}{\sigma} + \frac{\sigma^2 + (\mu - \mu_i)^2}{2\sigma_i^2} - \frac{1}{2}.
\end{equation}
with $(\mu_1, \sigma_1) = (20, 30)$ and $(\mu_2, \sigma_2) = (-20, 10)$. It features a sharp minimum at around $(-16.8, 12.8)$ with a value of $0.28$ and a flat minimum at around $(19.8, 29.9)$ with a value of $0.36$. 

The visualized training trajectories in Figure~\ref{fig:toy} share the same start point $(-6.0, 10.0)$ and run for $400$ steps. The learning rate is $5$ (the gradient scale is small), momentum is $0.9$, $\rho$ is $6.0$, and $\rho_m$ is $18.0$. The points passed at each step were recorded to plot the trajectories. 


\subsection{Experiment Setup}

\textbf{CIFAR-10, CIFAR-100, and Tint-ImageNet}. We use RandomCrop and CutOut \citep{devries2017improved} augmentations for CIFAR-10 and CIFAR-100 while using RandomResizedCrop and RandomErasing \citep{zhong2020random} augmentations for Tiny-ImageNet, since we believe improvements over strong augmentations can be more valuable. We use a batch size of $125$ for all the datasets, such that the sample size of each dataset is divisible by the batch size, while near the typical choice of $128$. We adopt the typical choice, SGD with a momentum of $0.9$, as the base optimizer, which carries the true gradient descent to $\theta$. All models are trained for 200 epochs, while the cosine annealing learning rate schedule is adopted in all settings. 

We run each experiment 5 times with different random seeds and calculate the mean and standard deviation. Each experiment was conducted using a single NVIDIA Tesla V100 GPU.

\textbf{ResNet50 on ImageNet}. We evaluate our method on the larger dataset, ImageNet. Standard data augmentation techniques are applied, including resizing, cropping, random horizontal flipping, and normalization. We take SGD as the base optimizer with a cosine learning rate decay. 

\textbf{IWSLT2014}. We conduct experiments on the Neural Machine Translation (NMT) task, specifically German–English translation on the IWSLT2014 dataset \citep{cettolo2014report}, using the Transformer architecture following the FAIRSEQ \citep{ott2019fairseq}. We use AdamW as the base optimizer due to its better performance on the transformer. 

\textbf{ViT-Ti}. We further use a lightweight Vision Transformer (ViT-Ti) model on CIFAR-100 to evaluate our method. Note that following \citep{zhao2022penalizing}, we do not use Cutout augmentation for CIFAR-100 when trained by ViT-Ti. We use AdamW as the base optimizer.



\subsection{Hyperparameter Details}


\vspace{-10pt}
\begin{table}[H]
\centering
\caption{Hyperparameter details for Results in Table~\ref{table:single-step}.}
\label{tab:hyperparameters}
\resizebox{0.83\textwidth}{!}{
\begin{tabular}{l| c c c c| c c c c| c c c c}
\toprule
 & \multicolumn{4}{c|}{CIFAR-10} & \multicolumn{4}{c|}{CIFAR-100} & \multicolumn{4}{c}{Tiny-ImageNet} \\
\cmidrule(lr){2-5} \cmidrule(lr){6-9} \cmidrule(lr){10-13} 
VGG-11 & SGD & SAM & WSAM & XSAM & SGD & SAM & WSAM & XSAM  & SGD & SAM & WSAM & XSAM \\
\midrule
Epoch & \multicolumn{4}{c|}{200} & \multicolumn{4}{c|}{200} & \multicolumn{4}{c}{200}\\
Batch size & \multicolumn{4}{c|}{125} & \multicolumn{4}{c|}{125} & \multicolumn{4}{c}{125} \\
Initial learning rate & \multicolumn{4}{c|}{0.05} & \multicolumn{4}{c|}{0.05} & \multicolumn{4}{c}{0.05}\\
Momentum & \multicolumn{4}{c|}{0.9} & \multicolumn{4}{c|}{0.9} & \multicolumn{4}{c}{0.9} \\
Weight decay & \multicolumn{4}{c|}{$1 \times 10^{-3}$} & \multicolumn{4}{c|}{$1 \times 10^{-3}$} & \multicolumn{4}{c}{$1 \times 10^{-3}$} \\
$\rho$ & -- & 0.15 & 0.15 & 0.15 & -- & 0.15 & 0.15 & 0.15 & -- & 0.20 & 0.20 & 0.20\\
$\rho_m$ & -- & --  & -- & 0.30 & -- & -- & -- & 0.30 & -- & -- & -- & 1.20 \\
$\alpha$ & 0.0 & 1.0 & 0.75 &  -- & 0.0 & 1.0 & 1.0 & -- & 0.0 & 1.0 & 1.0 & --  \\
\midrule
ResNet-18 & SGD & SAM & WSAM & XSAM & SGD & SAM & WSAM & XSAM  & SGD & SAM & WSAM & XSAM  \\
\midrule
Epoch & \multicolumn{4}{c|}{200} & \multicolumn{4}{c|}{200} & \multicolumn{4}{c}{200}\\
Batch size & \multicolumn{4}{c|}{125} & \multicolumn{4}{c|}{125} & \multicolumn{4}{c}{125} \\
Initial learning rate & \multicolumn{4}{c|}{0.05} & \multicolumn{4}{c|}{0.05} & \multicolumn{4}{c}{0.05}\\
Momentum & \multicolumn{4}{c|}{0.9} & \multicolumn{4}{c|}{0.9} & \multicolumn{4}{c}{0.9} \\
Weight decay & \multicolumn{4}{c|}{$1 \times 10^{-3}$} & \multicolumn{4}{c|}{$1 \times 10^{-3}$} & \multicolumn{4}{c}{$1 \times 10^{-3}$} \\
$\rho$ & -- & 0.15 & 0.15 & 0.15 & -- & 0.15 & 0.15 & 0.15 & -- & 0.20 & 0.20 & 0.20\\
$\rho_m$ & -- & -- & -- & 0.25 & -- & -- & -- & 0.30 & -- & -- & -- & 0.25\\
$\alpha$ & 0.0 & 1.0 & 0.5 &  -- & 0.0 & 1.0 & 1.25 & -- & 0.0 & 1.0 & 1.0 & --  \\
\midrule
DenseNet-121 & SGD & SAM & WSAM & XSAM & SGD & SAM & WSAM & XSAM  & SGD & SAM & WSAM & XSAM  \\
\midrule
Epoch & \multicolumn{4}{c|}{200} & \multicolumn{4}{c|}{200} & \multicolumn{4}{c}{200}\\
Batch size & \multicolumn{4}{c|}{125} & \multicolumn{4}{c|}{125} & \multicolumn{4}{c}{125} \\
Initial learning rate & \multicolumn{4}{c|}{0.05} & \multicolumn{4}{c|}{0.05} & \multicolumn{4}{c}{0.05}\\
Momentum & \multicolumn{4}{c|}{0.9} & \multicolumn{4}{c|}{0.9} & \multicolumn{4}{c}{0.9} \\
Weight decay & \multicolumn{4}{c|}{$1 \times 10^{-3}$} & \multicolumn{4}{c|}{$1 \times 10^{-3}$} & \multicolumn{4}{c}{$1 \times 10^{-3}$} \\
$\rho$ & -- & 0.05 & 0.05 & 0.05 & -- & 0.10 & 0.10 & 0.10 & -- & 0.20 & 0.20 & 0.20\\
$\rho_m$ & -- & -- & -- & 0.10 & -- & -- & -- & 0.20 & -- & -- & -- & 0.20\\
$\alpha$ & 0.00 & 1.0 & 1.25 &  -- & 0.0 & 1.0 & 0.75 & -- & 0.0 & 1.0 & 0.75 & --  \\
\bottomrule
\end{tabular}
}
\end{table}

\begin{table}[H]
\centering
\caption{Hyperparameter details for Results in Figure~\ref{fig:single_step_rho}. Note that, in this experiment, $\alpha$ for WSAM adopts the average value of the dynamic $\alpha^*$ in the corresponding XSAM. We see from the results that such WSAM already clearly outperforms SAM.}
\label{tab:hyperparameters}
\vspace{\baselineskip}
\resizebox{1.0\textwidth}{!}{
\begin{tabular}{l| c c c | c c c | c c c }
\toprule
 & \multicolumn{3}{c|}{$\rho$=0.10} & \multicolumn{3}{c|}{$\rho$=0.20} & \multicolumn{3}{c}{$\rho$=0.30}\\
\cmidrule(lr){2-4} \cmidrule(lr){5-7} \cmidrule(lr){8-10} 
 & SAM & WSAM & XSAM & SAM & WSAM & XSAM  & SAM & WSAM & XSAM \\
\midrule
Epoch & \multicolumn{3}{c|}{200} & \multicolumn{3}{c|}{200} & \multicolumn{3}{c}{200} \\
Batch size & \multicolumn{3}{c|}{125} & \multicolumn{3}{c|}{125} & \multicolumn{3}{c}{125} \\
Initial learning rate & \multicolumn{3}{c|}{0.05} & \multicolumn{3}{c|}{0.05} & \multicolumn{3}{c}{0.05} \\
Momentum & \multicolumn{3}{c|}{0.9} & \multicolumn{3}{c|}{0.9} & \multicolumn{3}{c}{0.9} \\
Weight decay & \multicolumn{3}{c|}{$1 \times 10^{-3}$} & \multicolumn{3}{c|}{$1 \times 10^{-3}$} & \multicolumn{3}{c}{$1 \times 10^{-3}$} \\
$\rho_m$ & --  & -- & 0.30  & -- & -- & 0.30  & -- & -- & 0.30 \\
$\alpha$ & 1.0 & 1.57 &  -- & 1.0 & 1.15  & -- & 1.0 & 0.92 & -- \\
\bottomrule
\end{tabular}
}
\end{table}
\vspace{-20pt}
\begin{table}[H]
\centering
\caption{Hyperparameter details for Results in Figure~\ref{fig:alpha_frequecy_acc2} and~\ref{fig:rho_rho_max}. Note that the basic hyperparameters are provided here, while the other hyperparameters are clearly illustrated in the respective figures.}
\label{tab:hyperparameters}
\begin{adjustbox}{width=0.9\width}
\begin{tabular}{l| c c | c c }
\toprule
 & \multicolumn{2}{c|}{Figure~\ref{fig:alpha_frequecy_acc2}} & \multicolumn{2}{c}{Figure~\ref{fig:rho_rho_max}}\\
\cmidrule(lr){2-3} \cmidrule(lr){4-5} 
 & SAM& XSAM & SAM &XSAM \\
\midrule
Epoch & \multicolumn{2}{c|}{200} & \multicolumn{2}{c}{200}  \\
Batch size & \multicolumn{2}{c|}{125} & \multicolumn{2}{c}{125} \\
Initial learning rate & \multicolumn{2}{c|}{0.05} & \multicolumn{2}{c}{0.05}  \\
Momentum & \multicolumn{2}{c|}{0.9} & \multicolumn{2}{c}{0.9} \\
Weight decay & \multicolumn{2}{c|}{$1 \times 10^{-3}$} & \multicolumn{2}{c}{$1 \times 10^{-3}$} \\
$\rho$ & \multicolumn{2}{c|}{0.15} & \multicolumn{2}{c}{0.15} \\
\bottomrule
\end{tabular}
\end{adjustbox}
\end{table}
\vspace{-15pt}
\begin{table}[H]
\centering
\caption{Hyperparameter details for Results in Figure~\ref{fig:multi_step_rho}. Note that, in this experiment, $\alpha$ for WSAM adopts the average value of the dynamic $\alpha^*$ in the corresponding XSAM. We see from the results that such WSAM already clearly outperforms SAM.}
\label{tab:hyperparameters}
\vspace{\baselineskip}
\resizebox{1.0\textwidth}{!}{
\begin{tabular}{l| c c c | c c c | c c c }
\toprule
 & \multicolumn{3}{c|}{$\rho$=0.04} & \multicolumn{3}{c|}{$\rho$=0.08} & \multicolumn{3}{c}{$\rho$=0.12}\\
\cmidrule(lr){2-4} \cmidrule(lr){5-7} \cmidrule(lr){8-10} 
 & SAM & WSAM & XSAM & SAM & WSAM & XSAM  & SAM & WSAM & XSAM \\
\midrule
Epoch & \multicolumn{3}{c|}{200} & \multicolumn{3}{c|}{200} & \multicolumn{3}{c}{200} \\
Batch size & \multicolumn{3}{c|}{125} & \multicolumn{3}{c|}{125} & \multicolumn{3}{c}{125} \\
Initial learning rate & \multicolumn{3}{c|}{0.05} & \multicolumn{3}{c|}{0.05} & \multicolumn{3}{c}{0.05} \\
Momentum & \multicolumn{3}{c|}{0.9} & \multicolumn{3}{c|}{0.9} & \multicolumn{3}{c}{0.9} \\
Weight decay & \multicolumn{3}{c|}{$1 \times 10^{-3}$} & \multicolumn{3}{c|}{$1 \times 10^{-3}$} & \multicolumn{3}{c}{$1 \times 10^{-3}$} \\
$\rho_m$ & --  & -- & 0.30  & -- & -- & 0.25  & -- & -- & 0.20 \\
$\alpha$ & 1.0 & 1.72 &  -- & 1.0 & 1.15  & -- & 1.0 & 0.41 & -- \\
\bottomrule
\end{tabular}
}
\end{table}

\vspace{-10pt}
\begin{table}[H]
\centering
\caption{Hyperparameters for SAM and XSAM on ImageNet/ResNet-50, Transformer/IWSLT2014, and ViT-Ti/CIFAR-100 in Table~\ref{tab:imagenet_and_others}.}
\label{tab:merged_hyperparams}
\begin{adjustbox}{width=0.8\width}
\begin{tabular}{l | c c | c c | c c}
\toprule
 & \multicolumn{2}{c|}{ImageNet/ResNet-50} & \multicolumn{2}{c|}{Transformer/IWSLT2014} & \multicolumn{2}{c}{CIFAR-100/ViT-Ti} \\
 & SAM & XSAM & SAM & XSAM & SAM & XSAM \\
\midrule
Epoch & \multicolumn{2}{c|}{90} &  \multicolumn{2}{c|}{300} &  \multicolumn{2}{c}{300} \\ 
Batch size / Max Token &  \multicolumn{2}{c|}{512} &  \multicolumn{2}{c|}{4096} &  \multicolumn{2}{c}{256} \\
Initial learning rate &  \multicolumn{2}{c|}{0.2}&  \multicolumn{2}{c|}{$5 \times 10^{-4}$} &  \multicolumn{2}{c}{0.001}\\
Momentum & \multicolumn{2}{c|}{0.9}& \multicolumn{2}{c|}{(0.9,0.98)}& \multicolumn{2}{c}{(0.9,0.999)}\\
Weight decay & \multicolumn{2}{c|}{$1 \times 10^{-4}$} & \multicolumn{2}{c|}{0.3} & \multicolumn{2}{c}{0.3}\\
Label smooth & \multicolumn{2}{c|}{0.0}& \multicolumn{2}{c|}{0.1} & \multicolumn{2}{c}{0.1} \\
$\rho$ & \multicolumn{2}{c|}{0.05} & \multicolumn{2}{c|}{0.15} & \multicolumn{2}{c}{0.9}\\
$\rho_m$ & -- & 0.3 & -- & 0.45 & -- & 0.9 \\
$\alpha$ & 1.0 & -- & 1.0 & -- & 1.0 & -- \\
\bottomrule
\end{tabular}
\end{adjustbox}
\end{table}

\section{Additional Experimental Results and Analysis} \label{sec:experiment_results}

\subsection{Evaluation of XSAM with Other SAM Variants} \label{sec:xsam_asam}

In this section, we further evaluate the performance of SAM variants and their combinations with XSAM. As discussed in Section~\ref{sec:related_work}, some SAM variants, such as ASAM, FSAM, and VaSSO, target aspects of SAM that are largely orthogonal to those addressed by our method, making them potentially compatible for integration. Given the large number of such orthogonal approaches, we focus here on combining XSAM with ASAM and evaluating their performance on CIFAR-100 using ResNet-18. The results in Table~\ref{tab:sam_variants} indicate that XSAM outperforms both SAM and ASAM individually. Furthermore, integrating XSAM with ASAM leads to further improvement, demonstrating the effectiveness of XSAM in combination with other SAM variants.

\begin{table}[h]
\centering
\caption{Test accuracy of SAM variants and their combinations with XSAM.}
\label{tab:sam_variants}
\begin{tabular}{lcccc}
\toprule
 & SAM & ASAM & XSAM & XSAM+ASAM \\
\midrule
Test Accuracy & $80.93 \pm 0.11$ & $81.11 \pm 0.06$ & $81.24 \pm 0.07$ & \textbf{81.68 $\pm$ 0.11} \\
\bottomrule
\end{tabular}
\end{table}

We have additionally compared XSAM with ASAM, VaSSO, and WSAM on CIFAR-100 using both ResNet-18 and DenseNet-121. As shown in the Table~\ref{tab:baselines}, XSAM achieves the highest accuracy across both architectures, further demonstrating its effectiveness.

\begin{table}[h]
\centering
\caption{Comparison on CIFAR-100 with ResNet-18 and DenseNet-121. All baseline methods are carefully tuned for optimal performance. XSAM uses the same $\rho$ as SAM, as in the paper.}
\label{tab:baselines}
\begin{tabular}{lcc}
\toprule
Method & ResNet-18 & DenseNet-121 \\
\midrule
SAM    & 80.93 $\pm$ 0.11 & 83.81 $\pm$ 0.02 \\
ASAM   & 81.11 $\pm$ 0.06 & 83.99 $\pm$ 0.25 \\
VaSSO  & 80.84 $\pm$ 0.15 & 83.78 $\pm$ 0.25 \\
WSAM   & 80.95 $\pm$ 0.19 & 83.91 $\pm$ 0.15 \\
XSAM   & 81.24 $\pm$ 0.07 & 83.96 $\pm$ 0.10 \\
XSAM + ASAM & \textbf{81.68 $\pm$ 0.11} & \textbf{84.06 $\pm$ 0.21} \\
\bottomrule
\end{tabular}
\end{table}

\subsection{Additional Experiments of Multi-step SAM} \label{sec: multi_SAM_supplement}

We additionally compare multi-step SAM variants and XSAM under varying $\rho$. As we see in Table \ref{table:multi-step2}, all of these variants, especially LSAM and MSAM+, which involve intermediate gradients rather than merely using the last gradient $g_k$, managed to get consistently superior results than SAM. The performance of SAM constantly decreases as $\rho$ gets large, which, from our perspective, suggests the deviations of $g_k$ from $\vartheta_0$ are too large. Under such circumstances, the earlier $g_i$ must have less deviation, so combining it with earlier gradients would help. Besides, we see no clear trend for LSAM, LSAM+, MSAM, and MSAM+ as $\rho$ gets large. 

Although MSAM+ can be viewed as LSAM+ with weights of gradients changed from $1/\|g_i\|$ to simply $1$, the performance gap between them is obvious. This demonstrates that the weighting of gradients at different steps affects performance, and a more appropriate weighting scheme can lead to higher accuracy. Regardless, XSAM consistently outperforms all these methods in all cases. 

\setlength{\intextsep}{1.0em}  
\setlength{\columnsep}{1.5em}  
\begin{table}[t]
    \centering
    \caption{Results on CIFAR-100 using ResNet-18 in multi-step ($k=3$) setting.}
    \label{table:multi-step2}
    \setlength{\tabcolsep}{6pt}
    \begin{tabular}{lccc}
    \toprule
    Method & $\rho=0.04$ & $\rho=0.08$ & $\rho=0.12$ \\ 
    \midrule
    SAM     & 80.79$_{\pm 0.41}$       & 80.75$_{\pm 0.27}$       & 79.72$_{\pm 0.33}$      \\
    LSAM    & 81.00$_{\pm 0.21}$       & 81.20$_{\pm 0.24}$      & 81.16$_{\pm 0.04}$       \\
    LSAM+   & 80.56$_{\pm 0.20}$       & 80.77$_{\pm 0.04}$       & 80.21$_{\pm 0.27}$       \\
    MSAM    & 81.04$_{\pm 0.06}$       & 81.12$_{\pm 0.17}$       & 80.93$_{\pm 0.11}$       \\
    MSAM+   & 80.72$_{\pm 0.16}$      & 81.16$_{\pm 0.05}$       & 81.16 $_{\pm 0.05}$      \\
    XSAM    & \textbf{81.23$_{\pm 0.06}$} & \textbf{81.36$_{\pm 0.08}$} & \textbf{81.29$_{\pm 0.06}$} \\
    \bottomrule
    \end{tabular}
\end{table}

We further compare XSAM with MSAM and LSAM under $k=1, 2, 4$ on DenseNet-121 using CIFAR-100 and on ResNet-18 using CIFAR-10. As shown in Tables~\ref{tab:multi_step4} and \ref{tab:multi_step3}, XSAM consistently attains high accuracy while maintaining strong robustness. In contrast, existing multi-step SAM variants may even underperform their single-step counterparts.

\begin{table}[H]
\centering
\caption{Results on DenseNet-121 with CIFAR-100 with different $k$. $\rho=\rho^*/k$ with $\rho^*$ for single-step.}
\label{tab:multi_step4}
\setlength{\tabcolsep}{6pt} 
\begin{tabular}{c c c c}
\toprule
Method & $k=1$ & $k=2$ & $k=4$ \\
\midrule
LSAM & 83.81 $\pm$ 0.02 & 83.82 $\pm$ 0.28 & 83.40 $\pm$ 0.17\\
MSAM & 83.81 $\pm$ 0.02 & 83.67 $\pm$ 0.23 & 83.74 $\pm$ 0.18\\
XSAM & \textbf{83.96 $\pm$ 0.10} & \textbf{84.05 $\pm$ 0.04} & \textbf{84.02 $\pm$ 0.31} \\
\bottomrule
\end{tabular}
\end{table}

\begin{table}[H]
\centering
\caption{Results on ResNet-18 with CIFAR-10 with different $k$. $\rho=\rho^*/k$ with $\rho^*$ for single-step.}
\label{tab:multi_step3}
\setlength{\tabcolsep}{6pt} 
\begin{tabular}{c c c c}
\toprule
Method & $k=1$ & $k=2$ & $k=4$ \\
\midrule
LSAM & 96.59 $\pm$ 0.06 & 96.66 $\pm$ 0.03 & 96.72 $\pm$ 0.07 \\
MSAM & 96.59 $\pm$ 0.06 & 96.78 $\pm$ 0.05 & 96.80 $\pm$ 0.07 \\
XSAM & \textbf{96.74 $\pm$ 0.04} & \textbf{96.81 $\pm$ 0.06} & \textbf{96.81 $\pm$ 0.11} \\
\bottomrule
\end{tabular}
\end{table}

\subsection{Additional Experiments on Corrupted Datasets}

We have conducted additional experiments on CIFAR-10-C and CIFAR-100-C using ResNet-18 and DenseNet-121. Specifically, we consider 19 types of corruptions, each applied at five severity levels, and group them into four categories: noise, blur, weather, and digital.

We report the mean accuracy as the evaluation metric, with higher values indicating better performance. The results, as shown in the  Table~\ref{tab:cifar_c_results}, indicate that XSAM consistently achieves high performance and demonstrates robustness across all settings.

\begin{table*}[h]
\centering
\renewcommand{\arraystretch}{1.15} 
\caption{Performance on CIFAR-10-C/CIFAR-100-C with ResNet-18 and DenseNet-121.}
\label{tab:cifar_c_results}

\subfloat[CIFAR-100-C, ResNet-18]{
\resizebox{0.48\textwidth}{!}{
\begin{tabular}{c c c c c c}
\toprule
Method & Noise & Blur & Weather & Digital & Overall \\
\midrule
SGD    & 22.36 & 47.47 & 55.44 & 60.03 & 48.07 \\
SAM    & 25.58 & 51.14 & 58.82 & 63.30 & 51.45 \\
XSAM   & 25.44 & 52.65 & 59.83 & 63.54 & \textbf{52.07} \\
\bottomrule
\end{tabular}
}}
\hfill
\subfloat[CIFAR-10-C, ResNet-18]{
\resizebox{0.48\textwidth}{!}{
\begin{tabular}{c c c c c c}
\toprule
Method & Noise & Blur & Weather & Digital & Overall \\
\midrule
SGD    & 51.14 & 71.85 & 85.23 & 85.85 & 74.76 \\
SAM    & 53.65 & 75.91 & 85.23 & 86.48 & 76.59 \\
XSAM   & 55.13 & 75.94 & 85.76 & 85.99 & \textbf{76.81} \\
\bottomrule
\end{tabular}
}}
\vspace{5mm}

\subfloat[CIFAR-100-C, DenseNet-121]{
\resizebox{0.48\textwidth}{!}{
\begin{tabular}{c c c c c c}
\toprule
Method & Noise & Blur & Weather & Digital & Overall \\
\midrule
SGD  & 26.07 & 51.12 & 59.93 & 63.78 & 51.90 \\
SAM  & 29.78 & 55.22 & 63.58 & 67.26 & 55.62 \\
XSAM & 31.02 & 56.73 & 64.15 & 67.25 & \textbf{56.37} \\
\bottomrule
\end{tabular}
}}
\hfill
\subfloat[CIFAR-10-C, DenseNet-121]{
\resizebox{0.48\textwidth}{!}{
\begin{tabular}{c c c c c c}
\toprule
Method & Noise & Blur & Weather & Digital & Overall \\
\midrule
SGD  & 49.50 & 73.51 & 85.17 & 85.88 & 74.85 \\
SAM  & 54.75 & 77.52 & 87.30 & 87.85 & 78.08 \\
XSAM & 55.94 & 77.05 & 87.60 & 87.72 & \textbf{78.20} \\
\bottomrule
\end{tabular}
}}
\end{table*}

\subsection{Inner Properties of XSAM} \label{sec:inner_properties}

In this section, we present investigations into the internal properties of XSAM.

\begin{figure}[t]
    \begin{subfigure}[t]{0.30\textwidth}
        \centering
        \includegraphics[width=\textwidth]{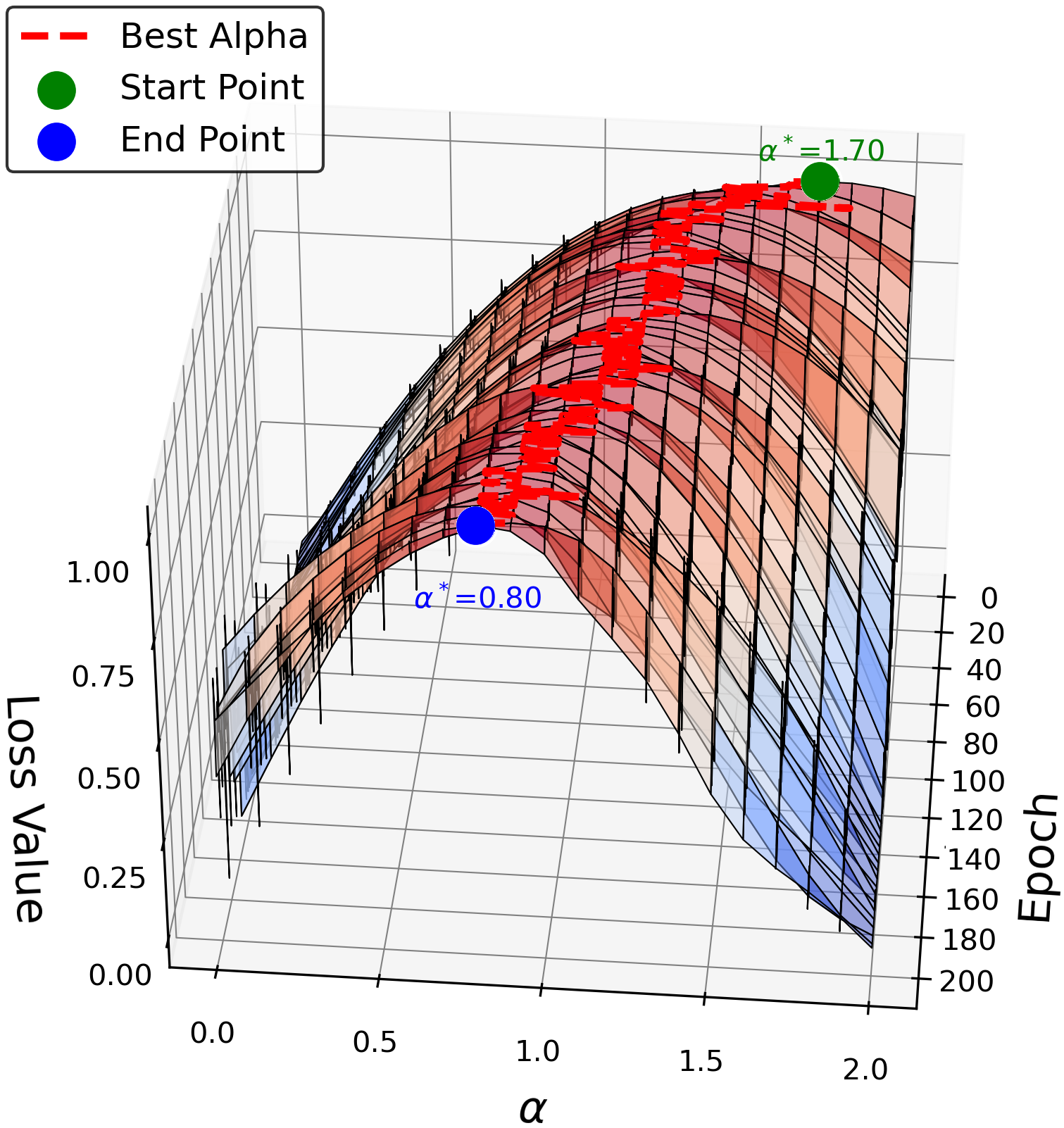}
        \caption{CIFAR-10, ResNet-18}
    \end{subfigure}
    \hfill
    \begin{subfigure}[t]{0.30\textwidth}
        \centering
        \includegraphics[width=\textwidth]{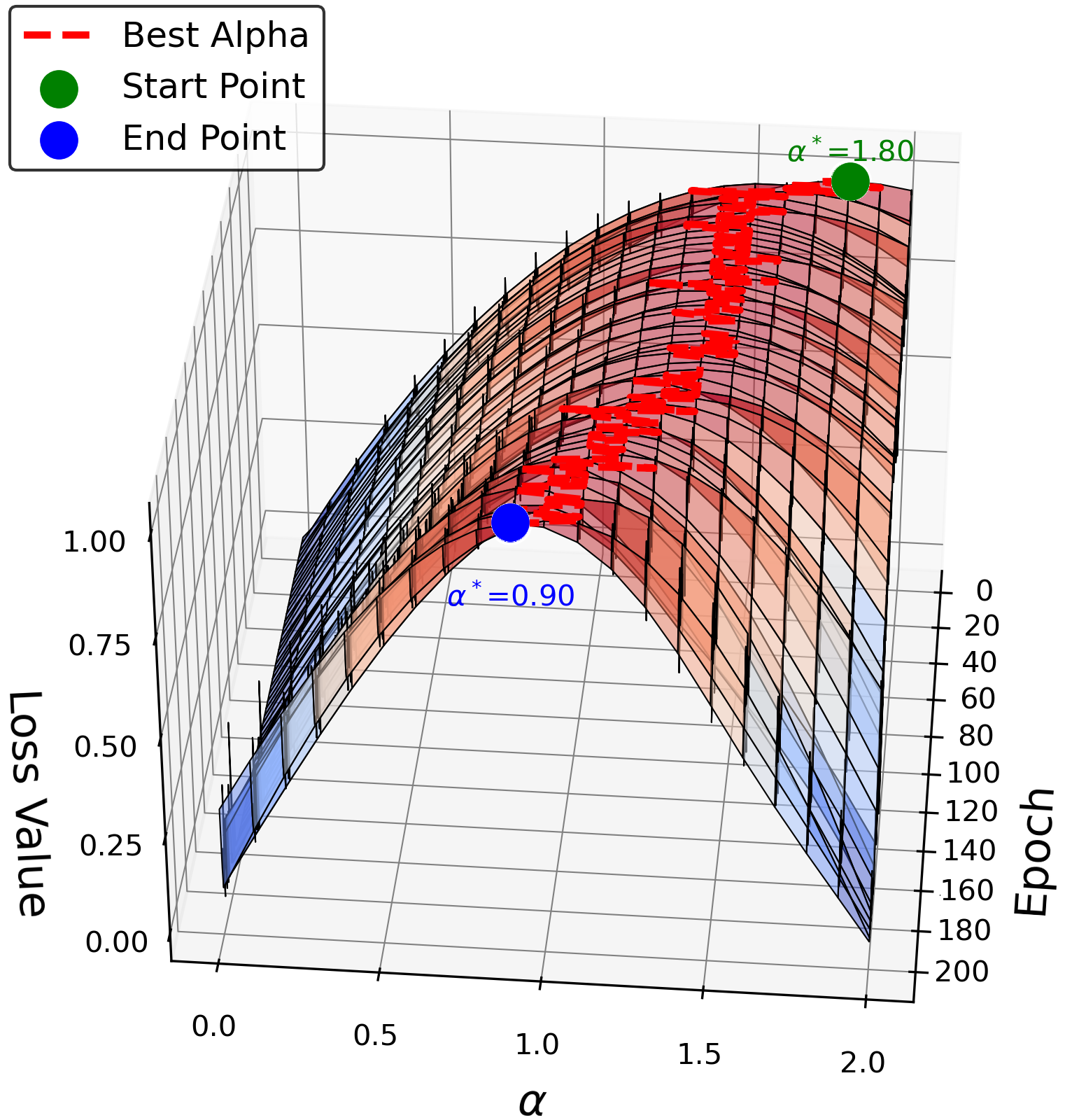}
        \caption{CIFAR-100, ResNet-18}
    \end{subfigure}
    \hfill
    \begin{subfigure}[t]{0.30\textwidth}
        \centering
        \includegraphics[width=\textwidth]{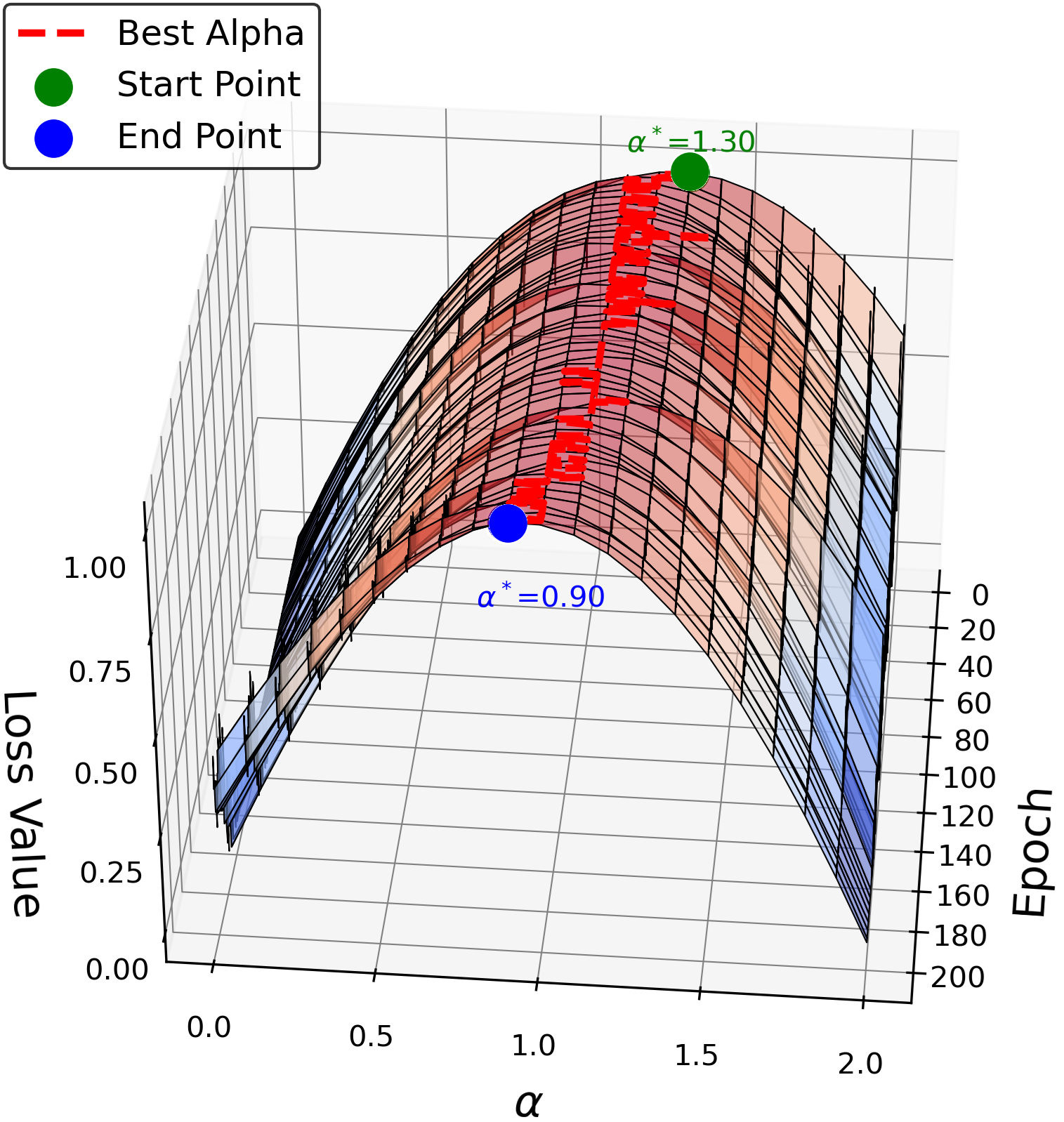}
        \caption{Tiny-ImageNet, ResNet-18}
    \end{subfigure}
    \hfill
    \begin{subfigure}[t]{0.30\textwidth}
        \centering
        \includegraphics[width=\textwidth]{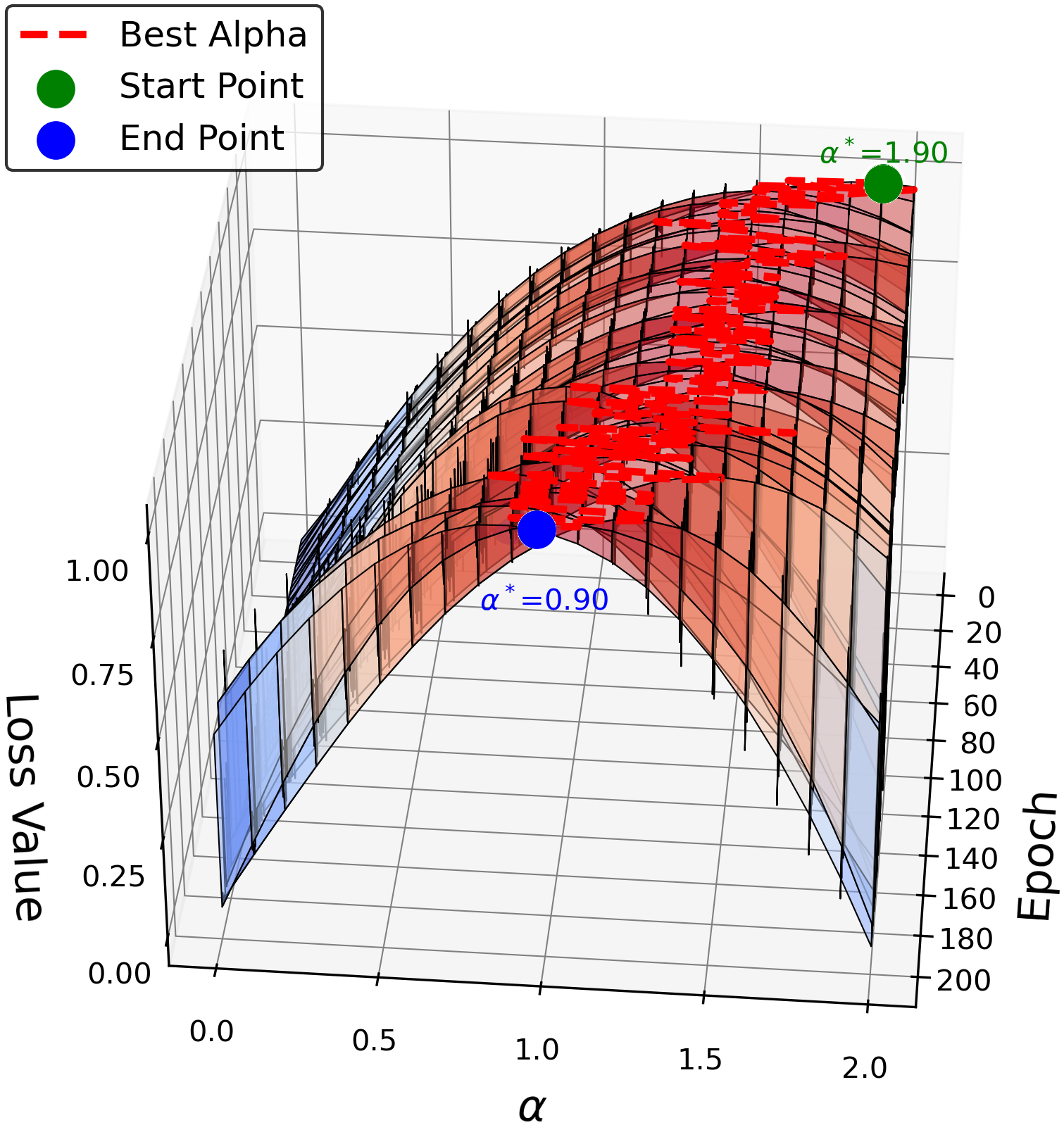}
        \caption{CIFAR-10, DenseNet-121}
    \end{subfigure}
    \hfill
    \begin{subfigure}[t]{0.30\textwidth}
        \centering
        \includegraphics[width=\textwidth]{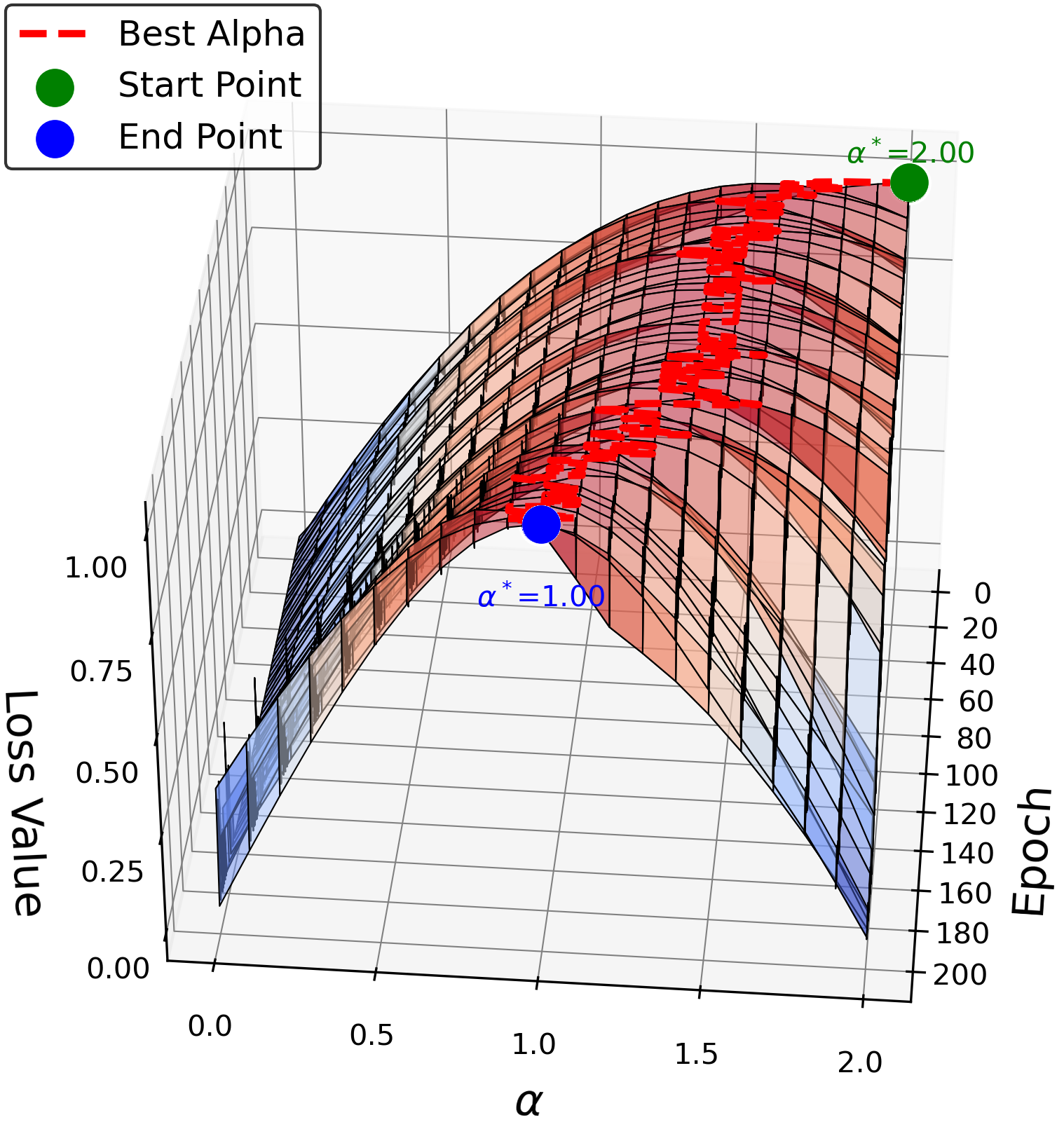}
        \caption{CIFAR-100, DenseNet-121}
    \end{subfigure}
    \hfill
    \begin{subfigure}[t]{0.30\textwidth}
        \centering
        \includegraphics[width=\textwidth]{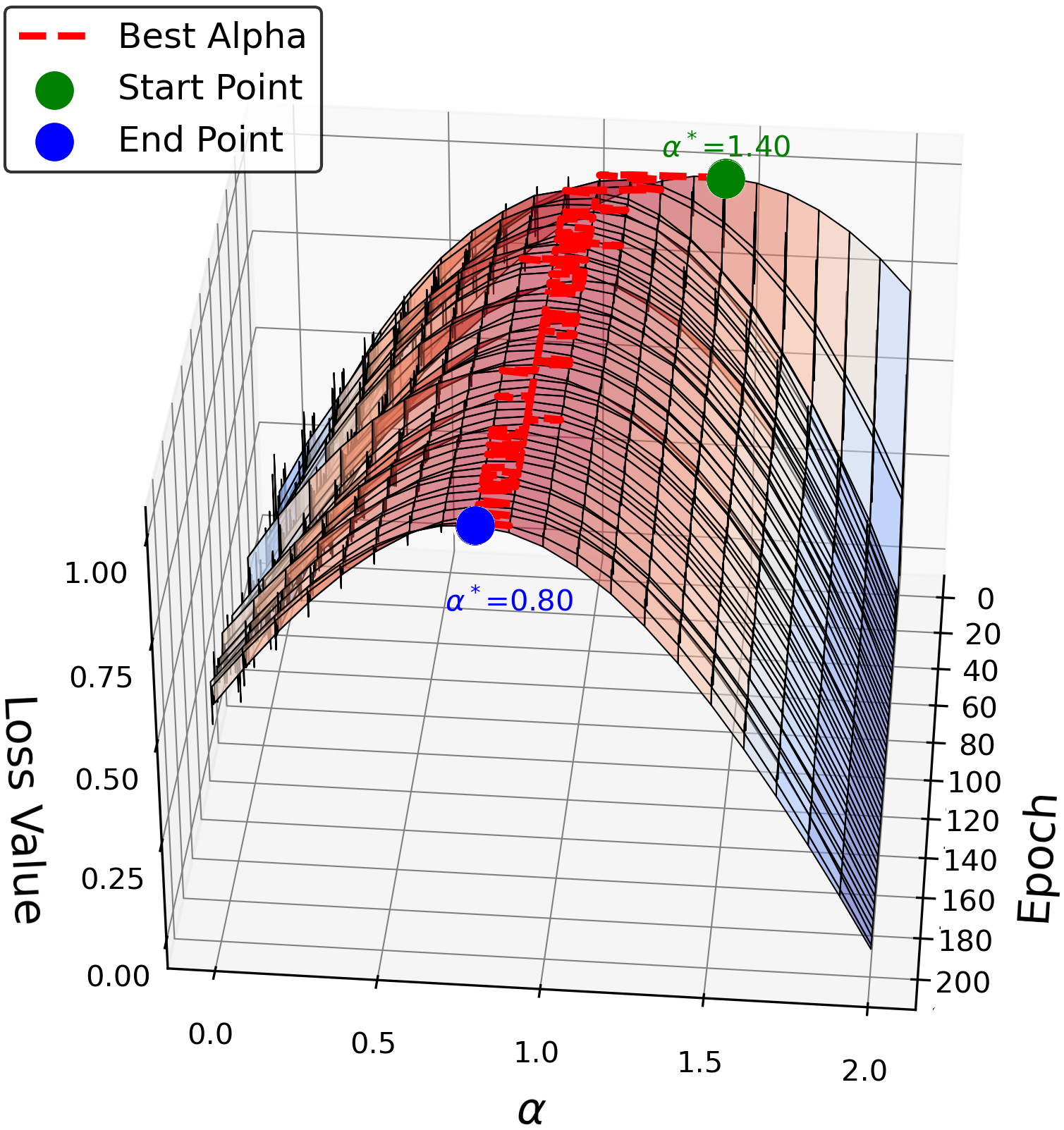}
        \caption{Tiny-ImageNet, DenseNet-121}
    \end{subfigure}
    \caption{More visualizations of the dynamic estimations of $\alpha$.}
    \label{fig: Additional Visualization of alpha}
\end{figure}

We first visualize the dynamic evaluations of $\alpha$ in a training instance in Figure \ref{fig:loss_alpha_epoch_3D} and in Figure~\ref{fig: Additional Visualization of alpha}, where loss values are normalized for better visibility. As we can see, for every dynamic evaluation of $\alpha$, there is a clear optimal $\alpha$. With the epoch-wise evaluation of $\alpha$, we still see that the change of $\alpha^*$ during training is very smooth, which supports our choice of less frequently updating $\alpha^*$ for reducing computational overhead. On the other hand, we do see that $\alpha^*$ is changing during training, which validates our argument that a fixed $\alpha$ may not be optimum. 

\begin{figure}[h]
    \centering
    \begin{subfigure}{0.32\textwidth}
        \centering
        \includegraphics[width=\textwidth]{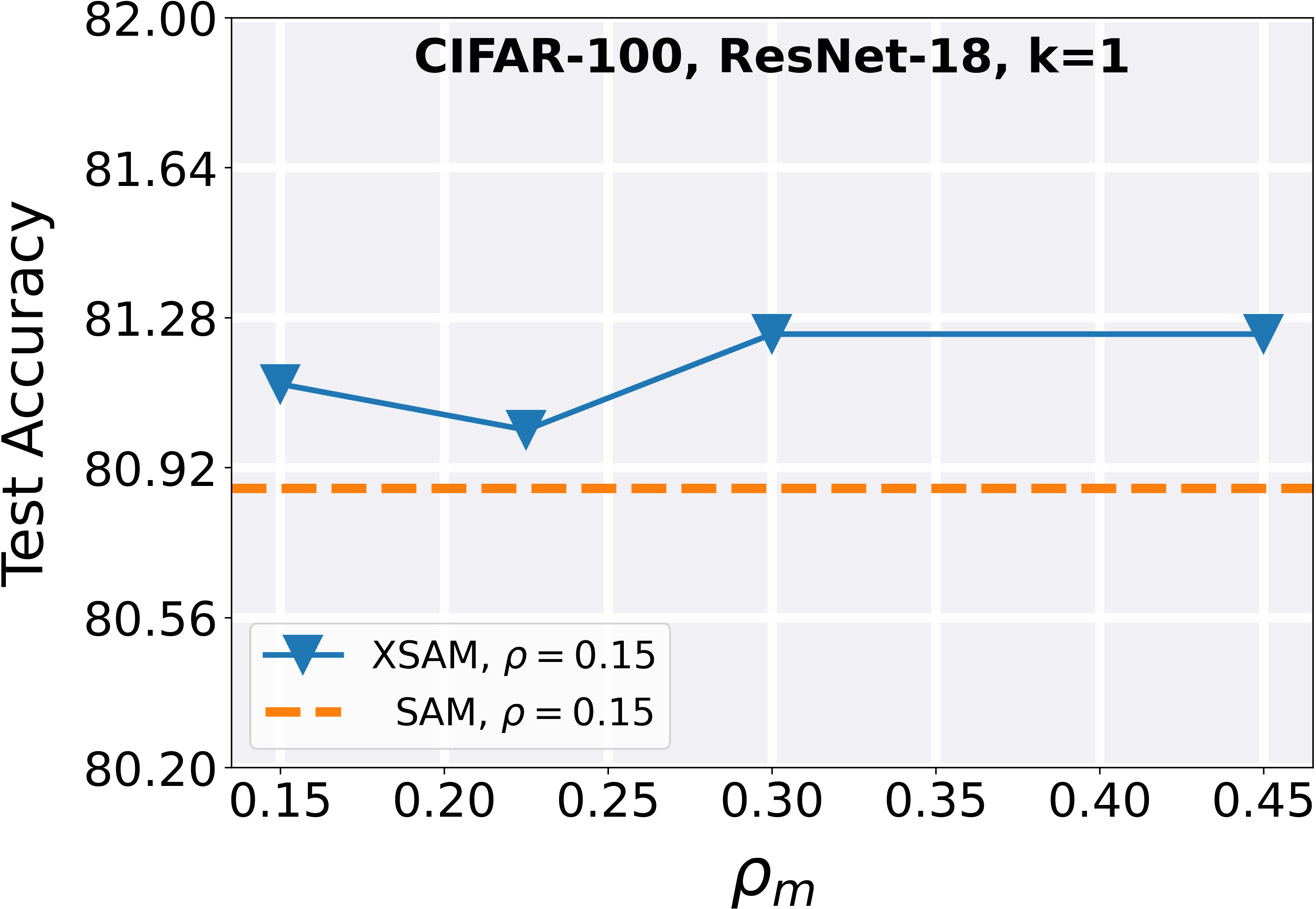}
        \caption{}
        \label{fig:rho_rho_max}
    \end{subfigure}
    \hfill
    \begin{subfigure}{0.32\textwidth}
        \centering
        \includegraphics[width=\textwidth]{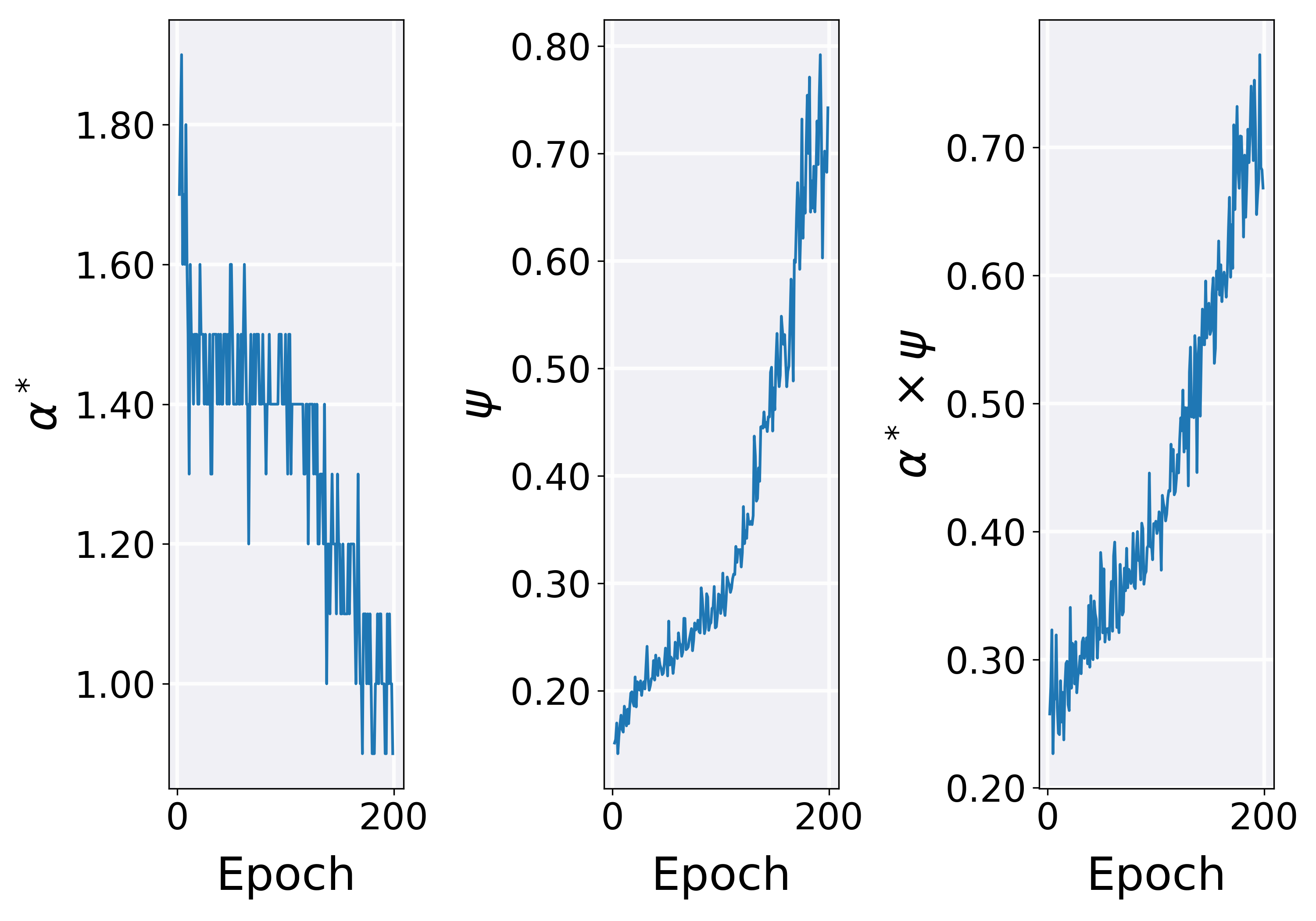}
        \caption{}
        \label{fig:alpha_angle_times_epoch}
    \end{subfigure}
    \hfill
    \begin{subfigure}{0.32\textwidth}
        \centering
        \includegraphics[width=\textwidth]{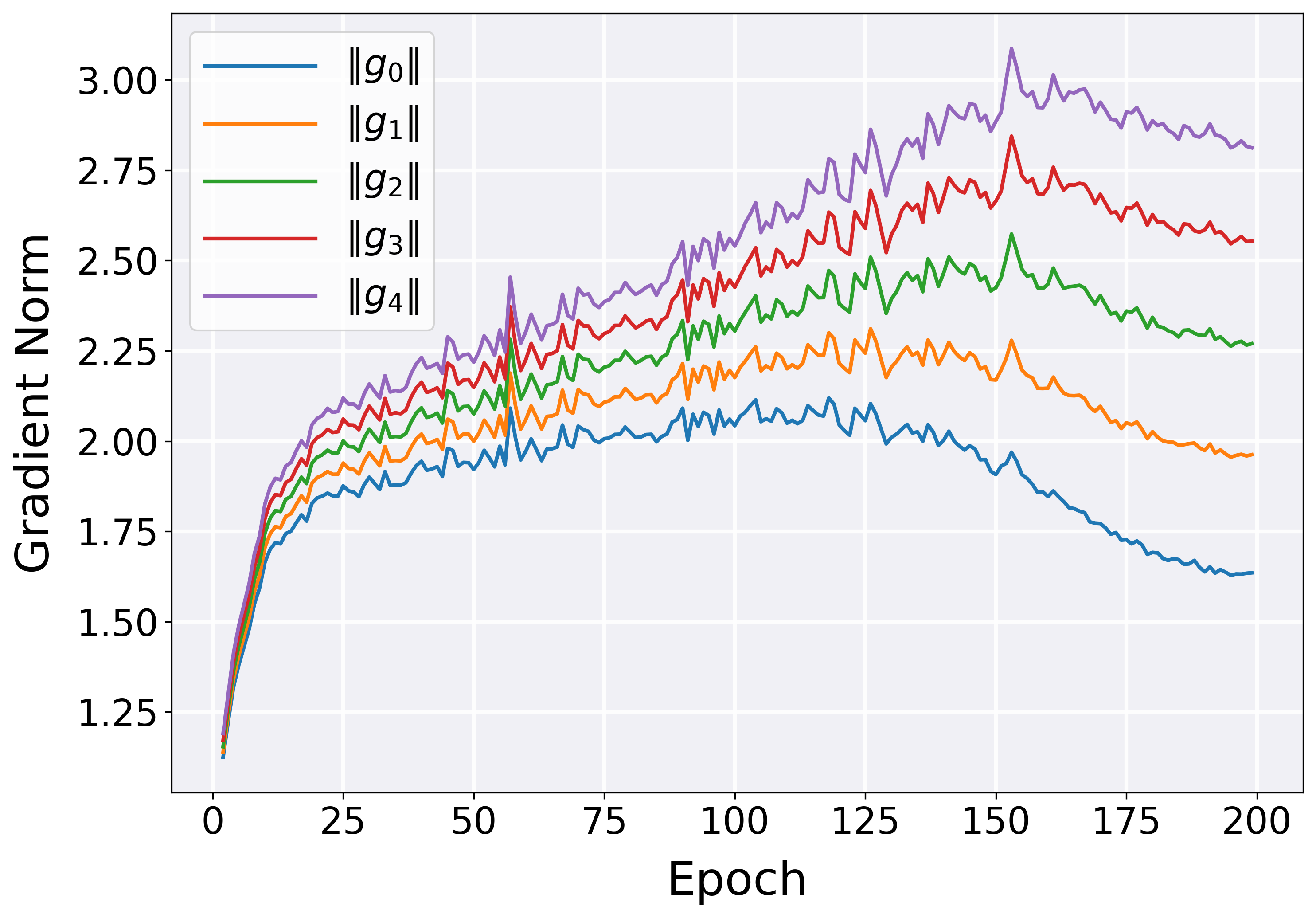}
        \caption{}
        \label{fig:grad_norm_epoch}
    \end{subfigure}
    \caption{(a) Robustness analysis of XSAM with respect to $\rho_m$. (b)Training statistics of XSAM. (c) The norms of $g_i$ during training.}
    \label{fig:xsam_all}
\end{figure}

We further study how $\rho_m$ influences the final performance. As results presented in Figure \ref{fig:rho_rho_max}, while $\rho_m$ does impact performance to some extent, XSAM is able to outperform SAM in a fairly large range of $\rho_m$, from $\rho$ to $3\rho$. So, we consider that XSAM is not sensitive to $\rho_m$. The counterpart, as to how $\rho$ influences when fixing the $\rho_m$, is actually demonstrated in Figure~\ref{fig:single_step_rho}, where we have used a fixed $\rho_m=0.3$ by intention. It seems fairly robust to $\rho$.

In our experiments, we also see that $\alpha^*$ has a decreasing tendency during training. In fact, the angle $\psi$ between $v_0$ and $v_1$ has an increasing tendency during training. We visualize such changes along with the offset angle $\alpha^* \cdot \psi$ from $v_0$ to the direction of the local maximum in Figure \ref{fig:alpha_angle_times_epoch}. We see that the offset angle $\alpha^* \cdot \psi$ tends to increase. This may be because it converges to a lower position in a minima region as the learning rate decreases. Nevertheless, XSAM is able to help it away from the maximum within the local neighborhood in any case, as evident by the test accuracy results.

We show in Figure~\ref{fig:grad_norm_epoch} an instance of norm change of $g_i$ during training in multi-step settings.



\subsection{The Flatness/Sharpness of Resulting Models}\label{sec:flatness}

\textbf{Hessian spectrum}. To demonstrate that XSAM converges to flatter minima (more precisely, successfully shifts to a region where the maximum within the local neighborhood is lower), we calculate the Hessian eigenvalues of ResNet-18 trained for 200 epochs on CIFAR-10 with SGD, SAM, and XSAM. Following \citep{foret2020sharpness, jastrzebski2020breakevenpointoptimizationtrajectories, mi2022make}, we adopt two metrics: the largest eigenvalue (i.e., $\lambda_1$) and the ratio of the largest eigenvalue to the fifth largest one (i.e., $\lambda_1/\lambda_5$). To avoid the expensive computation cost of exact Hessian spectrum calculation, we approximate eigenvalues using the Lanczos algorithm \citep{ghorbani2019investigation}. The results, shown in \textbf{Table~\ref{table:hessian}}, indicate that XSAM yields the smallest hessian spectrum, suggesting that it converges to flatter minima than SAM and SGD.

\begin{table}[!h]
    \centering    
    \caption{Hessian spectrum of ResNet-18 using SGD, SAM, and XSAM on CIFAR-10.}
    \label{table:hessian}
    \vspace{\baselineskip}
    \begin{tabular}{lccc}
    \toprule
     & SGD & SAM & XSAM \\ 
    \midrule
    $\lambda_1$ & 78.79 & 36.15 & 33.92 \\
    $\lambda_1/\lambda_5$ & 2.26 & 1.89 & 1.59\\
    \bottomrule
    \end{tabular}
\end{table}

\textbf{Visualization of loss landscape}. We visualize the loss landscape of ResNet-18 trained on CIFAR-10 with SGD, SAM, and XSAM to further compare the flatness of the minimum. Using the visualization procedure in \citep{li2018visualizing}, we randomly choose orthogonal normalization directions  (i.e., X axis and Y axis) and then sample $50 \times 50$ points in the range of [-1,1] from these two directions. As shown in Figure~\ref{fig:loss_landscape}, XSAM has a flatter loss landscape than SAM and SGD.

\begin{figure}[!h]
    \centering
    \begin{subfigure}{0.29\textwidth}
         \includegraphics[width=\textwidth]{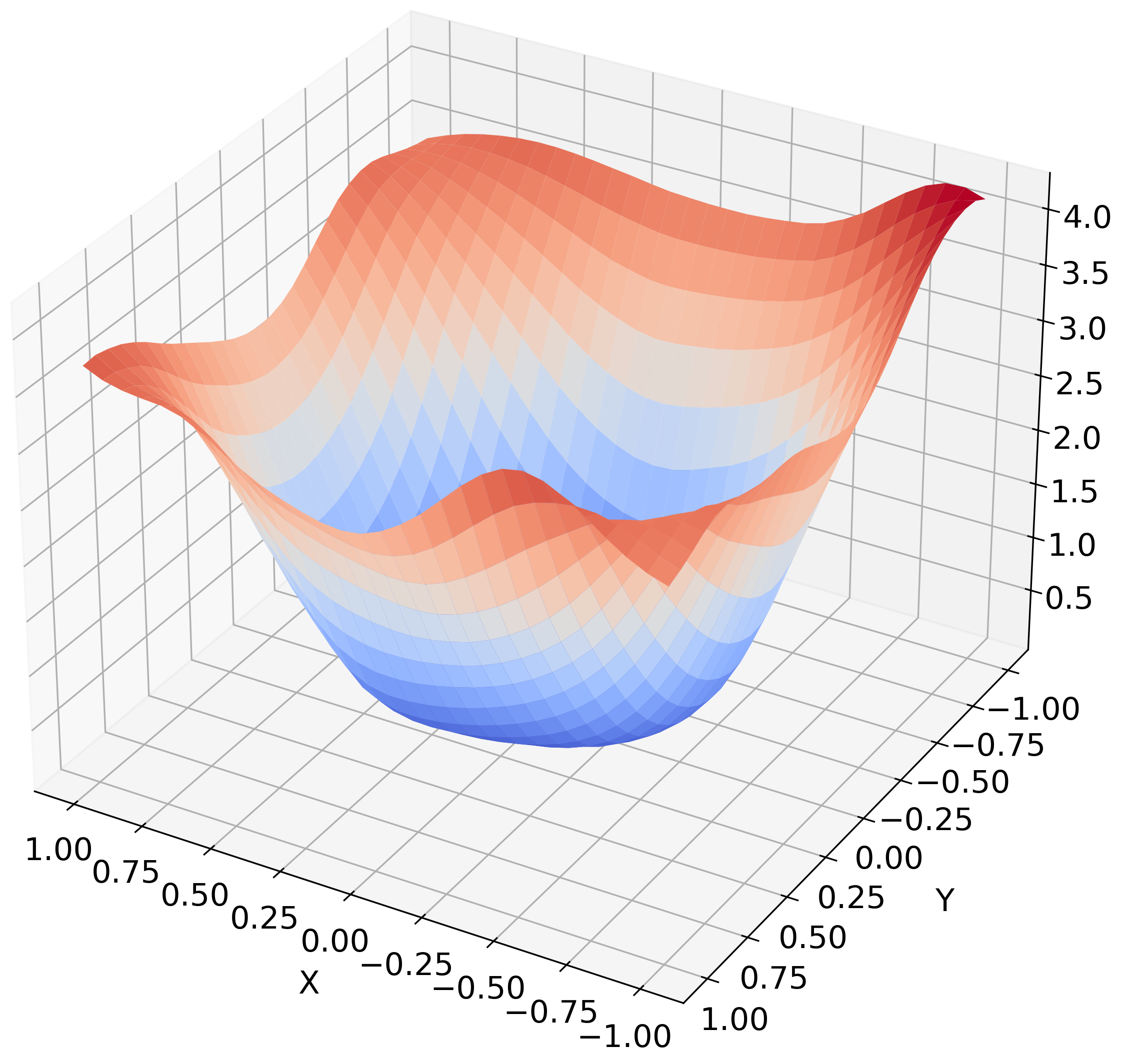}
        \vspace{-8pt}
        \caption{SGD}
        \label{fig:loss_landscape_SGD}
    \end{subfigure}
    \hfill
   \begin{subfigure}{0.29\textwidth}
         \centering
        \includegraphics[width=\textwidth]{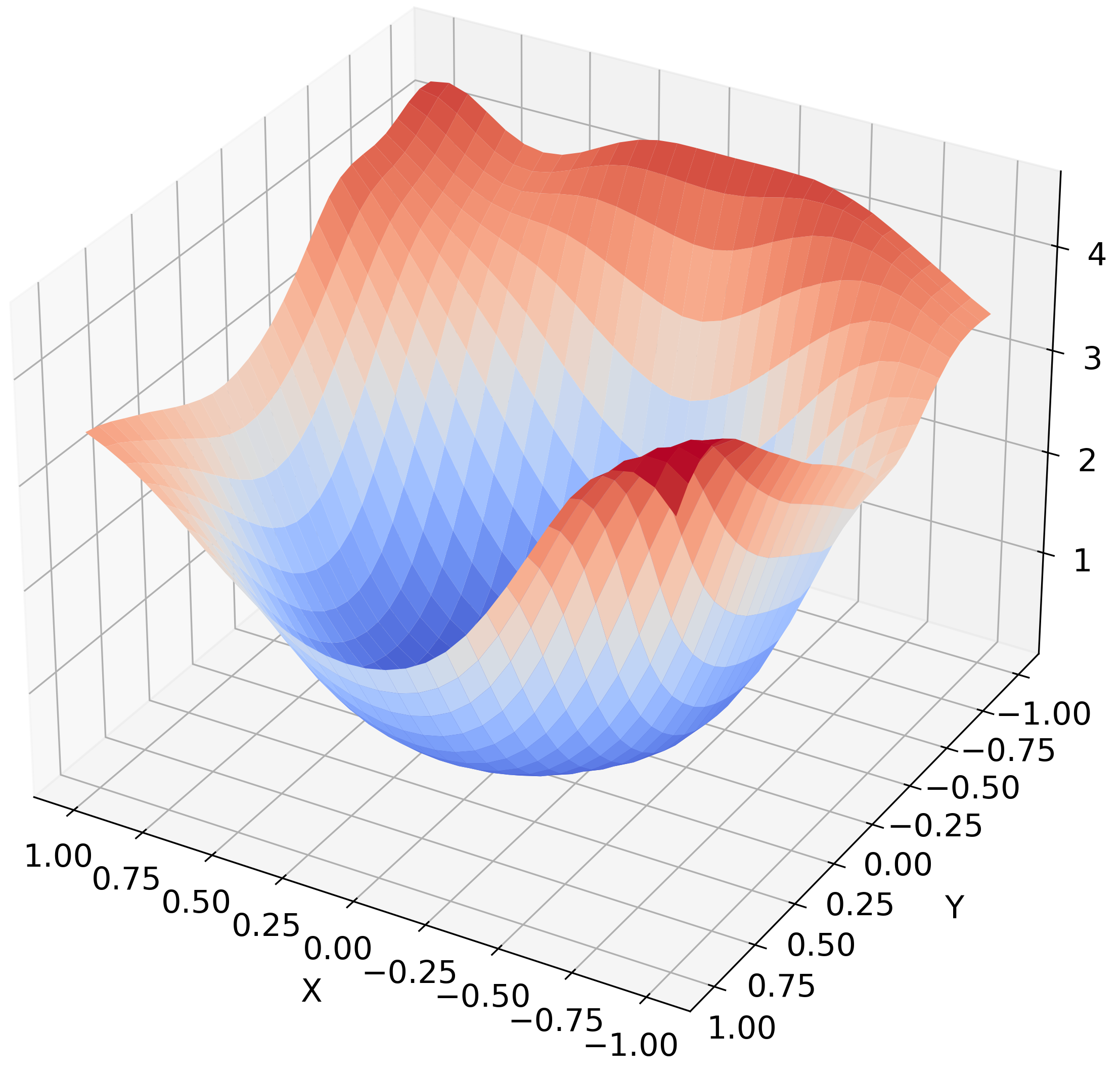}
        \vspace{-8pt}
        \caption{SAM}
        \label{fig:loss_landscape_SAM}
   \end{subfigure}
    \hfill
   \begin{subfigure}{0.35\textwidth}
         \centering
        \includegraphics[width=\textwidth]{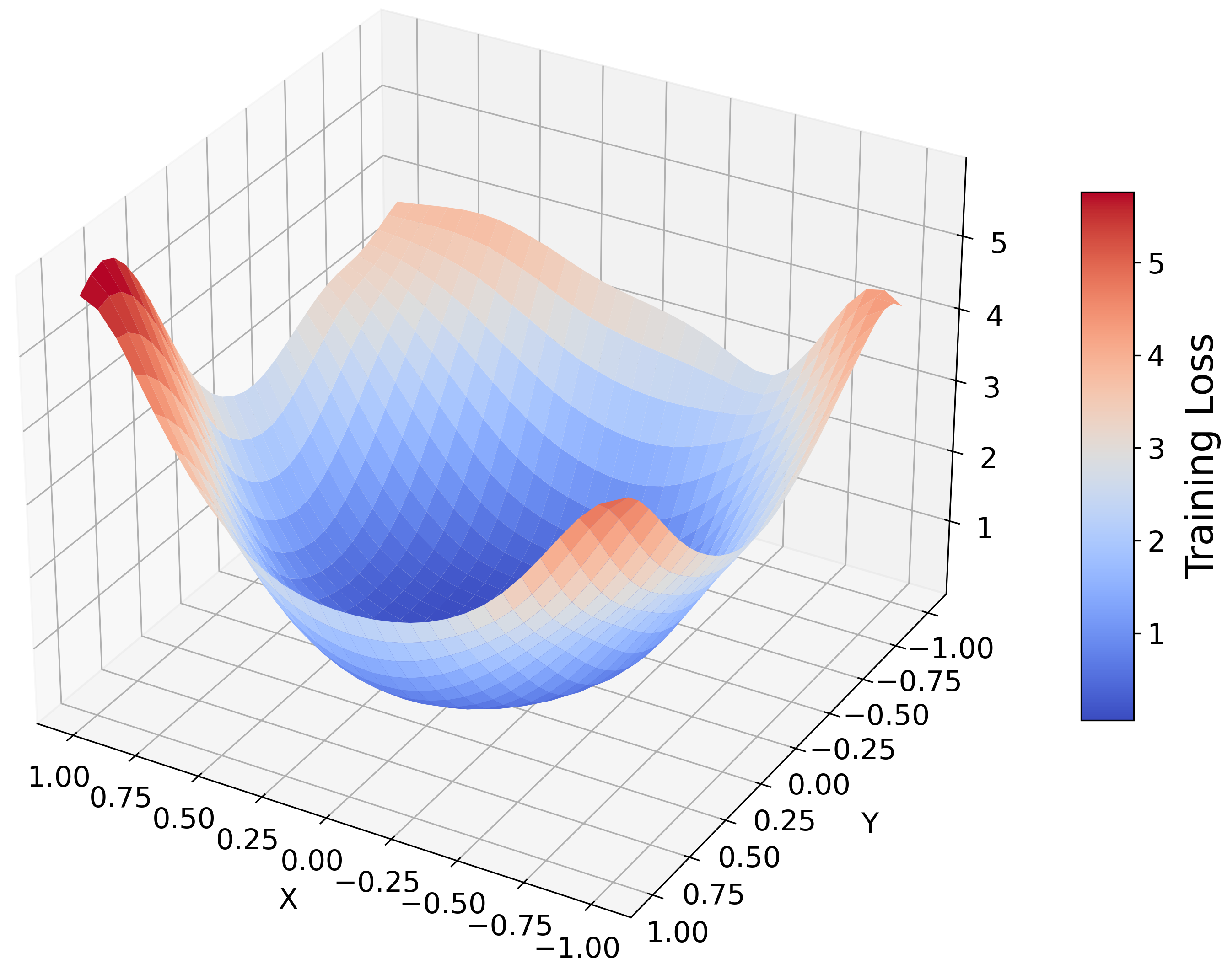}
        \vspace{-8pt}
        \caption{XSAM}
        \label{fig:loss_landscape_DSAM}
   \end{subfigure}
   \caption{Loss landscape visualizations of ResNet-18 on CIFAR-10 with SGD, SAM, and XSAM.}
   \label{fig:loss_landscape}
\end{figure}

\textbf{Average sharpness}. We further visualize the average sharpness of the loss landscape at the convergence point. Specifically, following \citep{foret2020sharpness}, we define the sharpness as the difference between the loss of the perturbation point and the loss of the convergence point. The average sharpness is then computed as the mean sharpness over multiple perturbations under the same perturbation radius. Then, we sample multiple random directions (e.g., 10, 50, 250, 1250) and continue this process until the average sharpness loss curve stabilizes, which provides a more representative characterization of the loss behavior around the convergence point. Based on our experiments, sampling 250 random directions is sufficient to achieve stable results. In addition, for the perturbation method, we adopt filter-wise and element-wise perturbation following \citep{li2018visualizing} to ensure a fair comparison between different optimizers (i.e., SGD, SAM, and XSAM). As shown in Figure~\ref{fig:sharpness}, SAM exhibits smaller average sharpness compared to SGD, while XSAM further reduces the average sharpness.

\begin{figure}[!h]
    \centering
    \begin{subfigure}{0.45\textwidth}
         \includegraphics[width=\textwidth]{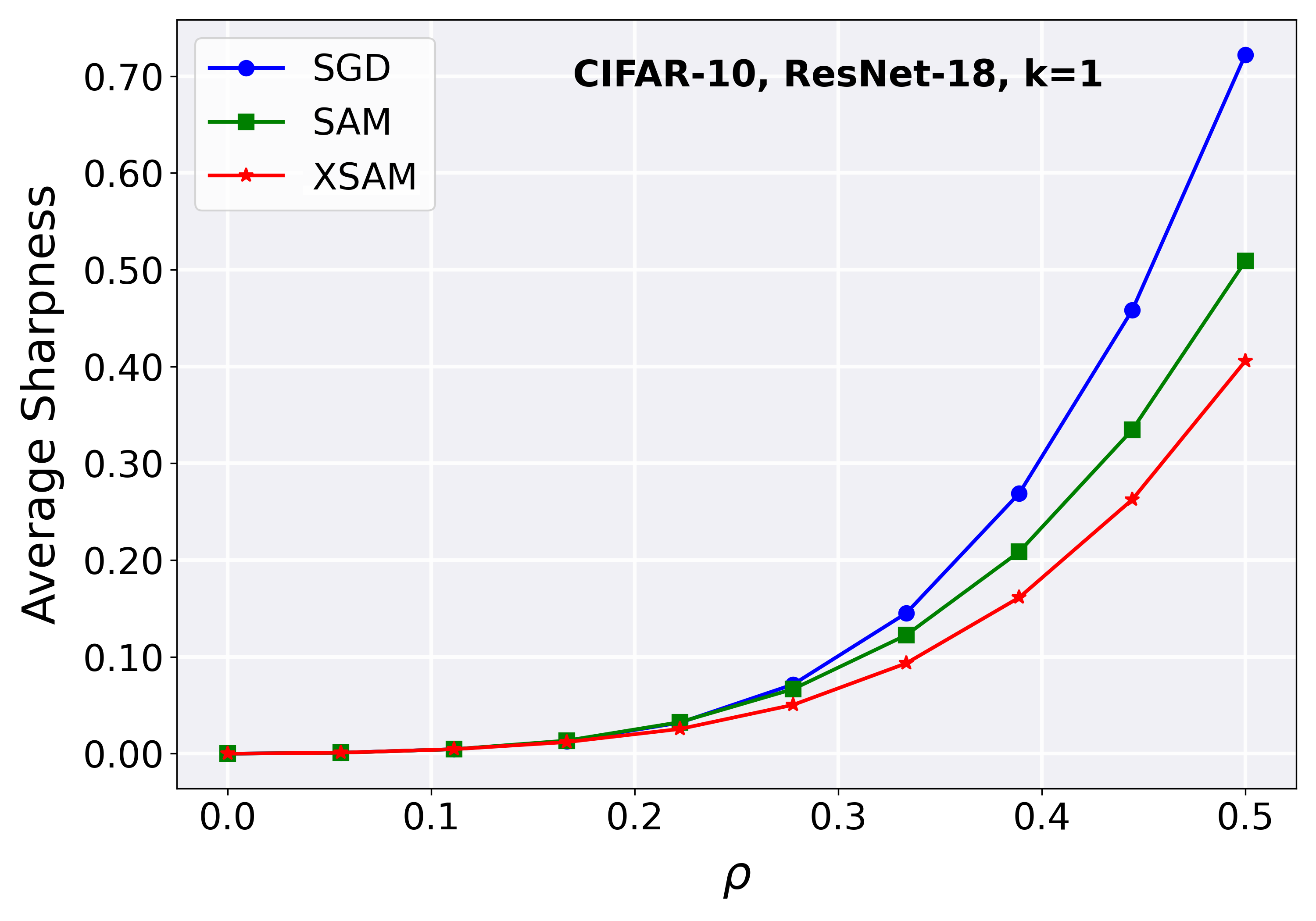}
        \vspace{-15pt}
        \caption{Element-wise perturbation}
        \label{fig:avg_sharpness_element}
    \end{subfigure}
    \hfill
    \begin{subfigure}{0.45\textwidth}
         \centering
        \includegraphics[width=\textwidth]{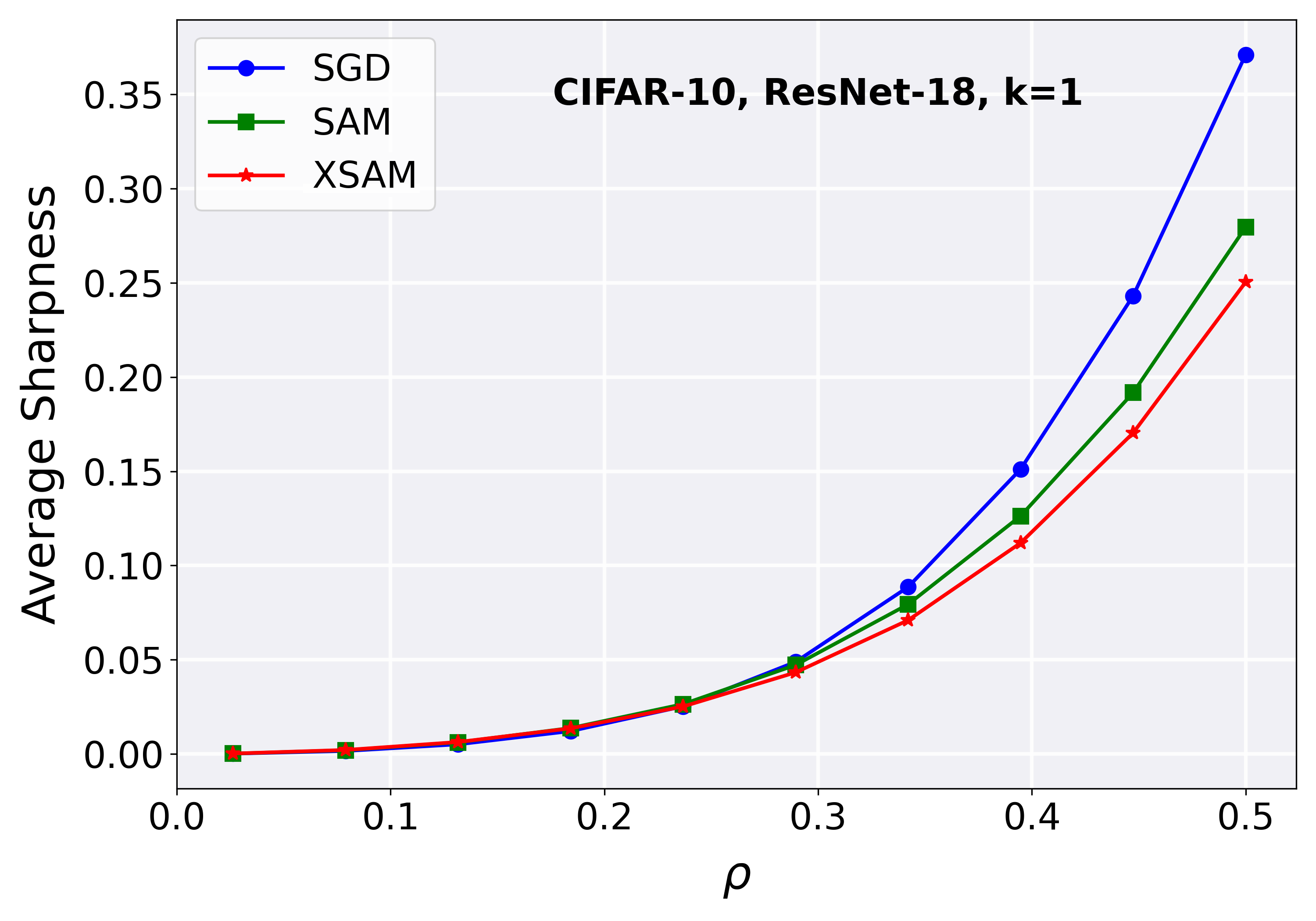}
        \vspace{-15pt}
        \caption{Filter-wise perturbation}
        \label{ffig:avg_sharpness_filter}
   \end{subfigure}
   \caption{Visualization of the average sharpness of the loss landscape at the convergence point.}
   \label{fig:sharpness}
\end{figure}

\section{Strategies for Gradient Scale} \label{sec:scale_computation}

Our default gradient scale strategy is using $\|g_k\|$ to match the scale with SAM. In this section, we empirically study a set of different ways for setting the gradient scale, which includes: typical choices like $\|g_k\|$ and $\|g_0\|$, simple extensions like $\sum^k_{i=0} \|g_i\| / (k + 1)$ and $\max^k_{i=0} \|g_i\|$. Besides, we further explored two slope-based strategies:
\begin{align*}
    \text{slope}_k &:= \frac{L(\vartheta_k) - L(\vartheta_0)}{\|\vartheta_k-\vartheta_0\|},\\
    \text{slope}_m &:= \frac{L(\vartheta_0 + v(\alpha) \cdot \rho_{m}) - L(\vartheta_0)}{\rho_{m}},
\end{align*}
which is the averaged slope from $\vartheta_0$ to $\vartheta_k$ and from $\vartheta_0$ to the approximated maximum, respectively. 

Note that since our direction is away from the approximated maximum, it can be an interesting combination when using the slope from $\vartheta_0$ to the approximated maximum as the gradient scale, which shares the same intrinsic core as stochastic gradient descent. However, it would require an extra forward pass to evaluate $L(\vartheta_0 + v(\alpha) \cdot \rho_{m})$.

\begin{figure}[!h]
    \begin{subfigure}[t]{0.45\textwidth}
        \centering
        \includegraphics[width=\textwidth]{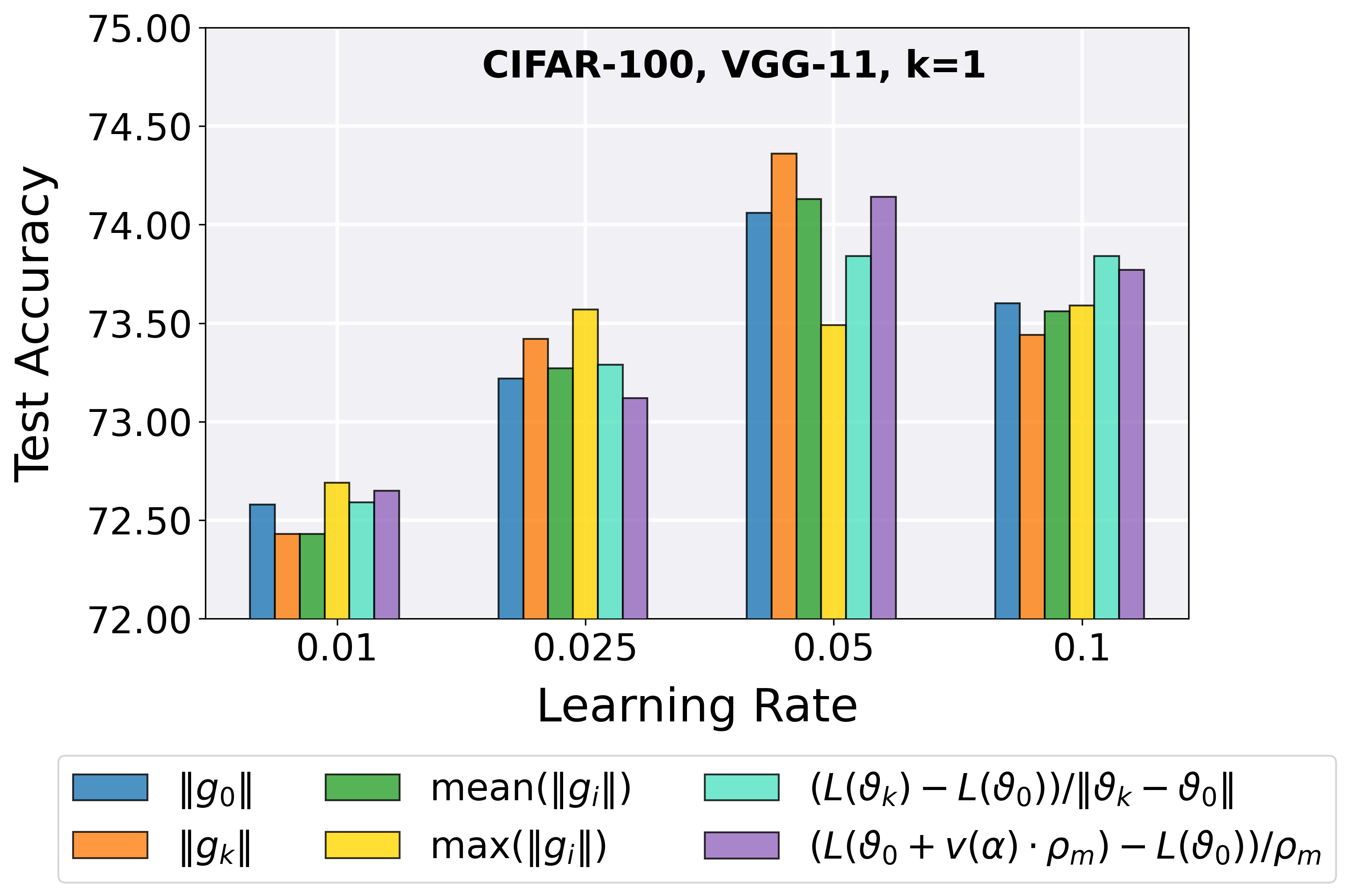}
        \vspace{-8pt}
        \caption{}
        \label{fig:scale_lr_vgg11}
    \end{subfigure}
    \hfill
    \begin{subfigure}[t]{0.45\textwidth}
        \centering
        \includegraphics[width=\textwidth]{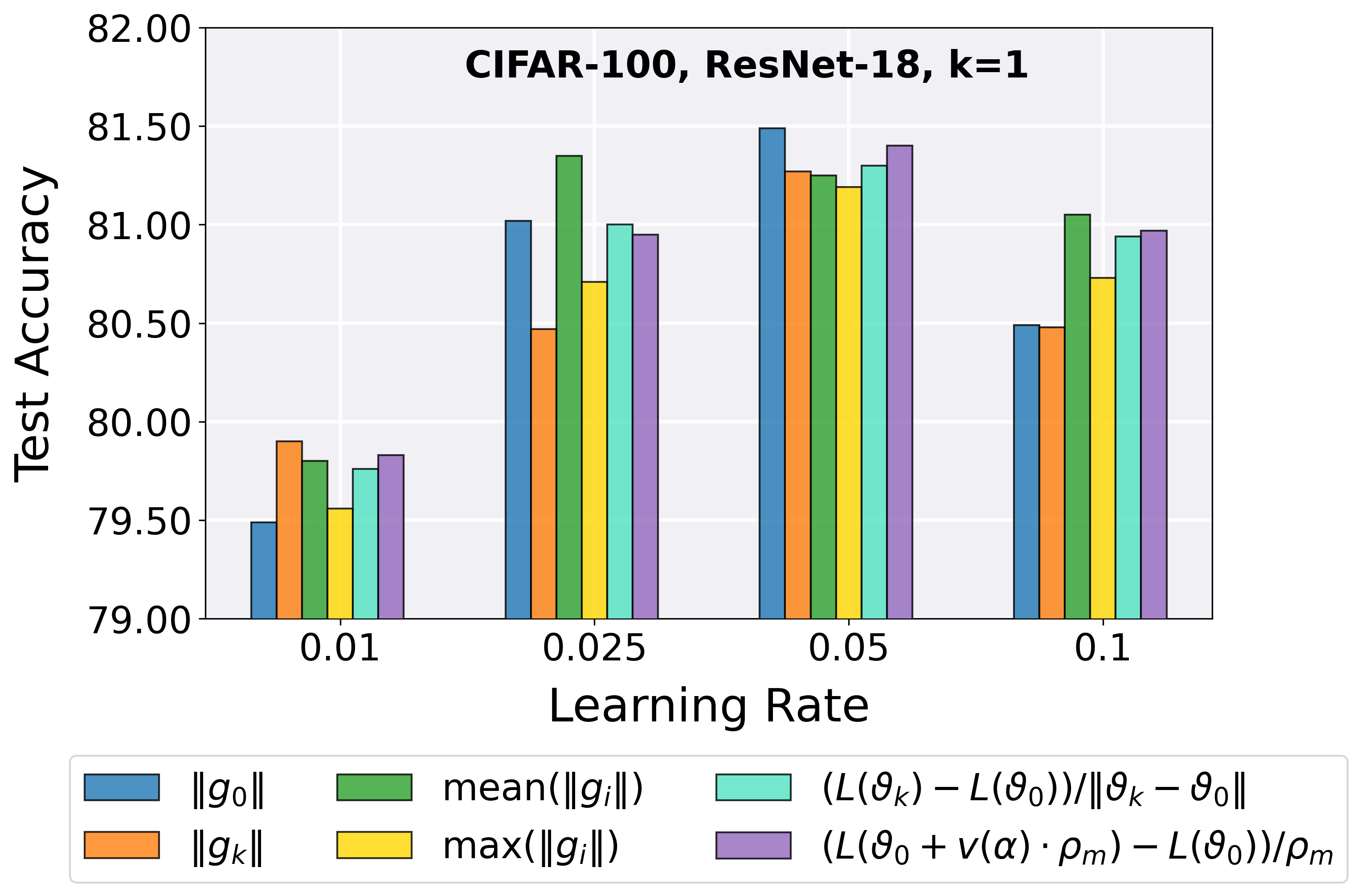}
        \vspace{-8pt}
        \caption{}
        \label{fig:scale_lr_resnet18}
    \end{subfigure}
    \caption{Comparison of various gradient scale strategies.}
    \label{fig:scale_strategies}
\end{figure}

The results are shown in Figure \ref{fig:scale_strategies}. As we can see, the gradient scale seems to be something that is even more mysterious than the gradient direction. It is hard to draw a direct conclusion on which might be the best choice among such a reasonably large group. Nevertheless, some choices appear to be good in most circumstances, which may include $\|g_0\|$, $\|g_k\|$, and \text{slope}$_m$. These primary results are included for completeness. Notably, the work \citep{tan2025stabilizing} argues that rescaling the gradient using $\|g_0\|$ is more stable than using $\|g_1\|$. In our experiments, however, we do not observe a noticeable stability advantage. We would leave further investigation into this as future work.

\section{Additional Related Work}
\label{sec:related_work_supplement}

The connection between flatness/sharpness and generalization was realized early on \citep{hochreiter1994simplifying} and further explored in subsequent works \citep{hochreiter1997flat, mcallester1999pac, neyshabur2017exploring, jiang2019fantastic}, motivating efforts toward finding flatter solutions. While SGD is believed to favor flat minima implicitly \citep{keskar2016large, ma2021linear}, more explicit methods are preferred and developed. Typical instances include Entropy-SGD \citep{chaudhari2017entropysgdbiasinggradientdescent} that employs entropy regularization, SWA \citep{izmailov2018averaging} that seeks flatness by averaging model parameters, and SAM \citep{foret2020sharpness} that optimizes sharpness.

There are some variants that focus on improving the performance of multi-step SAM. Vanilla multi-step SAM \citep{foret2020sharpness} updates the model using the gradient at the last step. MSAM \citep{kim2023exploring} suggests averaging all gradients except the first gradient at the original location. Lookbehind-SAM (LSAM) \citep{mordido2024lookbehindsamkstepsback} suggests another way that utilizes all gradients but excludes the first. In comparison, in multi-step settings, our method leverages all gradients ($\{g_i\}^{k-1}_{i=0}$ in $v_0$, and $g_k$ in $v_1$) in a dynamic interpolation manner and explicitly approximates the direction of the maximum.

There are also some works that seek to reduce the computational overhead of SAM. For instance, ESAM \citep{du2021efficient} achieves this via stochastic weight perturbation and sharpness-sensitive data selection. SSAM \citep{mi2022make} accelerates SAM with a sparse perturbation. LookSAM \citep{liu2022towards} reduces computational overhead by computing SAM’s gradient only periodically and relying on an approximate gradient for most of the training time. RST \citep{zhao2022randomized} and AE-SAM \citep{jiang2023adaptive} suggest alternating between SGD and SAM in randomized and adaptive ways, respectively. 

Another important line of research on SAM focuses on understanding its underlying mechanism. For instance, \citep{wen2023doessharpnessawareminimizationminimize} finds that the gradient of SAM aligns with the top eigenvector of the Hessian in the late phase of training. This phenomenon is also concurrently found by \citep{bartlett2023dynamics}. \citep{andriushchenko2023sharpness} argues that SAM leads to low-rank features. In addition, an interesting fact observed by \citep{andriushchenko2022towards} is that training with SAM only in the late phase of training can achieve an improvement similar to that of full training with SAM. A recent work \citep{zhou2025sharpnessawareminimizationefficientlyselects} further analyzes and theoretically shows the learning dynamics of applying SAM late in training. \citep{tahmasebi2024universal} introduces a universal class of sharpness measures, in which SAM, known for its bias toward minimizing the maximum eigenvalue of the Hessian matrix, can be regarded as a special case. Our work is orthogonal to these works, providing a new perspective for understanding a fundamental question of why applying the gradient from the ascent point to the current parameters is valid. At the same time, we propose XSAM as a better alternative. 

In addition, SAM achieves extraordinary performance on various tasks. For instance, it has proven particularly effective in long-tail learning \citep{rangwani2022escaping}. ImbSAM \citep{zhou2023imbsam} applies SAM only to the tail classes to improve their generalization. Further, CC-SAM \citep{gowda2024cc} generates class-specific perturbations for each class, although this comes at an increased computational cost. Focal-SAM \citep{li2025focal} aims to achieve fine-grained sharpness control for each class while maintaining efficiency. These SAM variants are specialized in long-tail learning and differ from our work.

\section{Use of Large Language Models}

We used a large language model (LLM) only for language polishing (grammar, wording, and clarity) of drafts written by the authors. The model did not generate research ideas, methods, analyses, results, or figures, and it did not write any sections from scratch.

\end{document}